\newtheorem{proposition}{Proposition}
\newtheorem{assumption}{Assumption}
\newtheorem{lemma}{Lemma}
\newtheorem{theorem}{Theorem}
\newtheorem{definition}{Definition}
\newif\ifupdate\updatefalse
\begin{document}
		%
		\title{Learning to Cut via Hierarchical Sequence/Set Model for Efficient Mixed-Integer Programming}
		%
		%
		%
		%
		
		\author{Jie Wang,~\IEEEmembership{Senior Member,~IEEE,} Zhihai Wang, Xijun Li, Yufei Kuang, \\ Zhihao Shi, Fangzhou Zhu, Mingxuan Yuan, Jia Zeng,~\IEEEmembership{Senior Member,~IEEE,} \\ Yongdong Zhang,~\IEEEmembership{Senior Member,~IEEE,} and Feng Wu,~\IEEEmembership{Fellow,~IEEE}
			\IEEEcompsocitemizethanks{
				\IEEEcompsocthanksitem J. Wang, Z. Wang, X. Li, Y. Kuang, Z. Shi, Y. Zhang, F. Wu are with: a) CAS Key Laboratory of Technology in GIPAS, University of Science and Technology of China, Hefei 230027, China; b) Institute of Artificial Intelligence, Hefei Comprehensive National Science Center, Hefei 230091, China. E-mail: jiewangx@ustc.edu.cn, \{zhwangx,yfkuang,zhihaoshi\}@mail.ustc.edu.cn, \{lihq,zhyd73,fengwu\}@ustc.edu.cn.
				\IEEEcompsocthanksitem X. Li, F. Zhu, M. Yuan, J. Zeng are with Huawei Noah's Ark Lab. E-mail: \{xijun.li,zhufangzhou,Yuan.Mingxuan,Zeng.Jia\}@huawei.com.
			}
			\thanks{Manuscript received April 19, 2005; revised August 26, 2015.}
		}
		
		%
		%

	\markboth{Journal of \LaTeX\ Class Files,~Vol.~14, No.~8, August~2015}%
	{Shell \MakeLowercase{\textit{et al.}}: Learning Cut Selection for Mixed Integer Programming via Hierarchical Sequence Model}
	%



	\IEEEtitleabstractindextext{%
		\begin{abstract}
			Cutting planes (cuts) play an important role in solving mixed-integer linear programs (MILPs), which formulate many important real-world applications. Cut selection heavily depends on \textbf{(P1)} which cuts to prefer and \textbf{(P2)} how many cuts to select. Although modern MILP solvers tackle \textbf{(P1)-(P2)} by human-designed heuristics, machine learning carries the potential to learn more effective heuristics. However, many existing learning-based methods learn which cuts to prefer, neglecting the importance of learning how many cuts to select. Moreover, we observe that \textbf{(P3)} what order of selected cuts to prefer significantly impacts the efficiency of MILP solvers as well. To address these challenges, we propose a novel \textbf{h}ierarchical s\textbf{e}quence/s\textbf{e}t \textbf{m}odel (HEM) to learn cut selection policies. Specifically, HEM is a bi-level model: (1) a higher-level module that learns how many cuts to select, (2) and a lower-level module---that formulates the cut selection as a sequence/set to sequence learning problem---to learn policies selecting an \textit{ordered subset} with the cardinality determined by the higher-level module. To the best of our knowledge, HEM is \textit{the first} data-driven methodology that well tackles \textbf{(P1)-(P3)} simultaneously. Experiments demonstrate that HEM significantly improves the efficiency of solving MILPs on eleven challenging MILP benchmarks, including two Huawei's real problems. 
		\end{abstract}
		
		\begin{IEEEkeywords}
			Mixed-Integer Linear Programming, Cut Selection, Deep Reinforcement Learning, Sequence/Set to Sequence Learning 
	\end{IEEEkeywords}}

	\maketitle

	\IEEEdisplaynontitleabstractindextext

	%
	\IEEEpeerreviewmaketitle

	\IEEEraisesectionheading{\section{Introduction}\label{sec:introduction}}
	
	\IEEEPARstart{M}ixed-integer linear programming (MILP)---which aims to optimize a linear objective subject to linear constraints and integer constraints (i.e., some or all of the variables are integer-valued)---is one of the most fundamental mathematical models \cite{mip_np_hard, scip_thesis}.  
	MILP is applicable to a wide range of important real-world applications, such as supply chain management \cite{supply_chain_management}, production planning \cite{production_planning}, scheduling \cite{production_scheduling}, 
	vehicle routing \cite{vehicle_routing}, 
	facility location \cite{facility_location}, 
	bin packing \cite{milp_google},
	etc. A standard MILP takes the form of
	\begin{align}
		z^* \triangleq \min_{\textbf{x}} \{ \textbf{c}^{\top} \textbf{x} | \textbf{A}\textbf{x} \leq \textbf{b},\textbf{x}\in \mathbb{R}^n, x_j\in \mathbb{Z} \,\, \text{for all } j \in \mathcal{I}  \}. \label{milp1}
	\end{align}
	Here $\textbf{c} \in \mathbb{R}^n$, $\textbf{A}\in \mathbb{R}^{m\times n}$, $\textbf{b} \in \mathbb{R}^{m}$, $x_j$ denotes the $j$-th entry of vector $\textbf{x}$, $\mathcal{I} \subseteq \{1,\dots,n\}$ denotes 
	the set of indices of integer variables, and $z^*$ denotes the optimal objective value of Problem \eqref{milp1}. However, it can be extremely hard to solve MILPs, as MILPs are $\mathcal{NP}$-hard problems \cite{mip_np_hard}.
	
	Many modern MILP solvers \cite{gurobi, scip8, xpress} solve MILPs by a branch-and-bound tree search algorithm \cite{branch_and_bound}. It builds a tree where it solves a linear programming (LP) relaxation of a MILP (Problem \eqref{milp1} or its subproblems) at each node. To improve the efficiency of the tree search algorithm, cutting planes (cuts) \cite{gomory_cuts} are introduced in the attempt to strengthen the LP relaxations \cite{scip_thesis, bengio_ml4co}.
	Existing work on cuts falls into two categories: cut generation and cut selection \cite{adaptive_cut_selection}. Cut generation aims to generate candidate cuts, i.e., valid linear inequalities that strengthen the LP relaxations \cite{scip_thesis}. Researchers have theoretically studied many families of both general-purpose and class specific cuts \cite{l2c_lookahead}. Nevertheless, it poses a computational challenge if adding all generated cuts to the LP relaxations \cite{implementing_cutting, theoretical_cuts}. To address this challenge, cut selection is a common idea to further improve the efficiency of solving MILPs by selecting a proper subset of the generated cuts \cite{implementing_cutting, theoretical_cuts, adaptive_cut_selection}. 
	In this paper, we focus on \textbf{the cut selection problem}, which plays an important role in terms of the efficiency of MILP solvers \cite{scip_thesis, tang_icml20, l2c_lookahead}.  
	
	Cut selection heavily depends on \textbf{(P1)} which cuts to prefer and \textbf{(P2)} how many cuts to select \cite{scip_thesis, theoretical_cuts}. To tackle \textbf{(P1)-(P2)}, many modern MILP solvers \cite{gurobi, scip8,xpress} employ manually-designed heuristics. However, it is difficult for hard-coded heuristics to take into account underlying patterns among MILPs collected from certain types of real-world applications, e.g., day-to-day production planning, bin packing, and vehicle routing problems \cite{pochet2006production, vehicle_routing, milp_google}.
	To enhance modern MILP solvers, recent methods \cite{tang_icml20, l2c_lookahead, adaptive_cut_selection, cut_ranking} propose to learn cut selection policies via machine learning, especially reinforcement learning. Specifically, many existing learning-based methods \cite{tang_icml20, l2c_lookahead, cut_ranking} learn a scoring function to measure cut quality and select a fixed ratio/number of cuts with high scores. They carry the potential to learn more effective heuristics, as they can effectively capture underlying patterns among MILPs collected from specific applications \cite{bengio_ml4co, tang_icml20, l2c_lookahead, adaptive_cut_selection}. 
	
	However, 
	the aforementioned learning-based methods
	suffer from two limitations. First, they learn which cuts to prefer by learning a scoring function, without learning how many cuts to select \cite{theoretical_cuts, implementing_cutting}.
	Moreover, we observe from extensive empirical results that \textbf{(P3)} what \textit{order of selected cuts} to prefer has a significant impact on the efficiency of solving MILPs as well (see Section \ref{sec:order}). Second, they do not take into account the interaction among cuts when learning which cuts to prefer, as they score each cut \textit{independently}. Consequently, they struggle to select cuts complementary to each other, which could severely degrade the efficiency of solving MILPs \cite{theoretical_cuts}. Indeed, we empirically show that they tend to select many similar cuts with high scores, while many selected cuts are possibly redundant and increase the computational burden (see Section \ref{sec:visu_diversity}). 
	
	To tackle the aforementioned problems, we propose a novel \textbf{h}ierarchical s\textbf{e}quence/s\textbf{e}t \textbf{m}odel (HEM) to learn cut selection policies via reinforcement learning (RL). To the best of our knowledge, HEM is \textit{the first} data-driven methodology
	that can well tackle \textbf{(P1)-(P3)} in cut selection simultaneously. Specifically, HEM is a bi-level model: (1) a higher-level module to learn the number of cuts that
	should be selected, (2) and a lower-level module to learn policies selecting an \textit{ordered subset} with the cardinality determined by the higher-level module. The lower-level module formulates the cut selection task as a novel sequence/set to sequence (Seq2Seq/Set2Seq) learning problem, which leads to two major advantages. 
	First, the Seq2Seq/Set2Seq model is popular in capturing the underlying order information \cite{order_matters}, which is critical for tackling \textbf{(P3)}.
	Second, the Seq2Seq/Set2Seq model can well capture the \textit{interaction} among cuts, as it models the \textit{joint} conditional probability of the selected cuts given an input sequence/set of the candidate cuts.
	As a result, experiments demonstrate that HEM significantly and consistently outperforms competitive baselines in terms of solving efficiency on three synthetic and eight challenging MILP benchmarks. The MILP problems include some benchmarks from MIPLIB 2017 \cite{miplibs_2017}, and large-scale real-world problems at Google and Huawei. 
	Therefore, the powerful performance of HEM demonstrates its promising potential for enhancing modern MILP solvers in real-world applications.
	Moreover, experiments demonstrate that HEM well generalizes to MILPs that are significantly larger than those in the training data. 
	
	We summarize our major contributions as follows. (1) We observe an important problem in cut selection, i.e., what \textit{order of selected cuts} to prefer 
	has a significant impact on the efficiency of solving MILPs (see Section \ref{sec:order}).
	(2) To the best of our knowledge, our proposed HEM is \textit{the first} data-driven methodology that is able to tackle \textbf{(P1)-(P3)} in cut selection simultaneously. (3) We formulate the cut selection task as a novel sequence/set to sequence learning problem, which captures not only the underlying order information, but also the interaction among cuts to select cuts complementary to each other. (4) Experiments demonstrate that HEM achieves significant and consistent improvements over competitive baselines on challenging MILP problems, including some benchmarks from MIPLIB 2017 and large-scale real-world problems at Google and Huawei.
	
	An earlier version of this paper has been published at
	ICLR 2023 \cite{hem_iclr}. This journal manuscript significantly
	extends the conference version by proposing an enhanced version of HEM, namely HEM++, which further improves HEM in terms of the formulation, policy, and training method, respectively. (1) Regarding the formulation, HEM++ extends the formulation to the multiple rounds setting---which is widely applicable in real-world scenarios---while HEM mainly focuses on training policies under the one round setting (see Section \ref{sec:formulation_multiple_rounds}). (2) Regarding the policy, HEM++ formulates the model as a \textit{set} to sequence model instead of a sequence to sequence (i.e., the model in HEM), which improves the representation of input cuts by removing redundant input-order information (see Section \ref{method:extension_set2seq}).
	(3) Regarding the training method, HEM++ proposes a hierarchical proximal policy optimization method, which further improves sample efficiency (see Section \ref{method:hppo}). Sample-efficient training is important for learning cut selection, especially under the multiple rounds setting, as the action space is large and increases exponentially with the number of cut separation rounds.
	Experiments demonstrate the superiority of HEM++ in Section \ref{exp:hem++}. In addition to HEM++, this paper 
	further deploys HEM to Huawei’s real production scenario to evaluate its effectiveness (see Sections \ref{method:order_heuristics} and \ref{exp:deployment}), and conducts visualization experiments to provide further insight into our method (see Section \ref{exp:visu_and_explain}).

	\section{Related work}
	
	The use of machine learning to improve the MILP solver performance has been an active topic of significant interest in recent years \cite{bengio_ml4co, lodi2017learning, nips21_ml4co_competition, nips19_gcnn, tang_icml20}. During the solving process of the solvers, many crucial decisions that significantly impact the solver performance are based on heuristics \cite{scip_thesis}. Recent methods propose to learn more effective heuristics via machine learning from MILPs collected from specific applications \cite{bengio_ml4co, ecole}. This line of research has shown significant improvement on the solver performance, including cut selection \cite{tang_icml20,l2c_lookahead, adaptive_cut_selection,baltean2019scoring}, variable selection \cite{Khalil_learn_to_branch,nips19_gcnn,pmlr-v80-balcan18a, Parameterizing_branch}, node selection \cite{learning_to_search, node_uct}, column generation \cite{morabit2021machine}, and primal heuristics selection \cite{khalil2017learning,hendel2019adaptive}. 
	
	In this paper, we focus on cut selection, which plays an important role in modern MILP solvers \cite{theoretical_cuts, implementing_cutting}. For cut selection, many existing learning-based methods \cite{tang_icml20, l2c_lookahead, cut_ranking} focus on learning which cuts should be preferred by learning a scoring function to measure cut quality. Specifically, \cite{tang_icml20} proposes a reinforcement learning approach to learn to select the best Gomory cut \cite{gomory_cuts}. Furthermore, \cite{l2c_lookahead} proposes to learn to select 
	a cut that yield the best dual bound improvement via imitation learning. Instead of selecting the best cut, \cite{cut_ranking} frames cut selection as multiple instance learning to learn a scoring function, and selects a fixed ratio of cuts with high scores. However, they suffer from two limitations as mentioned in Section \ref{sec:introduction}.
	In addition, \cite{adaptive_cut_selection} proposes to learn weightings of four expert-designed scoring rules. On the theoretical side, \cite{balcan2021sample, balcan2022structural} has provided some provable guarantees for learning cut selection policies. 
	

	\section{Background}
	
	\subsection{Cutting planes} 
	Given the MILP problem in (\ref{milp1}), we drop all its integer constraints to obtain its \textit{linear programming (LP) relaxation}, which takes the form of  
	\begin{align}\label{lp_relaxation}
		z_{\text{LP}}^* \triangleq \min_{\textbf{x}} \{\textbf{c}^{\top} \textbf{x} | \textbf{A}\textbf{x} \leq \textbf{b},\textbf{x}\in \mathbb{R}^n\}.
	\end{align}
	As the problem in (\ref{lp_relaxation}) expands the feasible set of the problem in (\ref{milp1}), we have $z_{\text{LP}}^* \leq z^*$. We denote any lower bound found via an LP relaxation by a \textit{dual bound}.
	Given the LP relaxation in (\ref{lp_relaxation}), cutting planes (cuts) are linear inequalities that are added to the LP relaxation 
	to tighten it without removing any integer feasible solutions of (\ref{milp1}). Cuts generated by MILP solvers are added in successive rounds. Specifically, each round $k$ involves (i) solving the current LP relaxation, (ii) generating a pool of candidate cuts $\mathcal{C}^k$, (iii) selecting a subset $\mathcal{S}^k\subseteq \mathcal{C}^k$, (iv) adding $\mathcal{S}^k$ to the current LP relaxation to obtain the next LP relaxation, (v) and proceeding to the next round.
	Adding all the generated cuts to the LP relaxation would maximally strengthen the LP relaxation and improve the lower bound at each round. However, adding too many cuts could lead to large models, which can increase the computational burden and present numerical instabilities \cite{implementing_cutting, l2c_lookahead}. Therefore, 
	cut selection is proposed to select a proper subset of the candidate cuts, which is significant for improving the efficiency of solving MILPs \cite{theoretical_cuts, tang_icml20}.
	
	\subsection{Branch-and-cut} 
	In modern MILP solvers, cutting planes are often combined with the branch-and-bound algorithm \cite{branch_and_bound}, which is known as the branch-and-cut algorithm 
	\cite{branch_and_cut}.
	Branch-and-bound techniques perform implicit enumeration by building a search tree, in which every node represents a subproblem of the original problem in (\ref{milp1}).
	The solving process begins by selecting a leaf node of the tree and solving its LP relaxation. Let $\textbf{x}^*$ be the optimal solution of the LP relaxation. If $\textbf{x}^*$ violates the original integrality constraints, two subproblems (child nodes) of the leaf node are created by \textit{branching}. Specifically, the leaf node is added with constraints $ x_i \leq \lfloor x_i^* \rfloor\,\,\text{and}\,\, x_i \geq \lceil x_i^* \rceil $, respectively, where $x_i$ denotes the $i$-th variable, $x_i^*$ denotes the $i$-th entry of vector $\textbf{x}^*$, and $\lfloor \rfloor$ and $\lceil \rceil$ denote the floor and ceil functions.
	In contrast, if $\textbf{x}^*$ is a (mixed-)integer solution of (\ref{milp1}), then we obtain an upper bound on the optimal objective value of (\ref{milp1}), which we denote by \textit{primal bound}.
	In modern MILP solvers, the addition of cutting planes is alternated with the \textit{branching} phase. That is, cuts are added at search tree nodes before branching to tighten their LP relaxations. 
	As strengthening the relaxation before starting to branch is decisive to ensure an efficient tree search \cite{implementing_cutting, bengio_ml4co}, we focus on adding cuts at the root node, which follows \cite{nips19_gcnn, l2c_lookahead}.
	
	\subsection{Primal-dual gap integral}
	We keep track of two important bounds when running branch-and-cut, i.e., the global primal and dual bounds, which are the best upper and lower bounds on the optimal objective value of Problem \eqref{milp1}, respectively.
	We define the \textit{primal-dual gap integral} (PD integral) by the \textit{area} between the curve of the solver's global primal bound and the curve of the solver's global dual bound. 
	The PD integral is a 
	widely used metric for evaluating solver performance \cite{nips21_ml4co_competition, cao2022ml4co}. For example, the primal-dual gal integral is used as a primary evaluation metric in the NeurIPS 2021 ML4CO competition \cite{nips21_ml4co_competition}. Moreover, We define the \textit{primal-dual gap} by the difference between the global primal and dual bounds.
	
	
	\begin{figure*}[t]
		\centering
		\begin{subfigure}{0.24\textwidth}
			\includegraphics[width=\textwidth,height=0.6\textwidth]{./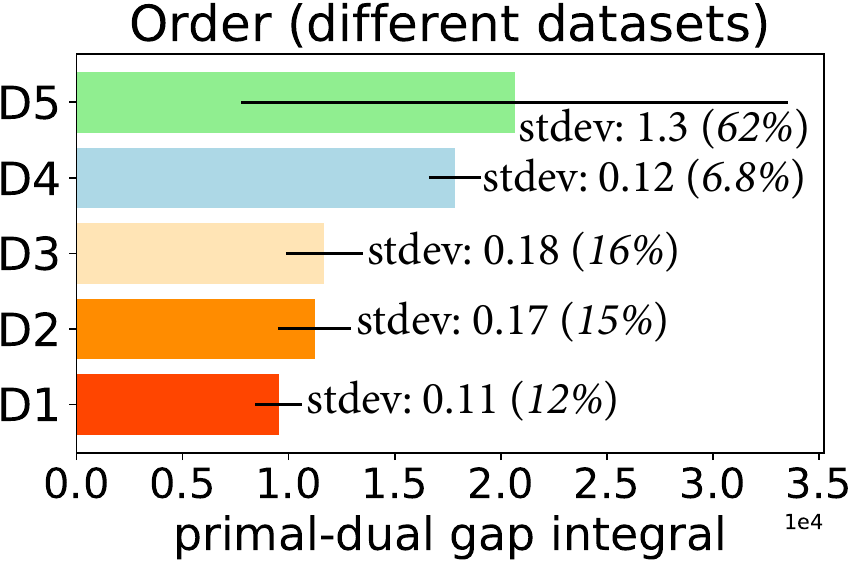}
			\caption{Evaluate RandomAll on five different datasets.}
			\label{fig:order_randomall}
		\end{subfigure}
		\begin{subfigure}{0.24\textwidth}
			\includegraphics[width=\textwidth,height=0.6\textwidth]{./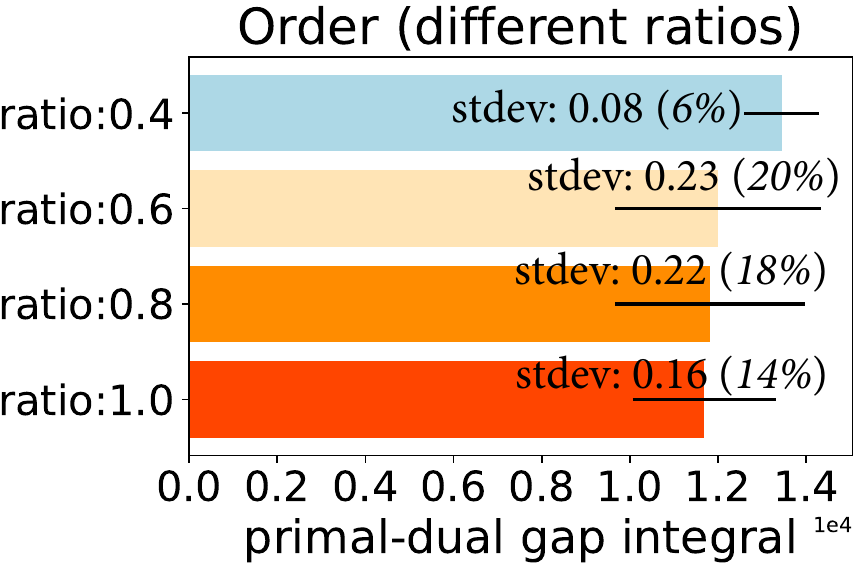}
			\caption{Evaluate RandomNv on MIPLIB mixed neos.}
			\label{fig:order_randomnv}
		\end{subfigure}
		\begin{subfigure}{0.24\textwidth}
			\includegraphics[width=\textwidth,height=0.6\textwidth]{./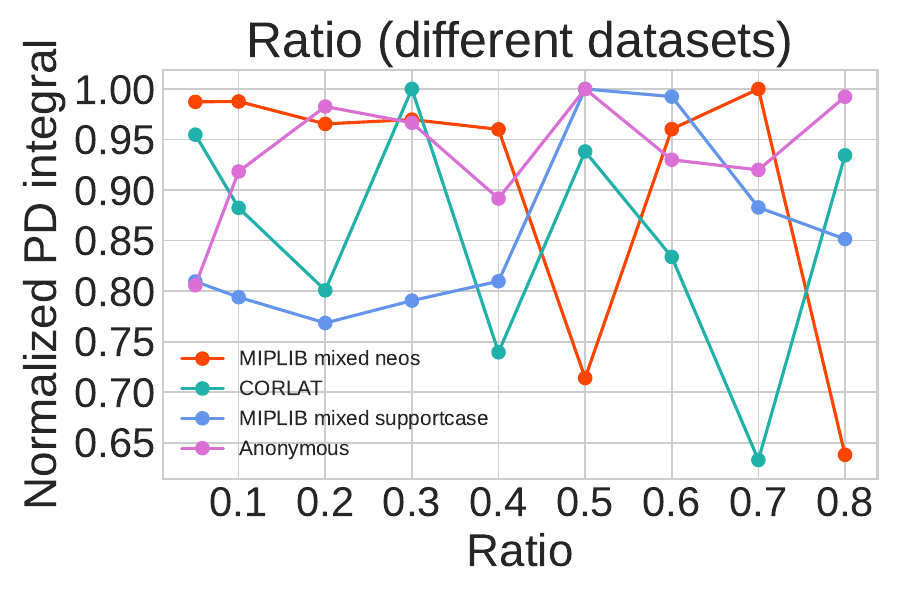}
			\caption{Evaluate NV with different ratios on four datasets.}
			\label{fig:size_diff_datasets}
		\end{subfigure}
		\begin{subfigure}{0.24\textwidth}
			\includegraphics[width=\textwidth,height=0.6\textwidth]{./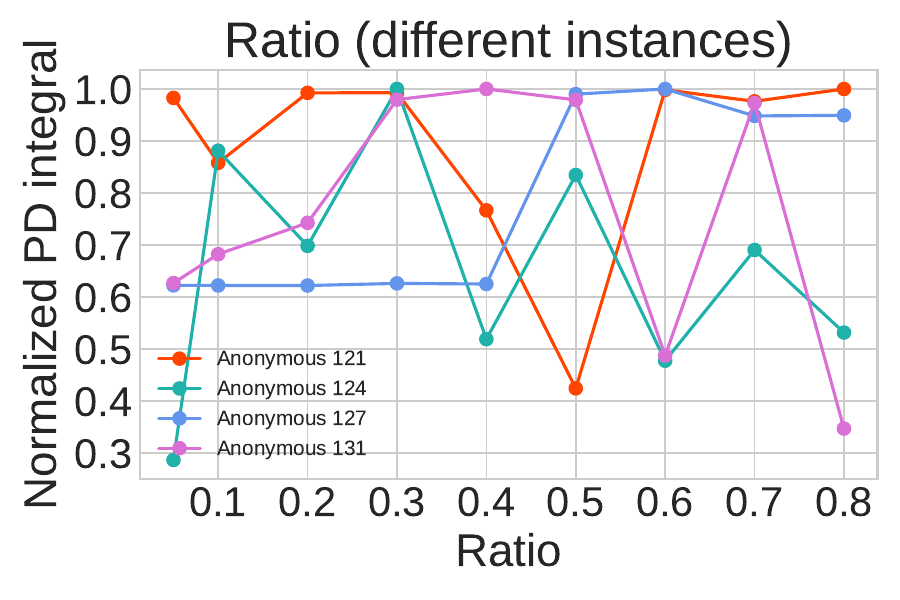}
			\caption{Evaluate NV with different ratios on Anonymous.}
			\label{fig:size_diff_instances}
		\end{subfigure}
		\caption{(a)-(b) We design two cut selection heuristics, namely RandomAll and RandomNV (see Section \ref{sec:order} for details), which both add the same subset of cuts in random order for a given MILP. The results in (a) and (b) show that adding the same selected cuts in different order leads to variable overall solver performance.
			(c)-(d) We use the Normalized Violation (NV) heuristics \cite{cut_ranking} in the following experiments. The results in (c) show that the performance of NV varies widely with the given ratios across different datasets. The results in (d) show that the performance of NV varies widely with the given ratios across different instances from the Anonymous dataset.
		}
		\label{fig:motivating_results_order_size}
	\end{figure*}
	
	\section{Motivating results}
	
	In this section, we first empirically show that
	the \textit{order of selected cuts}, i.e., the selected cuts are added to the LP relaxations in this \textit{order}, significantly impacts the efficiency of solving MILPs. Moreover, we then empirically show that the \textit{ratio of selected cuts} significantly impacts the efficiency of solving MILPs as well. Please see Appendix \ref{datasets_motivating} for details of the datasets used in this section.
	
	\subsection{Order matters}\label{sec:order}
	
	Previous work \cite{implementing_simplex, lp_textbook, lodi2013performance, Reformulate} has shown that the order of constraints for a given linear program (LP) significantly impacts its constructed initial basis, which is important for solving the LP. As a cut is a linear constraint, adding cuts to the LP relaxations is equivalent to adding constraints to the LP relaxations. Therefore, the order of added cuts could have a significant impact on solving the LP relaxations as well, thus being important for solving MILPs. Indeed, our empirical results show that this is the case.
	\textbf{(1)} We design a \textbf{RandomAll} cut selection rule,
	which randomly permutes all the candidate cuts, and adds all the cuts to the LP relaxations in the random order. 
	We evaluate RandomAll on five challenging datasets, namely D1, D2, D3, D4, and D5. 
	We use the SCIP 8.0.0 \cite{scip8} as the backend solver, and evaluate the solver performance by the average PD integral within a time limit of 300 seconds. We evaluate RandomAll on each dataset over ten random seeds, and each bar in Figure \ref{fig:order_randomall} shows the mean and standard deviation (stdev) of its performance on each dataset.
	As shown in Figure \ref{fig:order_randomall},
	the performance of RandomAll on each dataset varies widely with the order of selected cuts.
	\textbf{(2)} We further design a \textbf{RandomNV} cut selection rule. RandomNV is different from RandomAll in that it selects a given ratio of the candidate cuts rather than all the cuts. RandomNV first scores each cut using the Normalized Violation \cite{cut_ranking} and selects a given ratio of cuts with high scores. It then randomly permutes the selected cuts. Each bar in Figure \ref{fig:order_randomnv} shows the mean and stdev of the performance of RandomNV with a given ratio on the same dataset.
	Figures \ref{fig:order_randomall} and \ref{fig:order_randomnv} show that adding the same selected cuts in different order leads to variable solver performance, which demonstrates that the order of selected cuts is important for solving MILPs.
	
	\subsection{Ratio matters}\label{sec:ratio}
	We use the Normalized Violation (NV) \cite{cut_ranking} method in the following experiments. \textbf{(1)} We first evaluate the NV methods that select different ratios of candidate cuts on four datasets. Each line in Fig. \ref{fig:size_diff_datasets} shows the performance (normalized average PD integral) of NV with different given ratios of selected cuts on each dataset. The results in Fig. \ref{fig:size_diff_datasets} show that the performance of NV varies widely with the ratio of selected cuts. Moreover, the results demonstrate that the ratio that leads to better solver performance is variable across different datasets, suggesting that learning dataset-dependent ratios is important. 
	\textbf{(2)} We then evaluate the NV methods that select different ratios of candidate cuts on four instances from the Anonymous dataset. Each line in Fig. \ref{fig:size_diff_instances} shows the performance (normalized average PD integral) of NV with different given ratios of selected cuts on each instance. The results in Fig. \ref{fig:size_diff_instances} show that the performance of NV varies widely with the ratio of selected cuts. Moreover, the results demonstrate that the ratio that leads to better solver performance is variable across different instances, which suggests that learning instance-dependent ratios is important for cut selection as well. 
	
	\begin{figure}
		\centering
		\includegraphics[width=0.98\columnwidth]{./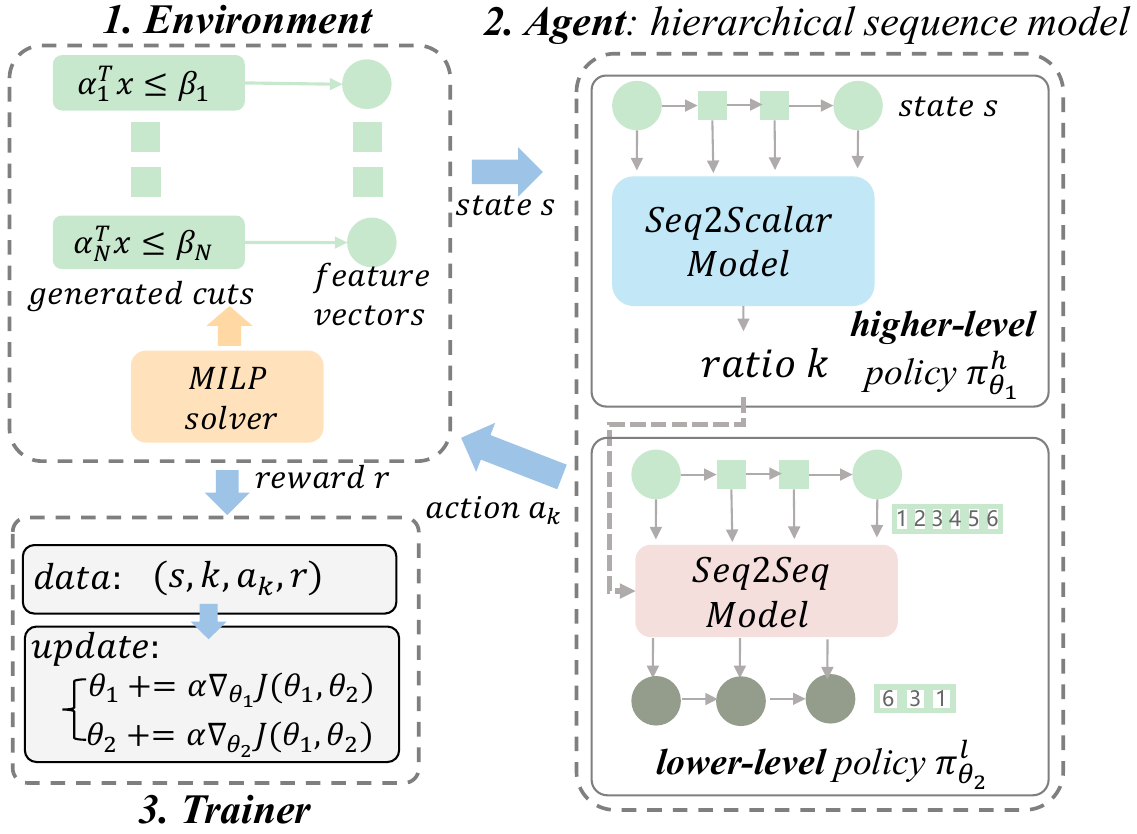}
		\caption{Illustration of our proposed RL framework for learning cut selection policies. We formulate a MILP solver as the environment and the
			HEM as the agent. Moreover, we train HEM via
			hierarchical policy gradients.}
		\label{fig:hem_illustration}
	\end{figure}
	
	\section{Learning Cut Selection via Hierarchical Sequence Model (HEM)}\label{method:main}
	
	For cut selection, the optimal subsets that should be selected are inaccessible, while one can assess the quality of selected subsets using a solver and provide the feedbacks to learning algorithms. Therefore, we leverage reinforcement learning (RL) to learn cut selection policies. 
	In this section, we provide a detailed description of our proposed RL framework for learning cut selection as shown in Fig \ref{fig:hem_illustration}. First, we present our formulation of the cut selection as a Markov decision process (MDP) \cite{rl_sutton} in Section \ref{sec:formulation}. Second, we present a detailed description of our proposed HEM in Section \ref{method:hem}. Finally, we derive a hierarchical policy gradient for training HEM under the one round setting in Section \ref{train:hpg}. 
	
	
	\subsection{Reinforcement Learning Formulation}\label{sec:formulation}
	
	As shown in Fig \ref{fig:hem_illustration}, we formulate a MILP solver as the environment and our proposed HEM as the agent. 
	We consider an MDP defined by the tuple
	$(\mathcal{S}, \mathcal{A}, r, f)$.
	Specifically, we specify the state space $\mathcal{S}$, the action space $\mathcal{A}$, the reward function $r:\mathcal{S} \times \mathcal{A} \to \mathbb{R}$, the transition function $f$, and the terminal state in the following. 
	\textbf{(1) The state space $\mathcal{S}$.}  
	Since the current LP relaxation and the generated cuts contain the core information for cut selection, we define a state $s$ by $(M_{\text{LP}}, \mathcal{C}, \textbf{x}_{\text{LP}}^*)$. Here $M_{\text{LP}}$ denotes the mathematical model of the current LP relaxation, $\mathcal{C}$ denotes the set of the candidate cuts, and $\textbf{x}_{\text{LP}}^*$ denotes the optimal solution of the LP relaxation. To encode the state information, we 
	follow \cite{scip_thesis, cut_ranking} to design thirteen features for each candidate cut based on the information of $(M_{\text{LP}}, \mathcal{C}, \textbf{x}_{\text{LP}}^*)$. That is, we actually represent a state $s$ by \textit{a sequence of thirteen-dimensional feature vectors}. We present details of the designed features in Appendix \ref{appendix_cut_features}. 
	\textbf{(2) The action space $\mathcal{A}$.}  
	To take into account the ratio and order of selected cuts, we define the action space by \textit{all the ordered subsets} of the candidate cuts $\mathcal{C}$. 
	It can be challenging to explore the action space efficiently, as the cardinality of the action space can be extremely large due to its combinatorial structure.
	\textbf{(3) The reward function $r$.} 
	To evaluate the impact of the added cuts on solving MILPs, we design the reward function by (i) measures collected at the end of solving LP relaxations such as the dual bound improvement, (ii) or end-of-run statistics, such as the solving time and the primal-dual gap integral. For the first, the reward $r(s,a)$ can be defined as the negative dual bound improvement at each step. For the second, the reward $r(s,a)$ can be defined as the negative solving time or primal-dual gap integral at each step. 
	\textbf{(4) The transition function $f$.} The transition function maps the current state $s$ and the action $a$ to the next state $s^{\prime}$, where $s^{\prime}$ represents the next LP relaxation generated by adding the selected cuts at the current LP relaxation. 
	\textbf{(5) The terminal state.}
	There is no standard and unified criterion to determine when to terminate the cut separation procedure \cite{l2c_lookahead}. Suppose we set the cut separation rounds as $T$, then the solver environment terminates the cut separation after $T$ rounds. Under the multiple rounds setting (i.e., $T>1$), we formulate the cut selection as a Markov decision process. Under the one round setting (i.e., $T=1$), the formulation can be simplified as a contextual bandit \cite{rl_sutton}.
	
	\subsection{Hierarchical Sequence Model}\label{method:hem}
	\subsubsection{Motivation}
	Let $\pi$ denote the cut selection policy $\pi: \mathcal{S} \to \mathcal{P}(\mathcal{A})$, where $\mathcal{P}(\mathcal{A})$ denotes the probability distribution over the action space, and $\pi(\cdot|s)$ denotes the probability distribution over the action space given the state $s$. We emphasize that learning such policies can tackle \textbf{(P1)-(P3)} in cut selection simultaneously. 
	However, directly learning such policies is challenging for the following reasons. First, it is challenging to explore the action space efficiently, as the cardinality of the action space can be extremely large due to its combinatorial structure. Second, the length and max length of actions (i.e., ordered subsets) are variable across different MILPs. 
	However, traditional RL usually deals with problems whose actions have a fixed length. 
	Instead of directly learning the aforementioned policy, many existing learning-based methods \cite{tang_icml20, cut_ranking, l2c_lookahead} learn a scoring function that outputs a score given a cut, and select a fixed ratio/number of cuts with high scores. However, they suffer from two limitations as mentioned in Section \ref{sec:introduction}.
	
	\subsubsection{Policy network architecture}
	To tackle the aforementioned problems, we propose a novel hierarchical sequence model (HEM) to learn cut selection policies. 
	To promote efficient exploration, HEM leverages the hierarchical structure of the cut selection task to decompose the policy into two sub-policies, i.e., a higher-level policy $\pi^h$ and a lower-level policy $\pi^l$. The policy network architecture of HEM is also illustrated in Figure \ref{fig:hem_illustration}. 
	
	First, we formulate the higher-level policy as a Sequence to Scalar (Seq2Scalar) model, which learns the number of cuts that should be selected by predicting a proper ratio. Suppose the length of the state is $N$ and the predicted ratio is $k$, then the predicted number of cuts that should be selected is $\lfloor N*k \rfloor$, where $\lfloor \cdot \rfloor$ denotes the floor function.
	For exploration and differentiability of the policy, we define the higher-level policy by a stochastic policy, which is commonly used in RL \cite{rl_sutton}. Specifically, we define the higher-level policy by $\pi^h: \mathcal{S} \to \mathcal{P}(\left[0,1\right])$, where  $\pi^h(\cdot|s)$ denotes the probability distribution over $\left[0,1\right]$ given the state $s$.  
	
	Second, we formulate the lower-level policy as a Sequence to Sequence (Seq2Seq) model, which learns to select an ordered subset with the cardinality determined by the higher-level policy. 
	Specifically, we define the lower-level policy by $\pi^l: \mathcal{S} \times \left[0,1\right] \to \mathcal{P}(\mathcal{A})$, where $\pi^l(\cdot|s,k)$ denotes the probability distribution over the action space given the state $s$ and the ratio $k$. To the best of our knowledge, we are the first to formulate the cut selection task as a sequence to sequence learning problem. This leads to two major advantages: (1) capturing the underlying order information, (2) and the interaction among cuts. 
	
	
	
	Finally, we derive the cut selection policy via the law of total probability, i.e.,
	$\pi(a_k|s) = \mathbb{E}_{k\sim \pi^h(\cdot|s)}[\pi^l(a_k|s,k)],$
	where $k$ denotes the given ratio and $a_k$ denotes the action. The policy is computed by an expectation, as $a_k$ cannot determine the ratio $k$. For example, suppose that $N=100$ and the length of $a_k$ is $10$, then the ratio $k$ can be any number in the interval $[0.1,0.11)$. Actually, we sample an action from the policy $\pi$ by first sampling a ratio $k$ from $\pi^h$ and then sampling an action from $\pi^l$ given the ratio.
	
	\begin{figure}
		\centering
		\includegraphics[width=0.99\columnwidth]{./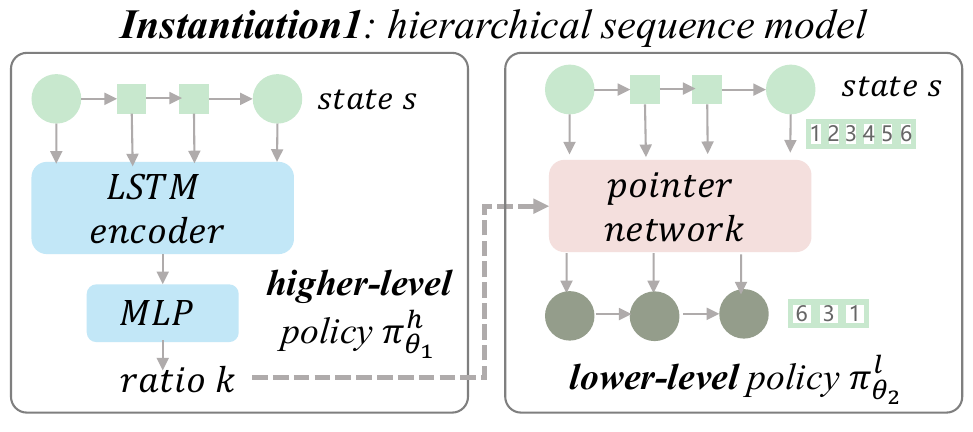}
		\caption{The instantiation of HEM.}
		\label{fig:hem_model_illustration1}
	\end{figure}
	
	\subsubsection{Instantiation of the policy network}
	For the higher-level policy, we first model the higher-level policy as a tanh-Gaussian, i.e., a Gaussian distribution with an invertible squashing function ($\tanh$), which is commonly used in deep reinforcement learning \cite{ppo,sac}. The mean and variance of the Gaussian are given by neural networks. 
	The support of the tanh-Gaussian is $[-1,1]$, but a ratio of selected cuts should belong to $[0,1]$. Thus, we further perform a linear transformation on the tanh-Gaussian. Specifically, we define the parameterized higher-level policy by 
	$\pi^{h}_{\theta_1}(\cdot|s) = 0.5 * \tanh{(K)} + 0.5$, where $K\sim \mathcal{N}(\mu_{\theta_1}(s), \sigma_{\theta_1}(s))$.
	Since the sequence lengths of states are variable across different instances (MILPs), we use a long-short term memory (LSTM) 
	\cite{lstm} network to embed the sequence of candidate cuts. We then use a multi-layer perceptron (MLP) \cite{deeplearning} to predict the mean and variance from the last hidden state of the LSTM. 
	
	For the lower-level policy, we formulate it as a Seq2Seq model. That is, its input is a sequence of candidate cuts, and its output is the probability distribution over ordered subsets of candidate cuts with the cardinality determined by the higher-level policy. Specifically, given a state action pair $(s,k,a_k)$, the Seq2Seq model computes the conditional probability $\pi^l_{\theta_2}(a_k|s,k)$ using a parametric model to estimate the terms of the probability chain rule, i.e.,
	$\pi^{l}_{\theta_2}(a_k|s,k) = \prod_{i=1}^{m} p_{\theta_2}(a_k^i|a_k^1,\dots,a_k^{i-1},s,k)$.
	Here $s=\{s^1,\dots,s^N\}$ is the input sequence, $m=\lfloor N*k \rfloor$ is the length of the output sequence, and $a_k=\{a_k^1,\dots,a_k^m\}$ is a sequence of $m$ indices, each corresponding a position in the input sequence $s$. Such policy can be parametrized by the vanilla Seq2Seq model commonly used in machine translation \cite{seq2seq, transformer}. However, the vanilla Seq2Seq model is only applicable to learning on a single instance, as the number of candidate cuts varies on different instances. To generalize across different instances, we use a pointer network \cite{pn, neural_combinatorial}---which uses attention as a pointer to select a member of the input sequence as the output at each decoder step---to parametrize $\pi^l_{\theta_2}$ (see Appendix \ref{appendix_pn_details} for details). 
	
	
	\subsection{Training: hierarchical policy gradient}\label{train:hpg}
	
	For the cut selection task, we aim to find $\theta$ that maximizes the expected reward over all trajectories
	\begin{align}\label{eq:obj}
		J(\theta) = \mathbb{E}_{s\sim \mu, a_k \sim \pi_{\theta}(\cdot|s)}[r(s,a_k)],
	\end{align}
	where $\theta=
	\left[\theta_1,\theta_2\right]$ with $\left[\theta_1,\theta_2\right]$ denoting the concatenation of the two vectors, $\pi_{\theta}(a_k|s) = \mathbb{E}_{k\sim \pi^h_{\theta_1}(\cdot|s)}[\pi^l_{\theta_2}(a_k|s,k)]$, and $\mu$ denotes the initial state distribution. \textit{Here we focus on training cut selection policies under the one round setting.} We defer discussion on training policies under the multiple rounds setting to Section \ref{Method:extension}. 
	To train the policy with a hierarchical structure, we derive a hierarchical policy gradient following the well-known policy gradient theorem \cite{pg_theorem,rl_sutton}.
	\begin{proposition}\label{proof_hpg}
		Given the cut selection policy $\pi_{\theta}(a_k|s) = \mathbb{E}_{k\sim \pi^h_{\theta_1}(\cdot|s)}[\pi^l_{\theta_2}(a_k|s,k)]$ and the training objective (\ref{eq:obj}), the hierarchical policy gradient takes the form of 
		\begin{align*}
			& \nabla_{\theta_1}J(\left[\theta_1,\theta_2\right]) \\
			& = \mathbb{E}_{s\sim\mu, k\sim \pi^h_{\theta_1}(\cdot|s)} [\nabla_{\theta_1} \log(\pi^h_{\theta_1}(k|s)) \mathbb{E}_{a_k\sim \pi^l_{\theta2}(\cdot|s,k)}[r(s,a_k)] ], \\
			& \nabla_{\theta_2}J(\left[\theta_1,\theta_2\right]) \\
			& = \mathbb{E}_{s\sim\mu, k\sim \pi^h_{\theta_1}(\cdot|s), a_k\sim \pi^l_{\theta_2}(\cdot|s,k)} [\nabla_{\theta_2}\log \pi^l_{\theta_2}(a_k|s,k) r(s,a_k)].
		\end{align*}
	\end{proposition}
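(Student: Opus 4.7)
The plan is to prove the proposition by directly applying the log-derivative (REINFORCE) trick separately to each parameter block, after first rewriting the expectation under $\pi_\theta$ as a joint expectation over $(k, a_k)$ using the hierarchical decomposition. Since $J(\theta)$ involves a non-sequential reward (contextual bandit, i.e., the one-round setting), we do not need the full machinery of the policy gradient theorem with discounted state distributions — only the single-step version.

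First, I would rewrite the objective explicitly as an integral. Using the definition $\pi_\theta(a_k|s) = \mathbb{E}_{k\sim \pi^h_{\theta_1}(\cdot|s)}[\pi^l_{\theta_2}(a_k|s,k)]$, we have
\begin{align*}
J(\theta) = \int \mu(s) \int \int \pi^h_{\theta_1}(k|s)\, \pi^l_{\theta_2}(a_k|s,k)\, r(s,a_k)\, da_k\, dk\, ds.
\end{align*}
Because the parameters $\theta_1$ and $\theta_2$ enter only through $\pi^h_{\theta_1}$ and $\pi^l_{\theta_2}$ respectively, and because the reward $r$ does not depend on $\theta$, I would exchange the gradient with the integrals (standard dominated-convergence / regularity assumptions on the parameterized densities, which are satisfied by the tanh-Gaussian and pointer-network parameterizations).

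For $\nabla_{\theta_1} J$, only $\pi^h_{\theta_1}(k|s)$ depends on $\theta_1$, so I would apply the identity $\nabla_{\theta_1}\pi^h_{\theta_1}(k|s) = \pi^h_{\theta_1}(k|s)\,\nabla_{\theta_1}\log \pi^h_{\theta_1}(k|s)$ and then fold the inner $a_k$-integral back into an expectation, obtaining
\begin{align*}
\nabla_{\theta_1} J = \mathbb{E}_{s\sim\mu,\, k\sim \pi^h_{\theta_1}(\cdot|s)} \Bigl[ \nabla_{\theta_1} \log \pi^h_{\theta_1}(k|s)\, \mathbb{E}_{a_k\sim \pi^l_{\theta_2}(\cdot|s,k)}[r(s,a_k)] \Bigr].
\end{align*}
For $\nabla_{\theta_2} J$, only $\pi^l_{\theta_2}(a_k|s,k)$ depends on $\theta_2$; the same log-derivative trick gives a joint expectation over $(s,k,a_k)$, which is exactly the second claim.

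The main subtlety — really the only one — is the interchange of differentiation and integration when $k$ is continuous and $a_k$ lies in a combinatorially large, variable-size action space. I would justify this by noting that $\pi^h_{\theta_1}$ is a smooth reparameterizable density (tanh-squashed Gaussian) with finite support and that $\pi^l_{\theta_2}(a_k|s,k)$ is a finite product of softmax probabilities (from the pointer network) whose gradients are uniformly bounded on compact parameter sets, while $r(s,a_k)$ is bounded by construction (e.g., PD integral on a finite time window). These conditions suffice to invoke the dominated convergence theorem and legitimize the swap. No further technical obstacle arises, since unlike the full MDP policy gradient theorem there is no state-distribution dependence on $\theta$ to worry about in the one-round formulation.
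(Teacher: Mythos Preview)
Your proposal is correct and follows essentially the same route as the paper: expand $J(\theta)$ as a joint integral over $(s,k,a_k)$ using the hierarchical factorization, differentiate under the integral sign, and apply the score-function (log-derivative) identity to whichever of $\pi^h_{\theta_1}$ or $\pi^l_{\theta_2}$ carries the parameter being differentiated. Your explicit remark that the one-round (contextual-bandit) setting avoids any $\theta$-dependence in the state distribution, together with your justification of the gradient--integral interchange via boundedness of $r$ and smoothness of the tanh-Gaussian and softmax factors, is if anything slightly more careful than what the paper spells out.
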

	We provide detailed proof in Appendix \ref{appendix_proof_hpg_one_round}. We use the derived hierarchical policy gradient to update the parameters of the higher-level and lower-level policies. We implement the training algorithm in a parallel manner that is closely related to the asynchronous advantage actor-critic (A3C) \cite{a3c}. Furthermore, we summarize the procedure of the training algorithm in Algorithm \ref{alg:hem1}.
	
	
	
	\begin{algorithm}[t]
		\caption{Pseudo code for training HEM}.
		\label{alg:hem1}
		\begin{algorithmic}[1]
			\STATE \textbf{Initialize} Hierarchical sequence model $\pi_{[\theta_1,\theta_2]}$, MILP instances $\mathcal{D}$, training dataset $\mathcal{D}_{\text{train}}$, batch size $N_b$, training epochs $N_e$, policy learning rate $\alpha$
			\FOR{$N_e$ epochs}
			\STATE Empty the training dataset $\mathcal{D}_{\text{train}}$
			\FOR{$N_b$ steps}
			\STATE Randomly sample a MILP $s_0$ from $\mathcal{D}$
			\STATE Take action $k$ and $a_k$ at state $s_0$ with the policy $\pi$
			\STATE Receive reward $r$ and add $(s_0,k,a_k,r)$ to  $\mathcal{D}_{\text{train}}$
			\ENDFOR
			\STATE Compute hierarchical policy gradient using $\mathcal{D}_{\text{train}}$ as in proposition \ref{proof_hpg}
			\STATE Update the parameters, $\theta_1 = \theta_1 + \alpha \nabla_{\theta_1}J(\left[\theta_1,\theta_2\right])$, $\theta_2 = \theta_2 + \alpha \nabla_{\theta_2}J(\left[\theta_1,\theta_2\right])$
			\ENDFOR
		\end{algorithmic}
	\end{algorithm}
	
	\section{HEM++: Further Improvements of HEM}\label{Method:extension}
	To further enhance HEM, we propose its improvements in terms of the formulation, policy, and training method, respectively. We denote the enhanced method of HEM by HEM++. We present the details as follows. 
	
	\subsection{RL formulation}\label{sec:formulation_multiple_rounds}
	
	In this part, we extend the RL formulation mentioned in Section \ref{sec:formulation} to the multiple rounds setting by redefining the action space and the hierarchical cut selection policy. The details are as follows. Under the multiple rounds setting, we formulate the cut selection as a Markov decision process. However, this entails the challenge of computing the hierarchical policy gradient, as the hierarchical cut selection policy is defined by an expectation. To address this challenge, we redefine the action space $\mathcal{A}^{\text{ext}}$ by $[0,1] \times \mathcal{A}$, and the hierarchical cut selection policy by 
	$\pi(k,a_k|s) = \pi^h(k|s)\cdot\pi^l(a_k|s,k)$. We present the derivation of the hierarchical policy gradient under the multiple rounds setting as follows. 
	
	Under the multiple rounds setting, we aim to find $\theta$ that maximizes the expected discounted cumulative rewards
	\begin{align}\label{eq:obj_multi_rounds}
		J(\theta) = \mathbb{E}_{s_0,k_0,a_{0}^k\dots}[\sum_{t=0}^{\infty} \gamma^t r(s_t,k_t,a_{t}^k)],
	\end{align}
	where $s_0\sim \mu$, $(k_t,a_t^k)\sim \pi_{\theta}(\cdot|s_t)$, $s_{t+1}=f(s_t,a_t)$, $\pi_{\theta}(k_t,a_t^k|s_t) = \pi^h_{\theta_1}(k_t|s_t) \cdot \pi^l_{\theta_2}(a_t^k|s_t,k_t)$, and $\gamma\in (0,1]$ is the discount factor. Then we derive a hierarchical policy gradient under the multiple rounds setting as follows. 
	\begin{proposition}\label{proof_hpg_multi_rounds}
		Given the cut selection policy $\pi_{\theta}(k,a^k|s) = \pi^h_{\theta_1}(k|s) \cdot \pi^l_{\theta_2}(a^k|s,k)$ and the training objective (\ref{eq:obj_multi_rounds}), the hierarchical policy gradient takes the form of 
		\begin{align*}
			& \nabla_{\theta_1}J(\left[\theta_1,\theta_2\right]) \\
			& = \mathbb{E}_{s\sim\rho^{\pi_{\theta}}, (k,a^k)\sim \pi_{\theta}(\cdot|s) } [Q^{\pi_{\theta}} (s,(k,a^k)) \nabla_{\theta_1} \log(\pi^h_{\theta_1}(k|s))], \\
			& \nabla_{\theta_2}J(\left[\theta_1,\theta_2\right]) \\
			& = \mathbb{E}_{s\sim\rho^{\pi_{\theta}}, (k,a^k)\sim \pi_{\theta}(\cdot|s) } [Q^{\pi_{\theta}} (s,(k,a^k)) \nabla_{\theta_2} \log(\pi^l_{\theta_2}(a^k|s,k))],
		\end{align*}
	\end{proposition}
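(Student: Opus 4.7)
The plan is to apply the standard policy gradient theorem to the joint hierarchical policy on the extended action space $\mathcal{A}^{\text{ext}} = [0,1]\times \mathcal{A}$, and then exploit the factorization $\pi_\theta(k,a^k|s) = \pi^h_{\theta_1}(k|s)\cdot\pi^l_{\theta_2}(a^k|s,k)$ to split the gradient into two disjoint parameter blocks. The key observation is that, on $\mathcal{A}^{\text{ext}}$, the pair $(k,a^k)$ behaves as a single action $\tilde{a}$, so the whole system reduces to an ordinary discounted MDP and the usual machinery applies without modification.

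First, I would treat $\tilde{a}=(k,a^k)$ as a single composite action drawn from $\pi_\theta(\tilde{a}|s)$, so that (\ref{eq:obj_multi_rounds}) takes the canonical form $J(\theta)=\mathbb{E}_{s_0\sim\mu}[V^{\pi_\theta}(s_0)]$ with $V^{\pi_\theta}$ and $Q^{\pi_\theta}$ being the standard value and action-value functions under the policy $\pi_\theta$. Applying the policy gradient theorem of Sutton et al.\ \cite{pg_theorem,rl_sutton} then yields
\begin{align*}
\nabla_{\theta} J(\theta) = \mathbb{E}_{s\sim\rho^{\pi_\theta},\,\tilde{a}\sim\pi_\theta(\cdot|s)}\bigl[Q^{\pi_\theta}(s,\tilde{a})\,\nabla_{\theta}\log\pi_\theta(\tilde{a}|s)\bigr],
\end{align*}
where $\rho^{\pi_\theta}$ is the discounted state-visitation distribution induced by $\pi_\theta$.

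Second, I would exploit the product factorization to decompose the log-probability as $\log\pi_\theta(k,a^k|s) = \log\pi^h_{\theta_1}(k|s) + \log\pi^l_{\theta_2}(a^k|s,k)$. Because $\pi^h_{\theta_1}$ does not depend on $\theta_2$ and $\pi^l_{\theta_2}$ does not depend on $\theta_1$, the partial gradients separate cleanly: $\nabla_{\theta_1}\log\pi_\theta(k,a^k|s) = \nabla_{\theta_1}\log\pi^h_{\theta_1}(k|s)$ and $\nabla_{\theta_2}\log\pi_\theta(k,a^k|s) = \nabla_{\theta_2}\log\pi^l_{\theta_2}(a^k|s,k)$. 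Substituting these two identities back into the joint policy gradient expression gives precisely the two claimed formulas.

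The derivation itself is routine given the reduction above, so the main obstacle is not analytical depth but rather justifying the underlying regularity: one must verify that $\pi^h_{\theta_1}$ and $\pi^l_{\theta_2}$ are differentiable in their parameters with integrable score functions, so that differentiation and expectation may be interchanged when invoking the policy gradient theorem. For HEM++ this is immediate, since the higher-level policy is a linearly transformed tanh-Gaussian with smooth mean/variance networks, and the lower-level pointer-network policy is a composition of smooth softmax layers. A secondary point worth highlighting in the proof is the contrast with Proposition \ref{proof_hpg}: in the one-round (contextual bandit) case the single-step reward $r(s,a_k)$ appears directly, whereas in the multi-round case long-term returns must be summarized by $Q^{\pi_\theta}(s,(k,a^k))$, which is the only structural change relative to the earlier result.
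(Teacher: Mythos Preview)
Your proposal is correct and follows essentially the same route as the paper: treat $(k,a^k)$ as a single action on $\mathcal{A}^{\text{ext}}$, apply the policy gradient theorem, and then use the factorization $\log\pi_\theta=\log\pi^h_{\theta_1}+\log\pi^l_{\theta_2}$ to separate the two parameter blocks. The only cosmetic difference is that the paper writes out the recursive unrolling of $\nabla_{\theta_i}V^{\pi_\theta}$ explicitly (product rule on $\pi_\theta Q^{\pi_\theta}$, then expand $\nabla_{\theta_i}Q^{\pi_\theta}=\gamma\,\nabla_{\theta_i}V^{\pi_\theta}(s')$ and iterate) rather than invoking the theorem as a black box, but the substance is identical.
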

	\noindent where $\rho^{\pi_{\theta}}(s) = \sum_{t=0}^{\infty} \gamma^t P(s_t=s|\pi_{\theta})$ denotes the expected state visitation frequencies under the policy $\pi_{\theta}$, and $Q^{\pi_{\theta}}(s,(k,a^k)) = \mathbb{E}_{s_0,k_0,a_0^k,\dots}[\sum_{t=0}^{\infty} r(s_t,k_t,a_t^k) | s_0=s, k_0=k, a_0^k=a^k]$ denotes the Q-value function. Due to limited space, we defer detailed derivations to Appendix \ref{appendix_proof_hpg_multi_rounds}.  
	
	\begin{figure}
		\centering
		\includegraphics[width=0.99\columnwidth]{./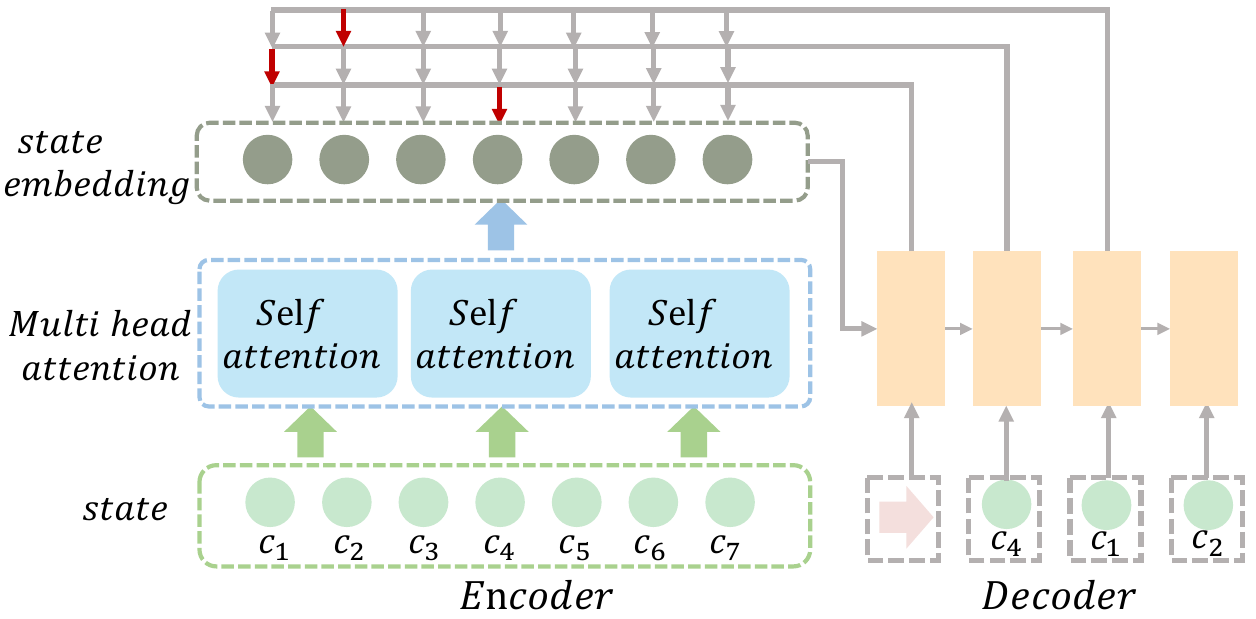}
		\caption{We instantiate the Set2Seq model via a multi-head attention encoder and a pointer decoder.}
		\label{fig:hem_model_illustration2}
	\end{figure}
	
	\subsection{Policy network: Set to Sequence model}\label{method:extension_set2seq}
	
	In this part, we formulate the lower-level model of HEM++ as a Set to Sequence (Set2Seq) model to improve the representation of input cuts by learning their \textit{order-independent} embeddings. 
	The details are as follows. 
	As shown in Fig \ref{fig:hem_model_illustration1}, we instantiate the lower-level model by a pointer network \cite{pn} (Seq2Seq model). The pointer network uses an LSTM encoder, which captures redundant order information when encoding input cuts. However, it is preferable for cut selection policies to be agnostic to the order of input cuts. The reason is that given the same instance and candidate cuts in different order, the corresponding optimal ordered subsets should be unchanged. Moreover, \cite{order_matters} has shown that vanilla Seq2Seq model can perform poorly when handling input sets. To learn order-independent embeddings of input cuts, we propose to formulate the lower-level model as a Set to Sequence (Set2Seq) model. Specifically, we first formulate the set of input cuts as a fully connected graph, and then propose an attention-pointer network to instantiate the Set2Seq model as shown in Fig \ref{fig:hem_model_illustration2}.
	The attention-pointer network comprises a multi-head attention encoder and a pointer decoder. The multi-head attention encoder is 
	a core component of the Transformer \cite{transformer}, which has been widely used in natural language processing \cite{transformer}\cite{bert} and computer vision \cite{vit}\cite{swin_transformer}.Compared to recurrent models, multi-head attention encoders can well capture representations invariant to the input order by leveraging the self-attention mechanism without the positional encoding. Please refer to Appendix \ref{appendix_mha} for details of the self-attention mechanism. The pointer decoder is the same as that of the pointer network (see Appendix \ref{appendix_pn_details} for details). 
	
	\subsection{Training: hierarchical proximal policy optimization}\label{method:hppo}
	Under the one round setting, the formulation of the cut selection is simplified as a context bandit. Instead, the formulation of the cut selection is a MDP under the multiple rounds setting. The action space of the MDP increases exponentially with the number of the cut separation rounds, which entails the challenge of efficient exploration \cite{vime}\cite{rnd}\cite{r3l}.
	To address this challenge, we propose to leverage the proximal policy optimization (PPO) method \cite{ppo}, an on-policy state-of-the-art training method. It is well-known that PPO is much more sample-efficient than reinforce. To train the policy with a hierarchical structure, we derive a \textbf{h}ierarchical \textbf{p}roximal \textbf{p}olicy \textbf{o}ptimization (HPPO) based on the extended formulation in Section \ref{sec:formulation_multiple_rounds}. Specifically, we denote the action space $\mathcal{A}^{\text{ppo}}$ by $[0,1] \times \mathcal{A}$, and the cut selection policy by $\pi^{\text{ppo}}(k,a_k|s) = \pi^h(k|s)\cdot\pi^l(a_k|s,k)$. We further denote the parameterized policy by $\pi^{\text{ppo}}_{\theta}(k,a_k|s) = \pi^h_{\theta_1}(k|s)\cdot\pi^l_{\theta_2}(a_k|s,k)$, where $\theta=\left[ \theta_1, \theta_2 \right]$. 
	We then derive the HPPO as follows. We denote the probability ratio of the current policies and old policies by 
	\begin{align}\label{eq: ratio}
		r(\theta) = \frac{\pi^{\text{ppo}}_{\theta}(k,a_k|s)}{\pi^{\text{ppo} }_{\theta_{\text{old}}}(k,a_k|s)} = \frac{ \pi^{h}_{\theta_1}(k|s)\cdot \pi^l_{\theta_2}(a_k|s,k) }{ \pi^{h}_{\theta_{\text{old}1}}(k|s)\cdot \pi^l_{\theta_{\text{old}2}}(a_k|s,k) }.
	\end{align}
	HPPO clips the ratio $r(\theta)$ to avoid destructively large policy updates following \cite{ppo}, i.e., 
	\begin{equation}\label{eq:clip_ratio}
		r_\epsilon^{\text{clip}}(\theta)=\begin{cases}
			1+\epsilon, &\hat{A}>0 \text{ and } r(\theta)\geq1+\epsilon; \\
			1-\epsilon, &\hat{A}<0 \text{ and } r(\theta)\leq1-\epsilon;\\
			r(\theta), &\text{otherwise}.
		\end{cases}
	\end{equation}
	Here $\epsilon$ is a hyperparameter, and $\hat{A}$ is an estimator of the advantage function \cite{rl_sutton}. 
	Then, we aim to find $\theta$ that maximizes the following objective:
	\begin{equation}\label{eq: ppo_obj}
		J(\theta)  = \mathbb{E}_{t}\left[ r_\epsilon^{\text{clip}}(\theta)\hat{A} \right].
	\end{equation}
	Here $\mathbb{E}_t[\cdot] = \mathbb{E}_{s\sim \rho^{\pi_{\theta_{\text{old}}}},(k,a_k)\sim \pi_{\theta_{\text{old}}}}[\cdot]$, and $\rho^{\pi_{\theta_{\text{old}}}}$ denotes the expected state visitation frequencies under the policy $\pi_{\theta_{\text{old}}}$. Then, the hierarchical policy gradient takes the form of 
	\begin{align}\label{eq: hierarchical_pg_of_ppo}
		\nabla_{\theta_1} J\left[\theta_1,\theta_2\right] 
		& = \mathbb{E}_{t} [\frac{ \pi^l_{\theta_2}(a_k|s,k) \cdot \nabla_{\theta_1} \pi^h_{\theta_1}(k|s) }{\pi^h_{\theta_{\text{old}1}}(k|s) \cdot \pi^l_{\theta_{\text{old}2}}(a_k|s,k)} \hat{A} ], \\
		\nabla_{\theta_2} J\left[\theta_1,\theta_2\right] 
		& = \mathbb{E}_{t}[\frac{ \pi^h_{\theta_1}(k|s) \cdot \nabla_{\theta_2} \pi^l_{\theta_2}(a_k|s,k) }{\pi^h_{\theta_{\text{old}1}}(k|s) \cdot \pi^l_{\theta_{\text{old}2}}(a_k|s,k)} \hat{A}],
	\end{align}
	where $\hat{A}\leq 0$ and $r(\theta)\geq 1-\epsilon$ or $\hat{A}\geq 0 $ and $r(\theta)\leq 1+\epsilon$. Please refer to Appendix \ref{appendix_hppo} for details of estimating the advantage function $\hat{A}$ and the implementation of HPPO. 
	
	\subsection{Discussion on advantages of HEM/HEM++}
	We discuss some additional advantages of HEM/HEM++ as follows. (1) Inspired by hierarchical reinforcement learning \cite{smdp_sutton, hiro}, HEM/HEM++ leverages the hierarchical structure of the cut selection task, which is important for efficient exploration in complex decision-making tasks. (2) Previous methods  \cite{tang_icml20, cut_ranking} usually train cut selection policies via black-box optimization methods such as evolution strategies \cite{es}. In contrast, HEM/HEM++ is differentiable and we train the HEM/HEM++ via gradient-based algorithms, which is more sample efficient than black-box optimization methods \cite{rl_sutton,trpo}. Although we can offline generate training samples as much as possible using a MILP solver, high sample efficiency is significant as generating samples can be extremely time-consuming in practice. 
	
	\subsection{Theoretical Analysis}
	We provide analysis to establish theoretical performance guarantees for HEM/HEM++ in this part. 
	
	We show that \textbf{the cutting plane algorithm with our proposed HEM/HEM++ selecting cuts finds optimal solutions of Integer Linear Programs in a finite number of iterations} under some mild assumptions. We provide a detailed analysis as follows. 
	Specifically, we focus on a pure cutting plane algorithm for solving Integer Linear Programs (ILP), which follows previous work \cite{neto2012simple, orlin1985finitely}. 
	We focus on the ILPs under the assumption that the objective function is defined by a positive integer vector and the feasible region is bounded. Specifically, the ILP takes the form of 
	\begin{align}\label{eq:ilp}
		\max_{\mathbf{x}} \{ \mathbf{c}^{\top} \mathbf{x}: \mathbf{x} \in \mathbb{P} \cap \mathbb{Z}^n \}     
	\end{align}
	, where $\mathbf{c} \in \mathbb{Z}^n_{+}$ and $\mathbb{P}$ denotes a bounded polyhedron in $\left[ 0,d \right]^n$ with $d$ being some positive integer. We assume that $\mathbb{P} \cap \mathbb{Z}^n$ is nonempty, i.e., $\mathbb{P}$ contains at least one integral point. 
	
	\begin{definition}
		(\textbf{Lexicographically Lower}) Given two vectors $\mathbf{x},\mathbf{y} \in \mathbb{R}^{n+1}$, the vector $\mathbf{x}=(x_0,\dots,x_n)$ is defined to be lexicographically lower than the vector $\mathbf{y}=(y_0,\dots,y_n)$ if there exists an integer $k\in \{0,\dots,n\}$ with $x_k < y_k$ and $x_i=y_i$ for all $i<k$. We write $\mathbf{x} <_{L} \mathbf{y}$. And similarly, we write $\mathbf{x} \leq_{L} \mathbf{y} $ if $\mathbf{x} <_{L} \mathbf{y}$ or $\mathbf{x} = \mathbf{y}$.
	\end{definition}
	
	For simplicity, we write the objective in (\ref{eq:ilp}) as a lexicographic optimization problem \cite{isermann1982linear}, taking the form of
	\begin{align}\label{eq:obj_lex}
		\text{lexmax} \{ z_0,z_1,\dots,z_n: \mathbf{z} \in \mathbb{S} \},  
	\end{align}
	where $z_0=\mathbf{c}^{\top}\mathbf{x}$, $z_i=x_i, \forall i\in\{1,\dots,n\}$ and $\mathbb{S} = \mathbb{Z} \times (\mathbb{P} \cap \mathbb{Z}^n)$. The lexicographic optimization objective in (\ref{eq:obj_lex}) aims to find a \textit{lexicographically maximum vector} in $\mathbb{S}$. We define the linear programming (LP) relaxation of the problem (\ref{eq:obj_lex}) by
	\begin{align}\label{eq:obj_lex_lp}
		\text{lexmax} \{ z_0,z_1,\dots,z_n: \mathbf{z} \in \mathbb{S}^{\prime} \},
	\end{align}
	where $\mathbb{S}^{\prime} = \mathbb{R} \times \mathbb{P}$. 
	
	Following \cite{neto2012simple}, we focus on the following cutting planes.  
	
	\begin{proposition}\label{pro:cuts}
		(\textbf{Cutting Planes}) Let $\mathbf{z} = (z_0,z_1,\dots,z_n) \in \mathbb{R} \times \left[ 0,d \right]^n $. Then the following \textbf{linear inequalities} are satisfied  by any integer vector $\mathbf{x} \in \mathbb{Z} \times (\left[ 0,d \right] \cap \mathbb{Z})^n$ which is lexicographically lower than $\mathbf{z}$ (i.e., $\mathbf{x} \leq_{L} \mathbf{z}$)
		\begin{align}\label{eq:cuts}
			x_i + \sum_{j=0}^{i-1} a_{i-j} (x_j - \lceil z_j \rceil) \leq \lfloor z_i \rfloor
		\end{align}
		for all $i\in \{0,\dots, n\}$ with $a_1=d$ and $a_k=d(1+\sum_{j=1}^{k-1}a_j)$ for any integer $k \in \{2,\dots,n\}$, and $\sum_{j=0}^{-1}a_{0}(x_0-\lceil z_0 \rceil) = 0$.
	\end{proposition}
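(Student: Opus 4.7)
The plan is to do a case analysis driven by the definition of lexicographic order. If $\mathbf{x} = \mathbf{z}$, then every $z_j$ is integral, so $\lceil z_j \rceil = x_j$, the sum in \eqref{eq:cuts} vanishes, and the inequality reduces to $z_i \leq \lfloor z_i \rfloor = z_i$. Otherwise $\mathbf{x} <_L \mathbf{z}$, so by definition there is a unique smallest index $k \in \{0,\dots,n\}$ with $x_k < z_k$ and $x_j = z_j$ for $j<k$. I would fix this $k$ and verify the $i$-th inequality separately for $i<k$, $i=k$, and $i>k$.

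For $i<k$, the hypothesis $x_j = z_j$ for $j \leq i$ forces each such $z_j$ to be integral (since $x_j$ is an integer), hence $\lceil z_j \rceil = z_j = x_j$, the sum collapses to $0$, and $x_i = z_i = \lfloor z_i \rfloor$ gives the claim with equality. For $i=k$, the same argument kills the sum (only $j<k$ appears), and the inequality reduces to $x_k \leq \lfloor z_k \rfloor$, which follows from $x_k < z_k$ together with $x_k \in \mathbb{Z}$.

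The interesting case is $i>k$. I would split the sum $\sum_{j=0}^{i-1} a_{i-j}(x_j-\lceil z_j\rceil)$ into three pieces according to whether $j<k$, $j=k$, or $k<j\leq i-1$. The first piece vanishes exactly as above. For the $j=k$ term, the inequality $x_k < z_k$ combined with $x_k\in\mathbb{Z}$ gives $x_k - \lceil z_k\rceil \leq -1$, contributing at most $-a_{i-k}$. For $k<j\leq i-1$ and for $x_i$ itself I would use the box constraint $x_\ell \in [0,d]\cap\mathbb{Z}$ together with $\lceil z_\ell\rceil \geq 0$ (since $z_\ell \in [0,d]$ for $\ell\geq 1$), which yields $x_j-\lceil z_j\rceil \leq d$ and $x_i \leq d$. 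Collecting these bounds produces
\begin{equation*}
x_i + \sum_{j=0}^{i-1} a_{i-j}(x_j-\lceil z_j\rceil) \;\leq\; d \;-\; a_{i-k} \;+\; d\sum_{m=1}^{i-k-1} a_m.
\end{equation*}
This is exactly where the odd-looking recursion $a_1=d$ and $a_\ell = d\bigl(1+\sum_{m=1}^{\ell-1}a_m\bigr)$ is designed to help: substituting $a_{i-k}$ into the right-hand side makes everything telescope to $d-d=0 \leq \lfloor z_i\rfloor$ (the last inequality uses $z_i\geq 0$, which holds because $i\geq 1$ in this case). The edge case $i-k=1$ is handled by the convention that the inner sum is empty and $a_1=d$.

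The main obstacle I expect is purely bookkeeping rather than conceptual: one must be careful that the convention $\sum_{j=0}^{-1}=0$ handles $i=0$ correctly, that the index $0$ behaves like the others even though $z_0$ is unconstrained in sign (this case is forced into the easy branch $i\leq k$, so $z_0\geq 0$ is never needed), and that the recurrence is applied with the right offset $\ell = i-k$. Once the three cases are separated the algebra is short; the only real content of the argument is the design of the coefficients $a_\ell$, which were chosen precisely so that the worst-case contribution from the free variables $x_{k+1},\dots,x_{i-1},x_i$ is exactly cancelled by the guaranteed $-a_{i-k}$ coming from the first strict decrease at coordinate $k$.
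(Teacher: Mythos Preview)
Your proposal is correct and follows essentially the same argument as the paper's proof: both do a case split on the pivot index $k$ of the lexicographic comparison, kill the $j<k$ terms using $x_j=z_j\in\mathbb{Z}$, bound the $j=k$ term by $-a_{i-k}$ via $x_k-\lceil z_k\rceil\le -1$, bound the remaining terms by $d$ each, and then invoke the recursion $a_{i-k}=d(1+\sum_{m=1}^{i-k-1}a_m)$ to collapse the right-hand side to $0\le\lfloor z_i\rfloor$. Your write-up is in fact slightly more explicit about the edge cases (the role of the box constraint for $\lfloor z_i\rfloor\ge 0$ and why the unconstrained sign of $z_0$ is harmless), but the structure is identical.
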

	
	We defer detailed proof of Proposition \ref{pro:cuts} to Appendix \ref{proof:cuts}.
	We denote the inequality (\ref{eq:cuts}) with some particular index value $i \in \{ 0,\dots,n \}$ by (\ref{eq:cuts})$_{i}$. 
	Thus, we can generate $n+1$ cutting planes according to the inequalities (\ref{eq:cuts}).
	
	The cutting plane algorithm with HEM/HEM++ starts with solving the LP relaxation of (\ref{eq:obj_lex}) (i.e., the problem (\ref{eq:obj_lex_lp})). Then we generate $n+1$ cutting planes according to (\ref{eq:cuts}), and HEM/HEM++ select a cut to be added to the LP relaxation. Then the procedure proceeds to the next iteration by solving the new LP relaxation. Finally, the algorithm is iterated until the optimal integer solution is found. We show that the cutting plane algorithm with HEM/HEM++ selecting cuts finds optimal integer solutions of the ILPs in a finite number of iterations under the following mild assumption.
	
	\begin{assumption}\label{assum:hem}
		HEM/HEM++ learns a cut selection policy to select the $k$-th cut at each iteration. Note that $k = \text{argmin}_{j}\{z_j^* \not\in \mathbb{Z}, j=0,\dots,n\}$, where $\mathbf{z}_t^{\star} = (z_0^*, z_1^*,\dots,z_n^*)$ denotes the optimal solution of the LP relaxation at iteration $t$.
	\end{assumption}
	
	\begin{theorem}
		With the cutting planes in Proposition \ref{pro:cuts} and under the Assumption \ref{assum:hem}, the cutting plane algorithm with HEM/HEM++ selecting cuts finds an optimal solution of the ILP (\ref{eq:obj_lex}) in a finite number of iterations.
		\label{theorem:convergence}
	\end{theorem}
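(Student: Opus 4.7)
The plan is to show that under Assumption \ref{assum:hem}, HEM/HEM++ reduces to the classical lexicographic cutting plane scheme, and then to argue finite termination by induction on the coordinate index. First I would observe that Assumption \ref{assum:hem} forces HEM/HEM++ to select the cut $(\ref{eq:cuts})_{k}$ where $k$ is the smallest index with $z_{t,k}^{\star}\notin\mathbb{Z}$. Proposition \ref{pro:cuts} guarantees that this cut is satisfied by every integer point lexicographically below $\mathbf{z}_{t}^{\star}$, hence by every integer-feasible point of the ILP (\ref{eq:obj_lex}), so the integer optimum is preserved across iterations. It therefore suffices to show the procedure terminates: once no coordinate of $\mathbf{z}_{t}^{\star}$ is fractional, $(z_{t,1}^{\star},\dots,z_{t,n}^{\star})$ lies in $\mathbb{P}\cap\mathbb{Z}^n$ and $z_{t,0}^{\star}$ matches the lex-optimum of (\ref{eq:obj_lex}).

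Next I would verify that each iteration strictly decreases the LP optimum in lexicographic order, i.e., $\mathbf{z}_{t+1}^{\star} <_{L} \mathbf{z}_{t}^{\star}$. Since $z_{t,j}^{\star}\in\mathbb{Z}$ for every $j<k$, substituting $\mathbf{x}=\mathbf{z}_{t}^{\star}$ into $(\ref{eq:cuts})_{k}$ collapses every term indexed by $j<k$ (because $z_{t,j}^{\star}=\lceil z_{t,j}^{\star}\rceil$), leaving the left-hand side equal to $z_{t,k}^{\star}$ while the right-hand side is $\lfloor z_{t,k}^{\star}\rfloor<z_{t,k}^{\star}$, so the current optimum violates the cut. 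Because tightening the feasible region can only shrink the lex-maximum, $\mathbf{z}_{t+1}^{\star}\leq_{L}\mathbf{z}_{t}^{\star}$ automatically, and the violation upgrades this to strict lex inequality.

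The core of the termination argument is an induction on $j$ establishing a finite time $T_{j}$ after which the prefix $(z_{t,0}^{\star},\dots,z_{t,j}^{\star})$ freezes at an integer value. For the base case $j=0$, I use that $z_{0}^{\star}$ is monotonically non-increasing and bounded below by $0$ (since $\mathbf{c}\in\mathbb{Z}_{+}^{n}$ and $\mathbb{P}\subseteq[0,d]^{n}$). Enumerating the iterations $t_{1}<t_{2}<\cdots$ at which cut $(\ref{eq:cuts})_{0}$ is applied, I combine $z_{t_{i}+1,0}^{\star}\leq\lfloor z_{t_{i},0}^{\star}\rfloor$ with the non-increase of $z_{0}^{\star}$ between cut applications and the fractionality of $z_{t_{i+1},0}^{\star}$ to deduce $\lfloor z_{t_{i+1},0}^{\star}\rfloor\leq\lfloor z_{t_{i},0}^{\star}\rfloor-1$. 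Since this integer sequence is bounded below, the enumeration is finite; after the last such $t_{i}$ the value $z_{t,0}^{\star}$ stays integer and, being a non-increasing integer sequence in a bounded range, stabilizes. The inductive step is analogous: once $z_{0}^{\star},\dots,z_{j-1}^{\star}$ are fixed integers for $t\geq T_{j-1}$, the dynamics reduce to the same bounded, fractional-to-integer argument applied to $z_{j}^{\star}\in[0,d]$.

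The main obstacle I expect is the invariance within the inductive step: I must confirm that after $T_{j-1}$ the frozen prefix is never disturbed by subsequent cuts. This follows because every cut $(\ref{eq:cuts})_{k}$ with $k\geq j$ is satisfied by all integer-feasible points of the ILP (Proposition \ref{pro:cuts}), in particular by those realizing the frozen prefix, so the new lex-maximum still attains it. Once this invariance is secured, the remainder is a routine bounded-integer-sequence argument, and at $T_{n}$ the vector $\mathbf{z}_{t}^{\star}$ is integral in every coordinate, yielding the lex-optimum of (\ref{eq:obj_lex}) in finitely many iterations.
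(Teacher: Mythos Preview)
Your proposal is correct but follows a genuinely different route from the paper. The paper (following Neto) introduces an integer-valued map $\alpha(\mathbf{z}_t^\star)$, obtained by rounding down the first fractional coordinate of $\mathbf{z}_t^\star$ and setting all later coordinates to $d$, and proves in a single lemma that $\alpha(\mathbf{z}_{t+1}^\star) <_L \alpha(\mathbf{z}_t^\star)$ whenever $\mathbf{z}_t^\star,\mathbf{z}_{t+1}^\star$ are non-integral. Termination then follows immediately by counting the finitely many integer vectors lexicographically between a fixed lower bound and $\alpha(\mathbf{z}_0^\star)$, and this gives an explicit iteration bound $\mathcal{O}(d^{n+1}C)$ with $C=1+\sum_i|c_i|$. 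Your coordinate-by-coordinate stabilization argument is the classical Gomory-style induction: it is more elementary and avoids introducing the auxiliary map $\alpha$, but it does not directly yield a complexity bound and it requires threading the ``prefix frozen'' invariant through each inductive step.

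One remark on your last paragraph: the justification you offer for the invariance is not quite the right one. Preservation of integer-feasible points with the frozen prefix does not by itself force the LP lex-maximum to retain that prefix, because the frozen prefix $(v_0,\dots,v_{j-1})$ need not coincide with the prefix of the integer optimum $\mathbf{x}^\star$, so there may be no integer-feasible witness realizing it. The real reason the prefix stays put during the inductive step is simply that this \emph{is} your inductive hypothesis: you defined $T_{j-1}$ as a time after which $(z_{t,0}^\star,\dots,z_{t,j-1}^\star)$ is constant for \emph{all} $t\geq T_{j-1}$, and that statement already covers every iteration you perform while stabilizing $z_j^\star$. So the invariance worry dissolves once the induction is set up correctly; you can drop that paragraph or replace it with this one-line observation.
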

	We defer detailed proof to Appendix \ref{proof:cuts_convergence}.        
	
	\begin{algorithm}[t]
		\caption{Pseudo code for extracting order rules from hierarchical sequence model}.
		\label{alg:hem_order_rules}
		\begin{algorithmic}[1]
			\STATE \textbf{Input} Hierarchical sequence model $\pi_{[\theta_1,\theta_2]}$, MILP training dataset $\mathcal{D}_{\text{train}}$
			\STATE \textbf{Input} Top-3 cutting plane category counters $\mathcal{C}_{1},\mathcal{C}_{2},\mathcal{C}_{3}=\{0\}$, cutting plane categories $\mathcal{K}$
			\FOR{each instance $\mathcal{D}_i$ in $\mathcal{D}_{\text{train}}$}
			\STATE Solve $\mathcal{D}_i$ with HEM selecting cuts
			\STATE Record the order of categories of selected cuts
			\STATE Record the top-3 categories in the order, and find their corresponding indexes in $\mathcal{K}$, i.e., $i,j,k$.
			\STATE Increase the counters: $\mathcal{C}_{1i} += 1$, $\mathcal{C}_{2j} += 1$, $\mathcal{C}_{3k} += 1$
			\ENDFOR
			\STATE Find the maximum counter indexes: $i_{\text{top1}} = \text{argmax} (\mathcal{C}_1)$, $i_{\text{top2}} = \text{argmax} (\mathcal{C}_2)$, $i_{\text{top3}} = \text{argmax} (\mathcal{C}_3)$
			\STATE \textbf{Return} the top-3 categories, i.e., $\mathcal{K}_{i_{\text{top1}}}$, $\mathcal{K}_{i_{\text{top2}}}$, $\mathcal{K}_{i_{\text{top3}}}$
		\end{algorithmic}
	\end{algorithm}
	
	\begin{table*}[t]
		\caption{Statistical description of used datasets. In all datasets, $m$ denotes the average number of constraints and $n$ denotes the average number of variables. NCuts denotes the number of generated candidate cuts. Inference Time denotes the inference time of our proposed HEM given the average number of candidate cuts.}
		\label{exp:datasets}
		\centering
		\resizebox{\textwidth}{!}{
			\begin{tabular}{@{}cccccccccc@{}}
				\toprule
				\toprule
				Datasets & Set Covering & Maximum Independent Set & Multiple Knapsack & MIK & CORLAT & Load Balancing & Anonymous & MIPLIB mixed neos & MIPLIB mixed supportcase \\ \midrule
				$m$ & 500 & 1953 & 72 & 346 & 486 & 64304 & 49603 & 5660 & 19910 \\
				$n$ & 1000 & 500 & 720 & 413 & 466 & 61000 & 37881 & 6958 & 19766 \\ \midrule
				NCuts Avg (stdev)  & 780.51 (290) & 57.04 (16) & 45.00 (13) & 62.00 (13) & 60.00 (33) & 392.53 (33) & 79.40 (73) & 239.00 (154) & 173.25 (267) \\ 
				Inference Time (s) & 1.58 & 0.11 & 0.09 & 0.12 & 0.12 & 0.77 & 0.15 & 0.47 & 0.34 \\ \bottomrule
			\end{tabular}
		}
	\end{table*}
	
	\section{Extracting Order Rules from HEM}\label{method:order_heuristics}
	To enhance modern MILP solvers, we can directly deploy our HEM to MILP solvers to improve their efficiency. In addition, we can also extract more effective heuristic rules than manually-designed heuristics from learned policies to facilitate the deployment of HEM. Extracting rules from HEM leads to two major advantages. First, the extracted rules are well readable for human experts, and thus easy to debug. Second, we can deploy the extracted rules to purely CPU-based environments, which are applicable to the common solver deployment environments, i.e., the CPU computing-intensive environments.
	However, it is challenging to directly extract the cut selection rules from HEM. As it involves three problems (i.e., \textbf{(P1)-(P3)}) that are very complex, it is difficult to extract general cut selection rules from HEM. Furthermore, existing heuristics aim to tackle \textbf{(P1)-(P2)} but neglect the importance of tackling \textbf{(P3)}. Therefore, we propose to extract \textit{order rules} for tackling \textbf{(P3)} to enhance existing manually-designed heuristics. 
	
	Specifically, we record the order of categories of selected cuts for each instance. For generality, we aim to extract instance-independent order rules rather than instance-dependent.
	To this end, we count the number of times that each cutting plane category 
	is located at the k-th ($k=1,2,3,\dots$) position in the order across all instances. Then we record the cutting plane category with the highest number of times that is located at the k-th ($k=1,2,3,\dots$) position, respectively. Finally, we increase the priority of the cut generators according to the order of cutting plane categories. We denote the default heuristics in MILP solvers with extracted order rules by Default+. We summarize the procedure of extracting order rules from HEM in Algorithm \ref{alg:hem_order_rules}. Please refer to Section \ref{exp:order_rules} for the evaluation of Default+.

	\section{Experiments}
	
	Our experiments have six main parts. (1) We evaluate HEM on three classical MILP problems and six challenging MILP problem benchmarks from diverse application areas (see Section \ref{exp:main_evaluation}).
	(2) We evaluate whether HEM++ improves HEM (see Section \ref{exp:hem++}). (3) We perform carefully designed ablation studies to provide further insight into HEM (see Section \ref{exp:ablation_study}). (4) We test whether HEM can generalize to instances significantly larger than those seen during training (see Section \ref{exp:generalization}). (5) We perform carefully designed 
	visualization experiments and explainability analysis (see Section \ref{exp:visu_and_explain}). (6) We deploy our approach to real-world challenging MILP problems (see Section \ref{exp:deployment}). 
	
	\subsection{Experiments setup}

	\subsubsection{Benchmarks}
	We evaluate our approach on nine $\mathcal{NP}$-hard MILP problem benchmarks, which consist of three classical synthetic MILP problems and six challenging MILP problems from diverse application areas. We divide the nine problem benchmarks into three categories according to the difficulty of solving them using the SCIP 8.0.0 solver \cite{scip8}. We call the three categories easy, medium, and hard datasets, respectively. (1) \textbf{Easy datasets} comprise three widely used synthetic MILP problem benchmarks: Set Covering \cite{setcover}, Maximum Independent Set \cite{mis}, and Multiple Knapsack \cite{tree_mdp}. We artificially generate instances following \cite{nips19_gcnn, rl_branch}. (2) \textbf{Medium datasets} comprise MIK \cite{mik} and CORLAT \cite{corlat}, which are widely used benchmarks for evaluating MILP solvers \cite{learning_to_search, milp_google}. (3) \textbf{Hard datasets} include the Load Balancing problem, inspired by large-scale systems at Google, and the Anonymous problem, inspired by a large-scale industrial application \cite{nips21_ml4co_competition}. Moreover, hard datasets contain benchmarks from MIPLIB 2017 (MIPLIB) \cite{miplibs_2017}. Although \cite{adaptive_cut_selection} has shown that directly learning over the full MIPLIB can be extremely challenging, we propose to learn over subsets of MIPLIB. We construct two subsets, called MIPLIB mixed neos and MIPLIB mixed supportcase. Due to limited space, please see Appendix \ref{appendix_main_datasets} for details of these datasets.
	
	We summarize the statistical description of these datasets in Table \ref{exp:datasets}. Let $n,m$ denote the average number of variables and constraints in the MILPs. Let $m\times n$ denote the size of the MILPs. It is worth emphasizing that the maximum size of the dataset we use is two orders of magnitude larger than that used in previous work \cite{tang_icml20, l2c_lookahead}. Furthermore, we evaluate the average inference time of HEM given the generated cuts. The results in Table \ref{exp:datasets} show that the computational overhead of HEM is quite low.

	\begin{table*}[t]
		\caption{Policy evaluation on easy, medium, and hard datasets. The best performance is marked in bold. For all datasets, the results show that HEM significantly outperforms the baselines in terms of solving time and primal-dual gap integral.}
		\label{evaluation_all}
		\centering
		\resizebox{0.96\textwidth}{!}{
			\begin{tabular}{@{}cccccccccc@{}}
				\toprule
				\toprule
				& \multicolumn{3}{c}{Easy: Set Covering ($n=1000, \,\,m=500$)} & \multicolumn{3}{c}{Easy: Max Independent Set ($n=500,\,\,m=1953$)} & \multicolumn{3}{c}{Easy: Multiple Knapsack ($n=720,\,\,m=72$)} \\ \midrule
				Method & Time(s) $\downarrow$ & Improvement (time, \%) $\uparrow$ & PD integral $\downarrow$ & Time(s) $\downarrow$ & Improvement (time, \%) $\uparrow$ & PD integral $\downarrow$ & Time(s) $\downarrow$ & Improvement (time, \%) $\uparrow$ & PD integral $\downarrow$\\\cmidrule(r){1-4} \cmidrule(lr){5-7} \cmidrule(l){8-10}
				NoCuts & 6.31 (4.61) & NA & 56.99 (38.89) & 8.78 (6.66) & NA & 71.31 (51.74) & 9.88 (22.24) & NA & 16.41 (14.16) \\
				Default & 4.41 (5.12) & 29.90 & 55.63 (42.21) & 3.88 (5.04) & 55.80 & 29.44 (35.27) & 9.90 (22.24) & -0.20 & 16.46 (14.25) \\
				Random & 5.74 (5.19) & 8.90 & 67.08 (46.58) & 6.50 (7.09) & 26.00 & 52.46 (53.10) & 13.10 (35.51) & -32.60 & 20.00 (25.14) \\
				NV & 9.86 (5.43) & -56.50 & 99.77 (53.12) & 7.84 (5.54) & 10.70 & 61.60 (43.95) & 13.04 (36.91) & -32.00 & 21.75 (24.71) \\
				Eff & 9.65 (5.45) & -53.20 & 95.66 (51.71) & 7.80 (5.11) & 11.10 & 61.04 (41.88) & 9.99 (19.02) & -1.10 & 20.49 (22.11) \\\midrule
				SBP & 1.91 (0.36) & 69.60 & 38.96 (8.66) & 2.43 (5.55) & 72.30 & 21.99 (40.86) & 7.74 (12.36) & 21.60 & 16.45 (16.62) \\
				HEM (Ours) & \textbf{1.85 (0.31)} & \textbf{70.60} & \textbf{37.92 (8.46)} & \textbf{1.76 (3.69)} & \textbf{80.00} & \textbf{16.01 (26.21)} & \textbf{6.13 (9.61)} & \textbf{38.00} & \textbf{13.63 (9.63)} \\ \bottomrule
			\end{tabular}
		}
		\newline
		\vspace{2mm}
		\newline
		\resizebox{0.96\textwidth}{!}{
			\begin{tabular}{@{}cccccccccc@{}}
				\toprule
				\toprule
				& \multicolumn{3}{c}{Medium: MIK ($n=413,\,\,m=346$)} & \multicolumn{3}{c}{Medium: Corlat ($n=466,\,\,m=486$)} & \multicolumn{3}{c}{Hard: Load Balancing ($n=61000,\,\,m=64304$)} \\ \midrule
				\multirow{2}{*}{Method} & \multirow{2}{*}{Time(s) $\downarrow$} & \multirow{2}{*}{PD integral $\downarrow$} & \multirow{2}{*}{\begin{tabular}[c]{@{}c@{}}Improvement $\uparrow$\\      (PD integral, \%)\end{tabular} } & \multirow{2}{*}{Time(s) $\downarrow$} & \multirow{2}{*}{PD integral $\downarrow$} & \multirow{2}{*}{\begin{tabular}[c]{@{}c@{}}Improvement $\uparrow$\\      (PD integral, \%)\end{tabular} } & \multirow{2}{*}{Time(s) $\downarrow$} & \multirow{2}{*}{PD integral $\downarrow$} & \multirow{2}{*}{\begin{tabular}[c]{@{}c@{}}Improvement $\uparrow$\\      (PD integral, \%)\end{tabular} } \\
				&  &  &  &  &  &  &  &  &  \\\cmidrule(r){1-4} \cmidrule(lr){5-7} \cmidrule(l){8-10}
				NoCuts & 300.01 (0.009) & 2355.87 (996.08) & NA & 103.30 (128.14) & 2818.40 (5908.31) & NA & 300.00 (0.12) & 14853.77 (951.42) & NA \\
				Default & 179.62 (122.36) & 844.40 (924.30) & 64.10 & 75.20 (120.30) & 2412.09 (5892.88) & 14.40 & 300.00 (0.06) & 9589.19 (1012.95) & 35.40 \\
				Random & 289.86 (28.90) & 2036.80 (933.17) & 13.50 & 84.18 (124.34) & 2501.98 (6031.43) & 11.20 & 300.00 (0.09) & 13621.20 (1162.02) & 8.30 \\
				NV & 299.76 (1.32) & 2542.67 ( 529.49) & -7.90 & 90.26 (128.33) & 3075.70 (7029.55) & -9.10 & 300.00 (0.05) & 13933.88 (971.10) & 6.20 \\
				Eff & 298.48 (5.84) & 2416.57 (642.41) & -2.60 & 104.38 (131.61) & 3155.03 (7039.99) & -11.90 & 300.00 (0.07) & 13913.07 (969.95) & 6.30 \\\midrule
				SBP & 286.07 (41.81) & 2053.30 (740.11) & 12.80 & 70.41 (122.17) & 2023.87 (5085.96) & 28.20 & 300.00 (0.10) & 12535.30 (741.43) & 15.60 \\ 
				HEM(Ours) & \textbf{176.12 (125.18)} & \textbf{785.04 (790.38)} & \textbf{66.70} & \textbf{58.31 (110.51)} & \textbf{1079.99 (2653.14)} & \textbf{61.68} & \textbf{300.00 (0.04)} & \textbf{9496.42 (1018.35)} & \textbf{36.10} \\ \bottomrule
			\end{tabular}
		}
		\newline
		\vspace{2mm}
		\newline
		\resizebox{0.96\textwidth}{!}{
			\begin{tabular}{@{}cccccccccc@{}}
				\toprule
				\toprule
				& \multicolumn{3}{c}{Hard: Anonymous ($n=37881,\,\,m=49603$)} & \multicolumn{3}{c}{Hard: MIPLIB mixed neos ($n=6958,\,\,m=5660$)} & \multicolumn{3}{c}{Hard: MIPLIB mixed supportcase ($n=19766,\,\,m=19910$)} \\ \midrule
				\multirow{2}{*}{Method} & \multirow{2}{*}{Time(s) $\downarrow$} & \multirow{2}{*}{PD integral $\downarrow$} & \multirow{2}{*}{\begin{tabular}[c]{@{}c@{}}Improvement $\uparrow$\\      (PD integral, \%)\end{tabular}} & \multirow{2}{*}{Time(s) $\downarrow$} & \multirow{2}{*}{PD integral $\downarrow$} & \multirow{2}{*}{\begin{tabular}[c]{@{}c@{}}Improvement $\uparrow$\\      (PD integral, \%)\end{tabular}} & \multirow{2}{*}{Time(s) $\downarrow$} & \multirow{2}{*}{PD integral $\downarrow$} & \multirow{2}{*}{\begin{tabular}[c]{@{}c@{}}Improvement $\uparrow$\\      (PD integral, \%)\end{tabular}} \\
				&  &  &  &  &  &  &  &  &  \\\cmidrule(r){1-4} \cmidrule(lr){5-7} \cmidrule(l){8-10}
				NoCuts & 246.22 (94.90) & 18297.30 (9769.42) & NA & 253.65 (80.29) & 14652.29 (12523.37) & NA & 170.00 (131.60) & 9927.96 (11334.07) & NA \\
				Default & 244.02 (97.72) & 17407.01 (9736.19) & 4.90 & 256.58 (76.05) & 14444.05 (12347.09) & 1.42 & 164.61 (135.82) & 9672.34 (10668.24) & 2.57 \\
				Random & 243.49 (98.21) & 16850.89 (10227.87) & 7.80 & 255.88 (76.65) & 14006.48 (12698.76) & 4.41 & 165.88 (134.40) & 10034.70 (11052.73) & -1.07 \\
				NV & 242.01 (98.68) & 16873.66 (9711.16) & 7.80 & 263.81 (64.10) & 14379.05 (12306.35) & 1.86 & \textbf{161.67 (131.43)} & 8967.00 (9690.30) & 9.68 \\
				Eff & 244.94 (93.47) & 17137.87 (9456.34) & 6.30 & 260.53 (68.54) & 14021.74 (12859.41) & 4.30 & 167.35 (134.99) & 9941.55 (10943.48) & -0.14 \\\midrule
				SBP & 245.71 (92.46) & 18188.63 (9651.85) & 0.59 & 256.48 (78.59) & 13531.00 (12898.22) & 7.65 & 165.61 (135.25) & 7408.65 (7903.47) & 25.37 \\ 
				HEM(Ours) & \textbf{241.68 (97.23)} & \textbf{16077.15 (9108.21)} & \textbf{12.10} & \textbf{248.66 (89.46)} & \textbf{8678.76 (12337.00)} & \textbf{40.77} & 162.96 (138.21) & \textbf{6874.80 (6729.97)} & \textbf{30.75} \\ \bottomrule
			\end{tabular}
		}
	\end{table*}
	
	\subsubsection{Implementation details}
	Throughout all experiments, we use SCIP 8.0.0 \cite{scip8} as the backend solver, which is the state-of-the-art open source solver, and is widely used in research of machine learning for combinatorial optimization \cite{nips19_gcnn,cut_ranking,adaptive_cut_selection,milp_google}. Following \cite{nips19_gcnn, cut_ranking, l2c_lookahead}, we only allow cutting plane generation and selection at the root node. Throughout all experiments, we set the cut separation rounds as one unless otherwise specified. We keep all the other SCIP parameters to default so as to make comparisons as fair and reproducible as possible. We emphasize that all of the SCIP solver's advanced features, such as presolve and heuristics, are open, which ensures that our setup is consistent with the practice setting. Throughout all experiments, we set the solving time limit as 300 seconds. For completeness, we also evaluate HEM with a much longer time limit of three hours. The results are given in Appendix \ref{appendix_results_long_test_time}. We train HEM with ADAM \cite{adam} using the PyTorch \cite{torch}. Additionally, we also provide another implementation using the MindSpore \cite{mindspore}.      
	For simplicity, we split each dataset into the train and test sets with $80\%$ and $20\%$ instances. To further improve HEM, one can construct a valid set for hyperparameters tuning. We train our model on the train set, and select the best model on the train set to evaluate on the test set. 
	Please refer to Appendix \ref{appendix_imple_hyper} for implementation details, hyperparameters, and hardware specification.
	
	
	\subsubsection{Main Baselines}
	Our main baselines include five widely-used human-designed cut selection rules and a state-of-the-art (SOTA) learning-based method. Cut selection rules include NoCuts, Random, Normalized Violation (NV), Efficacy (Eff), and Default. NoCuts does not add any cuts. 
	Default denotes the default rules used in SCIP 8.0.0. For learning-based methods, we implement a SOTA learning-based method \cite{tang_icml20}, namely score-based policy (SBP), as our main learning baseline. 
	All baselines, except Default and NoCuts, select a fixed ratio of candidate cuts with high scores. 
	Please see Appendix \ref{appendix_imple_baselines_one_round} for implementation details of these baselines.
	
	\subsubsection{Evaluation metrics}
	We use two widely used evaluation metrics, i.e., the average solving time (Time, lower is better), and the average primal-dual gap integral (PD integral, lower is better). Additionally, we provide more results in terms of another two metrics, i.e., the average number of nodes and the average primal-dual gap, in Appendix \ref{appendix_results_more_metrics}. Furthermore, to evaluate different cut selection methods compared to pure branch-and-bound without cutting plane separation, we propose an \textit{Improvement} metric. Specifically, we define the metric by 
	$  \text{Im}_{M}(\cdot) = \frac{M(\text{NoCuts}) - M(\cdot)}{M(\text{NoCuts})},
	$
	where $M(\text{NoCuts})$ represents the performance of NoCuts, and $M(\cdot)$ represents a mapping from a method to its performance. The improvement metric represents the improvement of a given method compared to NoCuts. \textbf{We mainly focus on the Time metric on the easy datasets}, as the solver can solve all instances to optimality within the given time limit. However, HEM and the baselines cannot solve all instances to optimality within the time limit on the medium and hard datasets. As a result, the average solving time of those unsolved instances is the same, which makes it difficult to distinguish the performance of different cut selection methods using the Time metric. Therefore, \textbf{we mainly focus on the PD integral metric on the medium and hard datasets}. The PD integral is also a well-recognized metric for evaluating the solver performance \cite{nips21_ml4co_competition, cao2022ml4co}.
	
	\subsection{Comparative evaluation of HEM}\label{exp:main_evaluation}
	The results in Table \ref{evaluation_all} suggest the following.
	(1) \textbf{Easy datasets.} HEM significantly outperforms all the baselines on the easy datasets, especially on Maximum Independent Set and Multiple Knapsack. SBP achieves much better performance than all the rule-based baselines, demonstrating that our implemented SBP is a strong baseline. Compared to SBP, HEM improves the Time by up to $16.4\%$ on the three datasets,  demonstrating the superiority of our method over the SOTA learning-based method. 
	(2) \textbf{Medium datasets.} On MIK and CORLAT, HEM still outperforms all the baselines. Especially on CORLAT, HEM achieves at least $33.48\%$ improvement in terms of the PD integral compared to the baselines. 
	(3) \textbf{Hard datasets.}
	HEM significantly outperforms the baselines in terms of the PD integral on several problems in the hard datasets. HEM achieves outstanding performance on two challenging datasets from MIPLIB 2017 and real-world problems (Load Balancing and Anonymous), 
	demonstrating the powerful ability to enhance MILP solvers with HEM in large-scale real-world applications. Moreover, 
	SBP performs extremely poorly on several medium and hard datasets, which implies that it can be difficult to learn good cut selection policies on challenging MILP problems.

	\begin{table*}[t]
		\caption{Comparison with two more learning baselines, i.e., AdaptiveCutsel \cite{adaptive_cut_selection} and Lookahead \cite{l2c_lookahead}, on Maximum Independent Set, CORLAT, and MIPLIB mixed supportcase. The best
			performance is marked in bold.}
		\label{exp:evaluation_more_learning_baselines}
		\centering
		\resizebox{0.96\textwidth}{!}{
			\begin{tabular}{@{}cccccccccc@{}}
				\toprule
				\toprule
				& \multicolumn{3}{c}{Easy: Maximum Independent Set ($n=500,\,\,m=1953$)} & \multicolumn{3}{c}{Medium: CORLAT ($n=466,\,\,m=486$)} & \multicolumn{3}{c}{Hard: MIPLIB mixed supportcase ($n=19766,\,\,m=19910$)} \\ \midrule
				\multirow{2}{*}{Method} & \multirow{2}{*}{Time(s) $\downarrow$} & \multirow{2}{*}{\begin{tabular}[c]{@{}c@{}}Improvement $\uparrow$\\      (Time, \%)\end{tabular}} & \multirow{2}{*}{PD integral $\downarrow$} & \multirow{2}{*}{Time(s) $\downarrow$} & \multirow{2}{*}{PD integral $\downarrow$} & \multirow{2}{*}{\begin{tabular}[c]{@{}c@{}}Improvement $\uparrow$\\      (PD integral, \%)\end{tabular}} & \multirow{2}{*}{Time(s) $\downarrow$} & \multirow{2}{*}{PD integral $\downarrow$} & \multirow{2}{*}{\begin{tabular}[c]{@{}c@{}}Improvement $\uparrow$\\      (PD integral, \%)\end{tabular}} \\
				&  &  &  &  &  &  &  &  &  \\ \cmidrule(r){1-4} \cmidrule(lr){5-7} \cmidrule(l){8-10}
				NoCuts & 8.78 (6.66) & NA & 71.31 (51.74) & 103.31 (128.14) & 2818.41 (5908.31) & NA & 170.00 (131.60) & 9927.96 (11334.07) & NA \\
				Default & 3.88 (5.04) & 55.80 & 29.44 (35.27) & 75.21 (120.30) & 2412.09 (5892.88) & 14.42 & 164.61 (135.82) & 9672.34 (10668.24) & 2.57 \\ \midrule
				AdaptiveCutsel & 2.74 (3.92) & 68.79 & 21.40 (27.95) & 74.61 (120.74) & 1619.50 (3862.10) & 42.54 & 161.03 (136.40) & 8769.63 (10008.97) & 11.67 \\
				Lookahead & 2.27 (5.00) & 74.15 & 20.89 (37.38) & 74.38 (125.47) & 1493.00 (3684.44) & 47.03 & \textbf{159.61 (130.97)} & 9293.82 (10428.78) & 6.39 \\
				SBP & 2.43 (5.55) & 72.30 & 21.99 (40.86) & 70.42 (122.17) & 2023.87 (5085.96) & 28.19 & 165.61 (135.25) & 7408.65 (7903.47) & 25.37 \\
				HEM (Ours) & \textbf{1.76 (3.69)} & \textbf{80.00} & \textbf{16.01 (26.21)} & \textbf{58.31 (110.51)} & \textbf{1079.99 (2653.14)} & \textbf{61.68} & 162.96 (138.21) & \textbf{6874.80 (6729.97)} & \textbf{30.75} \\ \bottomrule
			\end{tabular}
		}
	\end{table*}

	\begin{table*}[t]
		\caption{The average performance of HEM++ and the baselines in the multiple rounds setting on Maximum Independent Set, CORLAT, and MIPLIB mixed supportcase. The best performance is marked
			in bold.}
		\label{exp:evaluation_multiple_rounds}
		\centering
		\resizebox{0.96\textwidth}{!}{
			\begin{tabular}{@{}cccccccccc@{}}
				\toprule\toprule
				& \multicolumn{3}{c}{Easy: Maximum Independent Set ($n=500,\,\,m=1953$)} & \multicolumn{3}{c}{Medium: CORLAT ($n=466,\,\,m=486$)} & \multicolumn{3}{c}{Hard: MIPLIB mixed supportcase ($n=19766,\,\,m=19910$)} \\ \midrule
				\multirow{2}{*}{Method} & \multirow{2}{*}{Time (s) $\downarrow$} & \multirow{2}{*}{\begin{tabular}[c]{@{}c@{}}Improvement $\uparrow$\\      (Time, \%)\end{tabular}} & \multirow{2}{*}{PD integral $\downarrow$} & \multirow{2}{*}{Time (s) $\downarrow$} & \multirow{2}{*}{PD integral $\downarrow$} & \multirow{2}{*}{\begin{tabular}[c]{@{}c@{}}Improvement $\uparrow$\\      (PD integral, \%)\end{tabular}} & \multirow{2}{*}{Time (s) $\downarrow$} & \multirow{2}{*}{PD integral $\downarrow$} & \multirow{2}{*}{\begin{tabular}[c]{@{}c@{}}Improvement $\uparrow$\\      (PD integral, \%)\end{tabular}} \\
				&  &  &  &  &  &  &  &  &  \\ \cmidrule(r){1-4} \cmidrule(lr){5-7} \cmidrule(l){8-10}
				NoCuts & 8.78 (6.66) & NA & 71.32 (51.74) & 103.31 (128.14) & 2818.41 (5908.31) & NA & 170.00 (131.60) & 9927.96 (11334.07) & NA \\
				Default & 0.28 (0.09) & 96.78 & 6.43 (1.17) & 18.19 (63.09) & 1293.01 (4988.62) & 54.12 & 149.40 (135.06) & 8273.98 (8854.58) & 16.66 \\
				Random & 1.01 (2.38) & 88.53 & 10.91 (12.28) & 21.45 (63.13) & 1537.53 (4990.91) & 45.45 & 153.20 (136.38) & 8407.73 (9667.58) & 15.31 \\
				NV & 1.24 (1.62) & 85.91 & 12.38 (8.63) & 77.21 (119.69) & 3083.72 (7544.76) & -9.41 & 156.46 (130.11) & 8934.07 (10299.78) & 10.01 \\
				Eff & 0.27 (0.09) & 96.97 & 6.39 (1.11) & 23.12 (70.35) & 2305.60 (7032.43) & 18.20 & \textbf{128.88 (126.15)} & 7368.99 (8642.19) & 25.77 \\ \midrule
				SBP & 0.26 (0.14) & 97.04 & 6.57 (1.70) & 8.93 (35.76) & 692.31 (2678.17) & 75.44 & 147.92 (131.63) & 7360.16 (7782.50) & 25.86 \\
				HEM++ (Ours) & \textbf{0.21 (0.09)} & \textbf{97.63} & \textbf{6.03 (1.12)} & \textbf{4.96 (10.08)} & \textbf{460.68 (980.31)} & \textbf{83.65} & 141.71 (133.49) & \textbf{6545.59 (7241.17)} & \textbf{34.07} \\ \bottomrule
			\end{tabular}
		}
	\end{table*}
	
	\begin{table*}[t]
		\centering
		\caption{Evaluation of HEM++ on three extremely hard datasets under the one round setting. The best performance is marked
			in bold. The results show that HEM++ outperforms HEM in terms of the primal-dual gap integral.}
		\label{eva:HEM++}
		\resizebox{0.96\textwidth}{!}{
			\begin{tabular}{@{}cccccccccc@{}}
				\toprule
				\toprule
				& \multicolumn{3}{c}{Hard: Anonymous ($n=37881,\,\,m=49603$)} & \multicolumn{3}{c}{Hard: MIPLIB mixed neos ($n=6958,\,\,m=5660$)} & \multicolumn{3}{c}{Hard: MIPLIB mixed supportcase ($n=19766,\,\,m=19910$)} \\ \midrule
				\multirow{2}{*}{Method} & \multirow{2}{*}{Time (s) $\downarrow$} & \multirow{2}{*}{PD integral $\downarrow$} & \multirow{2}{*}{\begin{tabular}[c]{@{}c@{}}Improvement $\uparrow$ \\      (\%, PD integral)\end{tabular}} & \multirow{2}{*}{Time (s) $\downarrow$} & \multirow{2}{*}{PD integral $\downarrow$} & \multirow{2}{*}{\begin{tabular}[c]{@{}c@{}}Improvement $\uparrow$\\      (\%, PD integral)\end{tabular}} & \multirow{2}{*}{Time (s) $\downarrow$} & \multirow{2}{*}{PD integral $\downarrow$} & \multirow{2}{*}{\begin{tabular}[c]{@{}c@{}}Improvement $\uparrow$\\      (\%, PD integral)\end{tabular}} \\
				&  &  &  &  &  &  &  &  &  \\ \cmidrule(r){1-4} \cmidrule(lr){5-7} \cmidrule(l){8-10}
				NoCuts & 246.22 (94.90) & 18297.30 (9769.42) & NA & 253.65 (80.29) & 14652.29 (12523.37) & NA & 170.00 (131.60) & 9927.96 (11334.07) & NA \\
				Default & 244.02 (97.72) & 17407.01 (9736.19) & 4.90 & 256.58 (76.05) & 14444.05 (12347.09) & 1.42 & 164.61 (135.82) & 9672.34 (10668.24) & 2.57 \\ \midrule
				SBP & 245.71 (92.46) & 18188.63 (9651.85) & 0.59 & 256.48 (78.59) & 13531.00 (12898.22) & 7.65 & 165.61 (135.25) & 7408.65 (7903.47) & 25.37 \\
				HEM (Ours) & \textbf{241.68 (97.23)} & 16077.15 (9108.21) & 12.10 & \textbf{248.66 (89.46)} & 8678.76 (12337.00) & 40.77 & 162.96 (138.21) & 6874.80 (6729.97) & 30.75 \\
				HEM++ (Ours) & 243.23 (96.62) & \textbf{15444.70 (9260.60)} & \textbf{15.59} & 252.07 (83.06) & \textbf{8557.46 (12395.07)} & \textbf{41.60} & \textbf{153.83 (131.55)} & \textbf{6586.04 (6313.25)} & \textbf{33.66} \\ \bottomrule
			\end{tabular}
		}
	\end{table*}
	
	\subsubsection{Comparison with more learning baselines} 
	To fully evaluate the superiority of HEM, we compare HEM with two more learning baselines, i.e., AdaptiveCutsel \cite{adaptive_cut_selection} and Lookahead \cite{l2c_lookahead}. As shown in Table \ref{exp:evaluation_more_learning_baselines}, the results demonstrate that HEM significantly outperforms the two learning-based methods by a large margin in terms of the Time (up to 11.21\% improvement) and PD integral (up to 24.36\% improvement). Both SBP and Lookahead significantly outperform the default heuristic, showing that learning cut selection is important for improving solver performance. Moreover, SBP performs on par with Lookahead on easy and medium datasets (Maximum Independent Set and CORLAT), while SBP outperforms Lookahead on hard datasets (MIPLIB mixed supportcase). This demonstrates that SBP is a strong learning baseline, especially on large-scale hard datasets. Therefore, we use SBP as our main learning baseline in this paper.  
	
	\subsection{Comparative evaluation of HEM++}\label{exp:hem++}
	Compared with HEM, HEM++ has two main advantages. First, the formulation of HEM++ is more widely applicable. To demonstrate its superiority, we compare HEM++ with the baselines under the multiple rounds setting in Section \ref{exp:hem++_multiple}. Second, the model and training method of HEM++ are more powerful. To demonstrate the effectiveness of its model and training method, we compare HEM++ with HEM under the one round setting in Section \ref{exp:hem++_one}. 
	
	\subsubsection{Evaluation under the multiple rounds setting}\label{exp:hem++_multiple}
	In this part, we compare HEM++ with the baselines rather than HEM under the multiple rounds setting, as it is challenging for HEM to compute multi-step policy gradient. Note that we reimplement all the baselines to adapt to the multiple rounds setting. Please refer to Appendix \ref{appendix_imple_baselines_multi_rounds} for implementation details of the baselines under the multiple rounds setting. Specifically, we set the cut separation rounds as $T=10$, and compare HEM++ with the baselines on Maximum Independent Set (Easy), CORLAT (Medium), and MIPLIB mixed supportcase (Hard). The results in Table \ref{exp:evaluation_multiple_rounds} show that HEM significantly outperforms the baselines in terms of the Time and/or PD integral. This demonstrates that HEM++ is well applicable to the multiple rounds setting, suggesting the superiority of HEM++ over HEM.
	
	
	\subsubsection{Evaluation under the one round setting}\label{exp:hem++_one}
	To evaluate whether our proposed HEM++ further improves the performance of HEM, we compare HEM++ with HEM under the one round setting. Specifically, we compare HEM++ with HEM on three large-scale datasets, i.e., Anonymous, MIPLIB mixed neos, and MIPLIB mixed supportcase. Please refer to Appendix \ref{appendix_results_hem++} for results on more datasets. 
	The results in Tabel \ref{eva:HEM++} show that HEM++ outperforms HEM in terms of the Time and/or PD integral, demonstrating the effectiveness of HEM++. Specifically, the results suggest that the Set2Seq model in HEM++ well improves the representation of input cuts, and the HPPO in HEM++ further improves the sample-efficiency. 
	
	
	
	\subsection{Ablation study}\label{exp:ablation_study}
	In this subsection, we present ablation studies on Maximum Independent Set (MIS), CORLAT, and MIPLIB mixed neos, which are representative datasets from the easy, medium, and hard datasets, respectively. We provide more results on the other datasets in Appendix \ref{appendix_results_ablation_study}. 
	
	\begin{table*}[t]
		\centering
		\caption{Comparison between HEM and HEM without the higher-level model. The best performance is marked in bold. }
		\label{ablation_each_component}
		\resizebox{0.96\textwidth}{!}{
			\begin{tabular}{@{}cccccccccc@{}}
				\toprule
				\toprule
				& \multicolumn{3}{c}{Easy: Max Independent Set ($n=500,\,\,m=1953$)} & \multicolumn{3}{c}{Medium: Corlat ($n=466,\,\,m=486$)} & \multicolumn{3}{c}{Hard: MIPLIB mixed neos ($n=6958,\,\,m=5660$)} \\ \midrule
				\multirow{2}{*}{Method} & \multirow{2}{*}{Time(s) $\downarrow$} & \multirow{2}{*}{\begin{tabular}[c]{@{}c@{}}Improvement $\uparrow$\\      (Time, \%)\end{tabular}} & \multirow{2}{*}{PD integral $\downarrow$} & \multirow{2}{*}{Time(s) $\downarrow$} & \multirow{2}{*}{PD integral $\downarrow$} & \multirow{2}{*}{\begin{tabular}[c]{@{}c@{}}Improvement $\uparrow$\\      (PD integral, \%)\end{tabular}} & \multirow{2}{*}{Time(s) $\downarrow$} & \multirow{2}{*}{PD integral $\downarrow$} & \multirow{2}{*}{\begin{tabular}[c]{@{}c@{}}Improvement $\uparrow$\\      (PD integral, \%)\end{tabular}} \\
				&  &  &  &  &  &  &  &  &  \\ \cmidrule(r){1-4} \cmidrule(lr){5-7} \cmidrule(l){8-10}
				NoCuts & 8.78 (6.66) & NA & 71.31 (51.74) & 103.30 (128.14) & 2818.40 (5908.31) & NA & 253.65 (80.29) & 14652.29 (12523.37) & NA \\
				Default & 3.88 (5.04) & 55.81 & 29.44 (35.27) & 75.20 (120.30) & 2412.09 (5892.88) & 14.42 & 256.58 (76.05) & 14444.05 (12347.09) & 1.42 \\
				SBP & 2.43 (5.55) & 72.32 & 21.99 (40.86) & 70.41 (122.17) & 2023.87 (5085.96) & 28.19 & 256.48 (78.59) & 13531.00 (12898.22) & 7.65 \\ \midrule
				HEM w/o H & 1.88 (4.20) & 78.59 & 16.70 (28.15) & 63.14 (115.26) & 1939.08 (5484.83) & 31.20 & 249.21 (88.09) & 13614.29 (12914.76) & 7.08 \\
				HEM (Ours) & \textbf{1.76 (3.69)} & \textbf{79.95} & \textbf{16.01 (26.21)} & \textbf{58.31 (110.51)} & \textbf{1079.99 (2653.14)} & \textbf{61.68} & \textbf{248.66 (89.46)} & \textbf{8678.76 (12337.00)} & \textbf{40.77} \\ \bottomrule
			\end{tabular}
		}
	\end{table*}
	
	\begin{table*}[t]
		\centering
		\caption{Comparison between HEM, HEM-ratio, and HEM-ratio-order. The best performance is marked in bold. }
		\label{ablation_each_factor}
		\resizebox{0.96\textwidth}{!}{
			\begin{tabular}{@{}cccccccccc@{}}
				\toprule
				\toprule
				& \multicolumn{3}{c}{Easy: Maximum Independent Set ($n=500,\,\,m=1953$)} & \multicolumn{3}{c}{Medium: Corlat ($n=466,\,\,m=486$)} & \multicolumn{3}{c}{Hard: MIPLIB mixed neos ($n=6958,\,\,m=5660$)} \\ \midrule
				\multirow{2}{*}{Method} & \multirow{2}{*}{Time(s) $\downarrow$} & \multirow{2}{*}{\begin{tabular}[c]{@{}c@{}}Improvement $\uparrow$\\      (Time, \%)\end{tabular}} & \multirow{2}{*}{PD integral $\downarrow$} & \multirow{2}{*}{Time(s) $\downarrow$} & \multirow{2}{*}{PD integral $\downarrow$} & \multirow{2}{*}{\begin{tabular}[c]{@{}c@{}}Improvement $\uparrow$\\      (PD integral, \%)\end{tabular}} & \multirow{2}{*}{Time(s) $\downarrow$} & \multirow{2}{*}{PD integral $\downarrow$} & \multirow{2}{*}{\begin{tabular}[c]{@{}c@{}}Improvement $\uparrow$\\      (PD integral, \%)\end{tabular}} \\
				&  &  &  &  &  &  &  &  &  \\ \cmidrule(r){1-4} \cmidrule(lr){5-7} \cmidrule(l){8-10}
				NoCuts & 8.78 (6.66) & NA & 71.31 (51.74) & 103.30 (128.14) & 2818.40 (5908.31) & NA & 253.65 (80.29) & 14652.29 (12523.37) & NA \\
				Default & 3.88 (5.04) & 55.81 & 29.44 (35.27) & 75.20 (120.30) & 2412.09 (5892.88) & 14.42 & 256.58 (76.05) & 14444.05 (12347.09) & 1.42 \\
				SBP & 2.43 (5.55) & 72.32 & 21.99 (40.86) & 70.41 (122.17) & 2023.87 (5085.96) & 28.19 & 256.48 (78.59) & 13531.00 (12898.22) & 7.65 \\\midrule
				HEM-ratio-order & 2.30 (5.18) & 73.80 & 21.19 (38.52) & 70.94 (122.93) & 1416.66 (3380.10) & 49.74 & 245.99 (93.67) & 14026.75 (12683.60) & 4.27 \\
				HEM-ratio & 2.26 (5.06) & 74.26 & 20.82 (37.81) & 67.27 (117.01) & 1251.60 (2869.87) & 55.59 & \textbf{244.87 (95.56)} & 13659.93 (12900.59) & 6.77 \\
				HEM (Ours) & \textbf{1.76 (3.69)} & \textbf{79.95} & \textbf{16.01 (26.21)} & \textbf{58.31 (110.51)} & \textbf{1079.99 (2653.14)} & \textbf{61.68} & 248.66 (89.46) & \textbf{8678.76 (12337.00)} & \textbf{40.77} \\ \bottomrule
			\end{tabular}
		}
	\end{table*}
	
	\begin{table*}[t]
		\caption{Comparison between HEM++, HEM++ without HPPO, and HEM++ without Set2Seq.}
		\label{exp:hem++_ablation_study}
		\resizebox{\textwidth}{!}{
			\begin{tabular}{@{}cccccccccc@{}}
				\toprule
				\toprule
				& \multicolumn{3}{c}{Hard: Anonymous ($n=37881,\,\,m=49603$)} & \multicolumn{3}{c}{Hard: MIPLIB mixed neos ($n=6958,\,\,m=5660$)} & \multicolumn{3}{c}{Hard: MIPLIB mixed supportcase ($n=19766,\,\,m=19910$)} \\ \midrule
				\multirow{2}{*}{Method} & \multirow{2}{*}{Time (s) $\downarrow$} & \multirow{2}{*}{PD integral $\downarrow$} & \multirow{2}{*}{\begin{tabular}[c]{@{}c@{}}Improvement $\uparrow$\\      (\%, PD integral)\end{tabular}} & \multirow{2}{*}{Time (s) $\downarrow$} & \multirow{2}{*}{PD integral $\downarrow$} & \multirow{2}{*}{\begin{tabular}[c]{@{}c@{}}Improvement $\uparrow$\\      (\%, PD integral)\end{tabular}} & \multirow{2}{*}{Time (s) $\downarrow$} & \multirow{2}{*}{PD integral $\downarrow$} & \multirow{2}{*}{\begin{tabular}[c]{@{}c@{}}Improvement $\uparrow$\\      (\%, PD integral)\end{tabular}} \\
				&  &  &  &  &  &  &  &  &  \\ \midrule
				NoCuts & 246.22 (94.90) & 18297.30 (9769.42) & NA & 253.65 (80.29) & 14652.29 (12523.37) & NA & 170.00 (131.60) & 9927.96 (11334.07) & NA \\
				Default & 244.02 (97.72) & 17407.01 (9736.19) & 4.90 & 256.58 (76.05) & 14444.05 (12347.09) & 1.42 & 164.61 (135.82) & 9672.34 (10668.24) & 2.57 \\ 
				SBP & 245.71 (92.46) & 18188.63 (9651.85) & 0.59 & 256.48 (78.59) & 13531.00 (12898.22) & 7.65 & 165.61 (135.25) & 7408.65 (7903.47) & 25.37 \\ \midrule
				HEM & 241.68 (97.23) & 16077.15 (9108.21) & 12.10 & \textbf{248.66 (89.46)} & 8678.76 (12337.00) & 40.77 & 162.96 (138.21) & 6874.80 (6729.97) & 30.75 \\
				HEM++ w/o HPPO & \textbf{239.50 (104.24)} & 15848.32 (9870.58) & 13.38 & 254.38 (79.07) & 12574.79 (12548.1) & 14.18 & 154.66 (135.77) & 8440.19 (9655.38) & 14.99 \\
				HEM++ w/o Set2Seq & 250.19 (87.43) & 17144.41 (9084.42) & 6.30 & 258.80 (71.41) & 8627.41 (12353.81) & 41.12 & 169.43 (132.38) & 6737.99 (6518.49) & 32.13 \\
				HEM++ & 243.23 (96.62) & \textbf{15444.70 (9260.60)} & \textbf{15.59} & 252.07 (83.06) & \textbf{8557.46 (12395.07)} & \textbf{41.60} & \textbf{153.83 (131.55)} & \textbf{6586.04 (6313.25)} & \textbf{33.66} \\ \bottomrule
			\end{tabular}
		}
	\end{table*}
	
	\begin{table}[t]
		\caption{Evaluate the generalization ability of HEM on Set Covering and Maximum Independent Set.}
		\label{generalization_setcovering_mis}
		\resizebox{0.48\textwidth}{!}{
			\begin{tabular}{@{}ccccccc@{}}
				\toprule
				\toprule
				& \multicolumn{3}{c}{Set Covering ($n=1000,\,m=1000,\, 2\times$)} & \multicolumn{3}{c}{Set Covering ($n=1000,\,m=2000,\, 4\times$)} \\ \midrule
				\multirow{2}{*}{Method} & \multirow{2}{*}{Time(s) $\downarrow$} & \multirow{2}{*}{\begin{tabular}[c]{@{}c@{}}Improvement $\uparrow$\\      (time, \%)\end{tabular}} & \multirow{2}{*}{PD integral $\downarrow$} & \multirow{2}{*}{Time(s) $\downarrow$} & \multirow{2}{*}{\begin{tabular}[c]{@{}c@{}}Improvement $\uparrow$ \\      (time, \%)\end{tabular}} & \multirow{2}{*}{PD integral $\downarrow$} \\
				&  &  &  &  &  &  \\ \cmidrule(r){1-4} \cmidrule(l){5-7}
				NoCuts & 82.69 (78.27) & NA & 609.43 (524.92) & 284.44 (48.70) & NA & 3215.34 (1019.47) \\
				Default & 61.01 (78.12) & 26.22 & 494.63 (545.76) & 149.69 ( 141.92) & 47.37 & 1776.22 (1651.15) \\
				Random & 64.44 (73.98) & 22.07 & 520.84 (489.52) & 208.12 (131.52) & 26.53 & 2528.36 (1678.66) \\
				NV & 92.05 (80.11) & -11.32 & 725.53 (541.68) & 286.10 (45.47) & -0.58 & 3422.46 (1024.19) \\
				Eff & 92.32 (79.33) & -11.64 & 733.72 (538.60) & 286.20 (45.04) & -0.62 & 3437.06 (1043.44) \\\midrule
				SBP & 3.52 (1.36) & 95.74 & 92.89 (25.83) & 7.62 (6.46) & 97.32 & 256.79 (145.92) \\
				HEM (Ours) & \textbf{3.33 (0.47)} & \textbf{95.97} & \textbf{89.24 (14.26)} & \textbf{7.40 (5.03)} & \textbf{97.40} & \textbf{250.83 (131.43)} \\ \bottomrule
			\end{tabular}
		}
		\newline
		\vspace{2mm}
		\newline
		\resizebox{0.48\textwidth}{!}{
			\begin{tabular}{@{}ccccccc@{}}
				\toprule
				\toprule
				& \multicolumn{3}{c}{Maximum Independent Set ($n=1000,\,m=3946,\,4\times$)} & \multicolumn{3}{c}{Maximum Independent Set ($n=5940,\,m=1500,\,9\times$)} \\ \midrule
				\multirow{2}{*}{Method} & \multirow{2}{*}{Time(s) $\downarrow$} & \multirow{2}{*}{\begin{tabular}[c]{@{}c@{}}Improvement $\uparrow$\\      (time, \%)\end{tabular}} & \multirow{2}{*}{PD integral $\downarrow$} & \multirow{2}{*}{Time(s) $\downarrow$} & \multirow{2}{*}{\begin{tabular}[c]{@{}c@{}}Improvement $\uparrow$\\      (time, \%)\end{tabular}} & \multirow{2}{*}{PD integral $\downarrow$} \\
				&  &  &  &  &  &  \\\cmidrule(r){1-4} \cmidrule(l){5-7}
				NoCuts & 170.06 (100.61) & NA & 874.45 (522.29) & 300.00 (0) & NA & 2019.93 (353.27) \\
				Default & 42.40 (76.00) & 48.72 & 198.61 (331.20) & 111.18 (144.13) & 60.91 & 616.46 (798.94) \\
				Random & 118.25 (109.05) & -43.00 & 574.33 (516.11) & 245.13 (115.80) & 13.82 & 1562.20 (793.09) \\
				NV & 160.30 (101.41) & -93.86 & 784.98 (493.24) & 299.97 (0.49) & -5.46 & 1922.52 (349.67) \\
				Eff & 158.75 (100.40) & -91.98 & 779.63 (493.05) & 299.45 (3.77) & -5.28 & 1921.61 (361.26) \\\midrule
				SBP & 50.55 (89.14) & 38.87 & 253.81 (426.94) & 108.42 (143.68) & 61.88 & 680.41 (903.88) \\
				HEM (Ours) & \textbf{35.34 (67.91)} & \textbf{57.26} & \textbf{160.56 (282.03)} & \textbf{108.02 (143.02)} & \textbf{62.02} & \textbf{570.48 (760.65)} \\ \bottomrule
			\end{tabular}
		}
	\end{table}
	
	\subsubsection{Contribution of each component in HEM}
	We perform ablation studies to understand the contribution of each component in HEM. 
	We report the performance of HEM and HEM without the higher-level model (HEM w/o H) in Table \ref{ablation_each_component}. HEM w/o H is essentially a pointer network. Note that it can still implicitly predicts the number of cuts that should be selected by predicting an end token as used in language tasks \cite{seq2seq}. Please see Appendix \ref{appendix_imple_rlk} for details.  \textbf{First},
	the results in Table \ref{ablation_each_component} show that HEM w/o H outperforms all the baselines on MIS and CORLAT, demonstrating the advantages of the lower-level model. Although HEM w/o H outperforms Default on MIPLIB mixed neos, HEM w/o H performs on par with SBP. A possible reason is that it is difficult for HEM w/o H to explore the action space efficiently, and thus HEM w/o H tends to be trapped to local optimum. 
	\textbf{Second}, the results in Table \ref{ablation_each_component} show that HEM significantly outperforms HEM w/o H and the baselines on the three datasets. The results demonstrate that the higher-level model is important for efficient exploration in complex tasks, thus significantly improving the solving efficiency.
	

	\subsubsection{The importance of tackling \textbf{P1-P3} in cut selection}
	We perform ablation studies to understand the 
	importance of tackling \textbf{(P1)-(P3)} in cut selection.
	(1) \textbf{HEM.} HEM tackles \textbf{(P1)-(P3)} in cut selection simultaneously. (2) \textbf{HEM-ratio.} In order not to learn how many cuts should be selected, we remove the higher-level model of HEM and \textit{force the lower-level model to select a fixed ratio of cuts}. We denote it by HEM-ratio. Note that HEM-ratio is different from HEM w/o H (see Appendix \ref{appendix_imple_rlk}). HEM-ratio tackles \textbf{(P1)} and \textbf{(P3)} in cut selection. (3) \textbf{HEM-ratio-order.} To further mute the effect of the order of selected cuts, we reorder the selected cuts given by HEM-ratio with the original index of the generated cuts, which we denote by HEM-ratio-order. HEM-ratio-order mainly tackles \textbf{(P1)} in cut selection.
	The results in Table \ref{ablation_each_factor} suggest the following. HEM-ratio-order significantly outperforms Default and NoCuts, demonstrating that tackling \textbf{(P1)} by data-driven methods is crucial. HEM significantly outperforms HEM-ratio in terms of the PD integral, demonstrating the significance of tackling \textbf{(P2)}. HEM-ratio outperforms HEM-ratio-order in terms of the Time and the PD integral, which demonstrates the importance of tackling \textbf{(P3)}. Moreover, HEM-ratio and HEM-ratio-order perform better than SBP on MIS and CORLAT, demonstrating the advantages of using the sequence model to learn cut selection over SBP. HEM-ratio and HEM-ratio-order perform on par with SBP on MIPLIB mixed neos. We provide possible reasons in Appendix \ref{appendix_analysis_hem_ratio}.

	\subsubsection{Contribution of each component in HEM++}
	To further analyze the contribution of the Set2Seq model and HPPO training method in HEM++, we report the performance of HEM++, HEM++ without HPPO (HEM++ w/o HPPO), and HEM++ without Set2Seq (HEM++ w/o Set2Seq) under the one round setting. Note that HEM++ w/o HPPO  is equivalent to HEM with the Set2Seq model and HEM++ w/o Set2Seq  is equivalent to HEM with the HPPO training method. The results in Table \ref{exp:hem++_ablation_study} show that HEM++ significantly outperforms HEM++ w/o Set2Seq (up to 9.29\% improvement), demonstrating the contribution of the Set2Seq formulation over the Seq2Seq model. An interesting observation is that HEM++ w/o HPPO (i.e., HEM with Set2Seq) struggles to consistently outperform HEM on three hard datasets. A possible reason is that it is more difficult to train a multi-head attention encoder than a long short-term memory (LSTM), and thus it is important to train the model with a sample-efficient training method. Moreover, Table \ref{exp:hem++_ablation_study} shows that HEM++ significantly outperforms HEM++ w/o HPPO as well, demonstrating the importance of the sample-efficient training method. Moreover, we provide a detailed discussion on two major advantages of the Set2Seq model over the Seq2Seq model as follows. 
	\textbf{First}, the Set2Seq model can improve the representation of input candidate cuts by learning their \textit{order-independent} embeddings. Note that given the same instance and candidate cuts in different order, the corresponding optimal ordered subsets should be unchanged. Thus, it is preferable for cut selection policies to be agnostic to the order of input cuts. \textbf{Second}, we use an additional multi-head attention encoder, which is a core component of the Transformer \cite{transformer}, to instantiate the Set2Seq model. Compared to the LSTM encoder, it has been shown that the multi-head attention encoder offers significantly enhanced power and capability \cite{transformer}.

	\subsection{Generalization}\label{exp:generalization}
	We evaluate the ability of HEM to generalize across different sizes of MILPs. Let $n\times m$ denote the size of MILP instances. Following \cite{nips19_gcnn, rl_branch}, we test the generalization ability of HEM on Set Covering and Maximum Independent Set (MIS), as we can artificially generate instances with arbitrary sizes for synthetic MILP problems. Specifically, we test HEM on significantly larger instances than those seen during training on Set Covering and MIS.  
	The results in Table \ref{generalization_setcovering_mis} show that HEM significantly outperforms the baselines in terms of the Time and PD integral, demonstrating the superiority of HEM in terms of the generalization ability.
	Interestingly, SBP well generalizes to large instances as well, which demonstrates that SBP is a strong baseline.
	
	
	
	\begin{figure}[t]
		\centering
		\includegraphics[width=0.49\columnwidth]{./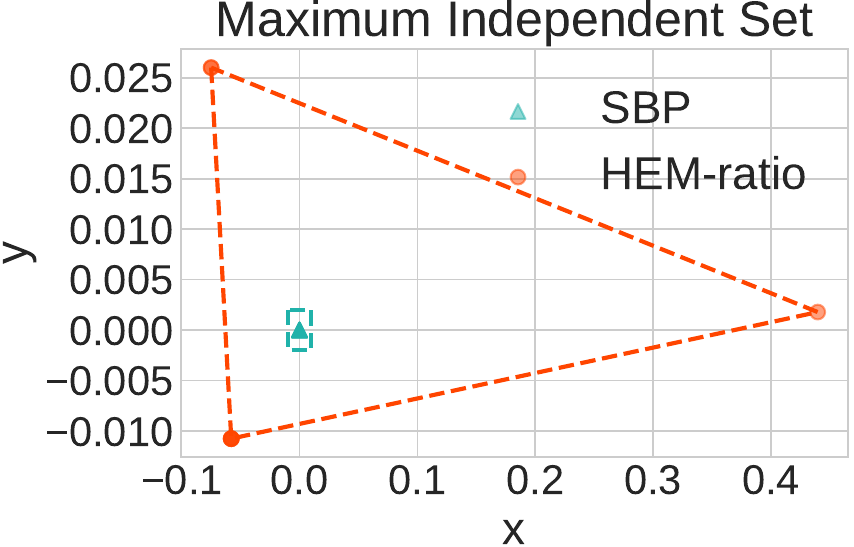}
		\includegraphics[width=0.49\columnwidth]{./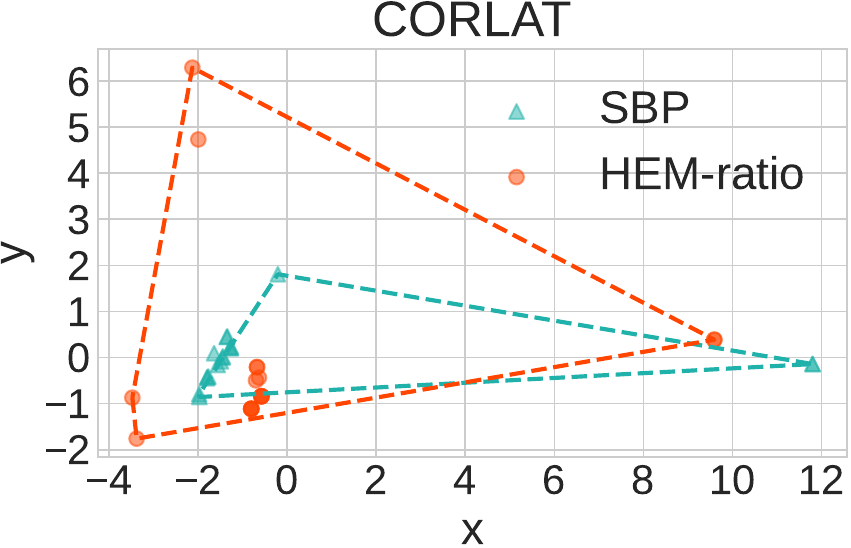}
		\caption{We perform principal component analysis on the cuts selected by HEM-ratio and SBP. Colored points illustrate the reduced cut features. The area covered by the dashed lines represents the diversity of selected cuts. The results show that HEM-ratio selects much more diverse cuts than SBP.}
		\label{fig:visualization}
	\end{figure}
	
	\begin{figure*}[t]
		\centering
		\includegraphics[width=0.27\textwidth,height=0.2\textwidth]{./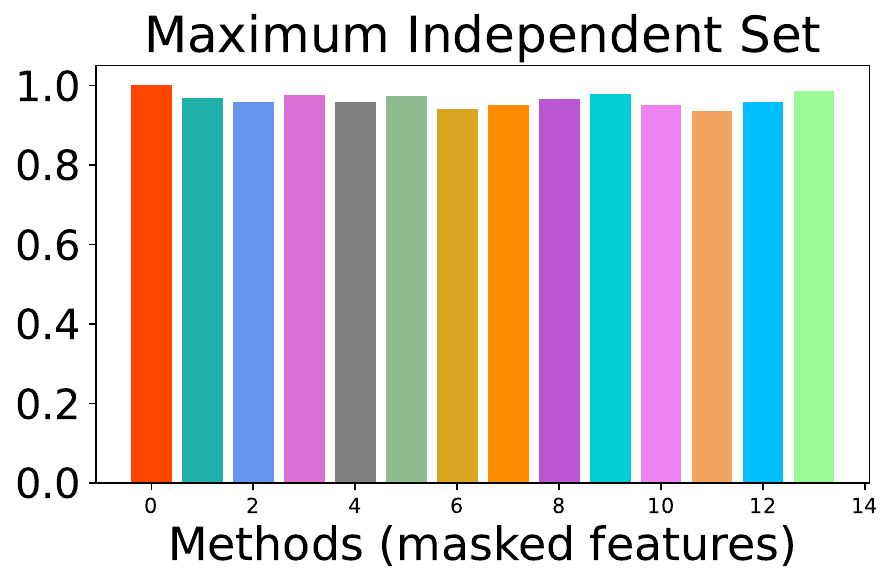}
		\hspace{-1.5mm}
		\includegraphics[width=0.27\textwidth,height=0.2\textwidth]{./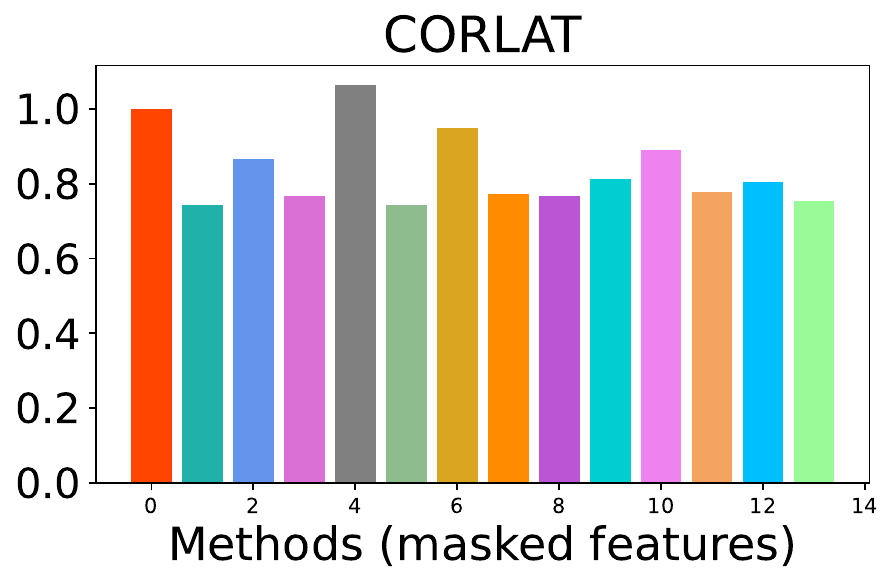}
		\hspace{-1.5mm}
		\includegraphics[width=0.45\textwidth,height=0.2\textwidth]{./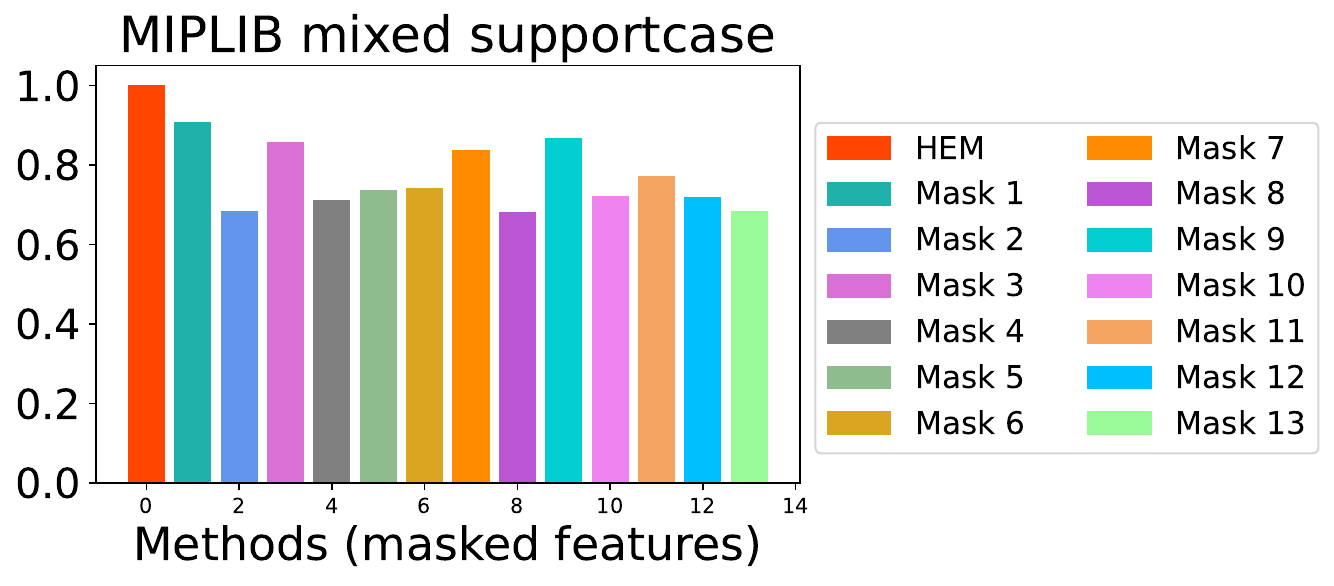}
		\caption{Evaluate the importance of the designed cut features. For the thirteen-dimensional features of a cut, we respectively set the $i$th dimension to zero, namely Mask $i$ ($i=1,2,\dots,13$). The results show the normalized performance of Mask $i$, defined by $\frac{M(\text{HEM})}{M(\text{Mask}\,\, i)}$. Here $M(\text{HEM})$ and $M(\text{Mask}\,\, i)$ denote the performance of HEM and Mask $i$, respectively.} 
		\label{fig:mask_visualization}
	\end{figure*}
	
	\begin{table*}[t]
		\centering
		\caption{Evaluation on Huawei production planning and order matching problems.}
		\label{evaluation_huawei_pp_om}
		\resizebox{0.96\textwidth}{!}{
			\begin{tabular}{@{}ccccccccc@{}}
				\toprule
				\toprule
				& \multicolumn{4}{c}{Production planning ($n=3582.25,\,\,m=5040.42$)} & \multicolumn{4}{c}{Order matching ($n=67839.68,\,\,m=31364.84$)} \\ \midrule
				\multirow{2}{*}{Method} & \multirow{2}{*}{Time (s) $\downarrow$} & \multirow{2}{*}{\begin{tabular}[c]{@{}c@{}}Improvement $\uparrow$ \\      (Time, \%)\end{tabular}} & \multirow{2}{*}{PD integral $\downarrow$} & \multirow{2}{*}{\begin{tabular}[c]{@{}c@{}}Improvement $\uparrow$\\      (PD integral, \%)\end{tabular}} & \multirow{2}{*}{Time (s) $\downarrow$} & \multirow{2}{*}{\begin{tabular}[c]{@{}c@{}}Improvement $\uparrow$ \\      (Time, \%)\end{tabular}} & \multirow{2}{*}{PD integral $\downarrow$} & \multirow{2}{*}{\begin{tabular}[c]{@{}c@{}}Improvement$\uparrow$\\      (PD integral, \%)\end{tabular}} \\
				&  &  &  &  &  &  &  &  \\ \cmidrule(r){1-5} \cmidrule(l){6-9}
				NoCuts & 278.79 (231.02) & NA & 17866.01 (21309.85) & NA & 248.42 (287.29) & NA & 403.41 (345.51) & NA \\
				Default & 296.12 (246.25) & -6.22 & 17703.39 (21330.40) & 0.91 & 129.34 (224.24) & 47.93 & 395.80 (341.23) & 1.89 \\
				Random & 280.18 (237.09) & -0.50 & 18120.21 (21660.01) & -1.42 & 95.76 (202.23) & 61.45 & 406.73 (348.30) & -0.82 \\
				NV & 259.48 (227.81) & 6.93 & 17295.18 (21860.07) & 3.20 & 245.61 (282.04) & 1.13 & 406.69 (347.72) & -0.81 \\
				Eff & 263.60 (229.24) & 5.45 & 16636.52 (21322.89) & 6.88 & 243.30 (276.65) & 2.06 & 417.71 (360.13) & -3.54 \\ \midrule
				SBP & 276.61 (235.84) & 0.78 & 16952.85 (21386.07) & 5.11 & 44.14 (148.60) & 82.23 & 379.23 (326.86) & 5.99 \\
				HEM (Ours) & \textbf{241.77 (229.97)} & \textbf{13.28} & \textbf{15751.08 (20683.53)} & \textbf{11.84} & \textbf{43.88 (148.66)} & \textbf{82.34} & \textbf{368.51 (309.02)} & \textbf{8.65} \\ \bottomrule
			\end{tabular}
		}
	\end{table*}
	
	\subsection{Visualization and explainability analysis}\label{exp:visu_and_explain}
	\subsubsection{Diversity of selected cuts}\label{sec:visu_diversity}
	We visualize the diversity of selected cuts, an important metric for evaluating whether the selected cuts complement each other nicely \cite{theoretical_cuts}. We visualize the cuts selected by HEM-ratio and SBP on a randomly sampled instance from Maximum Independent Set and CORLAT, respectively. We evaluate HEM-ratio rather than HEM, as HEM-ratio selects the same number of cuts as SBP. Furthermore, we perform principal component analysis on the selected cuts to reduce the cut features to two-dimensional space. Colored points illustrate the reduced cut features. To visualize the diversity of selected cuts, we use dashed lines to connect the points with the smallest and largest x,y coordinates. That is, \textit{the area covered by the dashed lines represents the diversity}.
	Fig. \ref{fig:visualization} shows that SBP tends to select many similar cuts that are possibly redundant, especially on Maximum Independent Set. In contrast, HEM-ratio selects much more diverse cuts that can well complement each other. Please refer to Appendix \ref{appendix_results_visualization} for results on more datasets.

	\subsubsection{Analysis on structure of selected cuts}\label{sec:visu_structure}
	
	To provide further insight into learned policies by HEM, we analyze categories of the selected cuts by HEM on problems with knapsack constraints (specific structured problems). The results (see Appendix \ref{appendix_results_visualization_structure}) show that our learned model selects 92.03\%  and 99.5\% cover inequalities (cover cuts) on Multiple Knapsack and MIK, respectively. It has been shown that a prominent category of cuts for knapsack problems is cover cuts \cite{lift_cover,flow_cover}. Therefore, the results suggest that our learned policies can well capture the underlying
	structure of specific structured problems, such as knapsack problems. 
	
	\subsubsection{Feature analysis}
	To guide heuristics design for the cut selection, we conduct the following experiments to analyze the importance of features on the cut selection. Specifically, we evaluate the importance of each dimension feature by masking it. For the thirteen-dimensional features of a cut, we respectively set the $i$-th dimension to zero, namely Mask $i$ ($i=1,2,\dots,13$). Please refer to Appendix \ref{appendix_cut_features} for details of the thirteen-dimensional features.  
	Fig \ref{fig:mask_visualization} shows the \textit{normalized performance} of Mask $i$ on Maximum Independent Set (MIS), CORLAT, and MIPLIB mixed supportcase. We define the \textit{normalized performance} of Mask $i$ by $\frac{M(\text{HEM})}{M(\text{Mask}\,\, i)}$, where $M(\text{HEM})$ and $M(\text{Mask}\,\, i)$ denote the performance of HEM and Mask $i$, respectively. That is, the lower the normalized performance, the higher the corresponding Time and/or PD integral, indicating that the corresponding feature has more significant impact on cut selection. 
	The results in Fig \ref{fig:mask_visualization} suggest the following conclusions. 
	First, most of the features have a significant impact on the performance, showing the effectiveness of our designed thirteen-dimensional features.
	Second, the 1st to 5th dimension features are widely used for heuristic design, while the results show that the 6th to 13th dimension features have a more significant impact on the cut selection. Therefore, incorporating the 6th to 13th dimension features into the heuristic design is important to improve the existing heuristics.
	Third, masking the 4th dimension feature improves the performance on CORLAT, suggesting that dropping the 4th dimension feature is beneficial when designing heuristics for the CORLAT dataset.

	\begin{table*}[t]
		\caption{Comparison between Default+ and Default. The best performance is marked in bold.}
		\label{default_hem_results}
		\centering
		\resizebox{0.96\textwidth}{!}{
			\begin{tabular}{@{}cccccccccc@{}}
				\toprule
				\toprule
				& \multicolumn{3}{c}{Anonymous ($n=37881,\,\,m=49603$)} & \multicolumn{3}{c}{Production planning ($n=3582.25,\,\,m=5040.42$)} & \multicolumn{3}{c}{Order matching ($n=67839.68,\,\,m=31364.84$)} \\ \midrule
				\multirow{2}{*}{Method} & \multirow{2}{*}{Time (s) $\downarrow$} & \multirow{2}{*}{PD integral $\downarrow$} & \multirow{2}{*}{\begin{tabular}[c]{@{}c@{}}Improvement $\uparrow$ \\      (\%, PD integral)\end{tabular}} & \multirow{2}{*}{Time (s) $\downarrow$} & \multirow{2}{*}{PD integral $\downarrow$} & \multirow{2}{*}{\begin{tabular}[c]{@{}c@{}}Improvement $\uparrow$\\      (\%, PD integral)\end{tabular}} & \multirow{2}{*}{Time (s) $\downarrow$} & \multirow{2}{*}{PD integral $\downarrow$} & \multirow{2}{*}{\begin{tabular}[c]{@{}c@{}}Improvement $\uparrow$ \\      (\%, PD integral)\end{tabular}} \\
				&  &  &  &  &  &  &  &  &  \\ \cmidrule(r){1-4} \cmidrule(l){5-7} \cmidrule(l){8-10}
				NoCuts & 246.22 (94.90) & 18297.31 (9769.42) & NA & \textbf{278.79 (231.02)} & 17866.01 (21309.86) & NA & 248.42 (287.29) & 403.41 (345.51) & NA \\
				Default & 244.02 (97.72) & 17252.12 (9736.19) & 5.71 & 296.12 (246.25) & 17703.39 (21330.4) & 0.91 & 129.34 (224.24) & 395.80 (341.23) & 1.89 \\ 
				Default+ & \textbf{242.59 (100.09)} & \textbf{16439.48 (9586.23)} & \textbf{10.15} & 280.40 (241.69) & \textbf{17098.54 (21347.24)} & \textbf{4.30} & \textbf{129.04 (224.23)} & \textbf{380.27 (333.05)} & \textbf{5.74} \\ \bottomrule
			\end{tabular}
		}
	\end{table*}
	
	\subsection{Deployment in real-world challenging problems}\label{exp:deployment}
	
	\subsubsection{Deploying learned policies}
	
	To further evaluate the effectiveness of our proposed HEM, we deploy HEM to large-scale real-world production planning and order matching problems at Huawei, which is one of the largest global commercial technology enterprises.  
	Please refer to Appendix \ref{appendix_datasets_sc} for details of the problems.
	The results in Table \ref{evaluation_huawei_pp_om} show that HEM significantly outperforms all the baselines in terms of the Time and PD integral (up to 82.34\% improvement). 
	The results demonstrate the strong ability to enhance modern MILP solvers with HEM in real-world applications. Interestingly, Default performs poorer than NoCuts in production planning problems, which implies that an improper cut selection policy could significantly degrade the performance of MILP solvers. 
	
	\subsubsection{Extracting order rules from our learned policies}\label{exp:order_rules}
	
	To extract order rules from HEM as proposed in Section \ref{method:order_heuristics}, we obtain the cutting plane categories with the highest number of times that is located at the first, second, and third position, respectively. We evaluate Default+ on three hard datasets as shown in Table \ref{default_hem_results}. Please refer to Appendix \ref{appendix_order_rules_results} for the statistical results and more evaluation results.  
	The results show that Default+ outperforms Default in terms of the Time and PD integral (up to 4.44$\%$ improvement), which demonstrates the effectiveness of our extracted order rules. Therefore, Default+ is effective for simple and efficient deployment of our method into modern MILP solvers.

	
	

	\section{Conclusion}
	In this paper, we observe from extensive empirical results that the order of selected cuts has a significant impact on the efficiency of solving MILPs. We propose a novel \textbf{h}ierarchical s\textbf{e}quence/s\textbf{e}t \textbf{m}odel (HEM) to learn cut selection policies via reinforcement learning. Specifically, HEM is a two-level model: (1) a higher-level model to learn the number of cuts that should be selected, (2) and a lower-level model---that formulates the cut selection task as a sequence/set to sequence learning problem---to learn policies selecting an ordered subset with the cardinality determined by the higher-level model. Experiments show that HEM significantly improves the efficiency of solving MILPs compared to competitive baselines on both synthetic and large-scale real-world MILPs. We believe that our proposed approach brings new insights into learning cut selection. 
	
	
	%

	\ifCLASSOPTIONcompsoc
	\section*{Acknowledgments}
	\else
	\section*{Acknowledgment}
	\fi
	
	The authors would like to thank the associate editor and all the anonymous reviewers for their insightful comments. This work was supported in part by National Key R\&D Program of China under contract 2022ZD0119801, National Nature Science Foundations of China grants U19B2026, U19B2044, 61836011, 62021001, and 61836006.
	
	\ifCLASSOPTIONcaptionsoff
	\newpage
	\fi

	
	
	\bibliographystyle{IEEEtran}
	\bibliography{IEEEtran}

	\begin{IEEEbiography}[{\includegraphics[width=1in,height=1.25in,clip,keepaspectratio]{./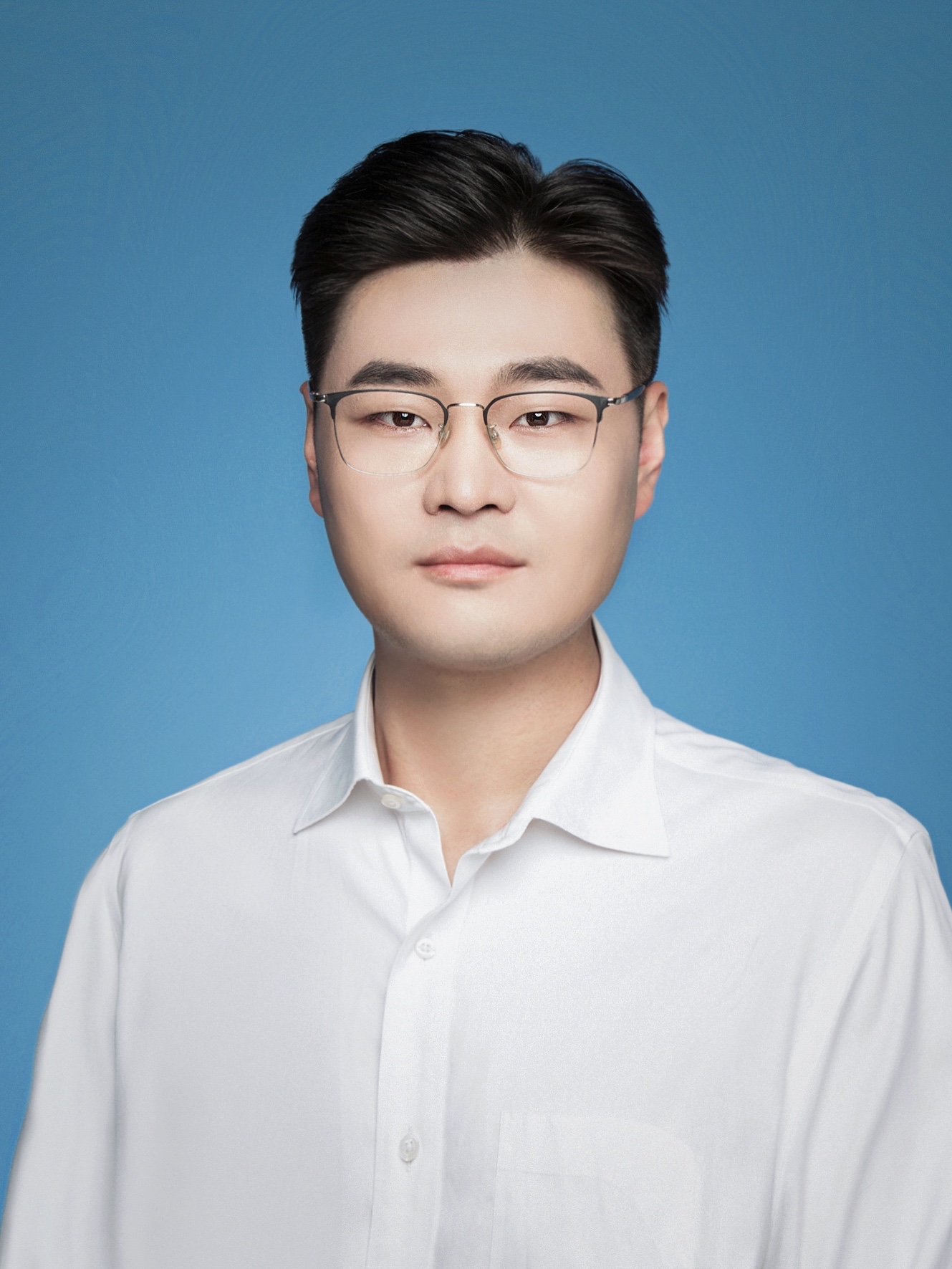}}]{Jie Wang}
		received the B.Sc. degree in electronic information science and technology from University of Science and Technology of China, Hefei, China, in 2005, and the Ph.D. degree in computational science from the Florida State University, Tallahassee, FL, in 2011. He is currently a professor in the Department of Electronic Engineering and Information Science at University of Science and Technology of China, Hefei, China. His research interests include reinforcement learning, knowledge graph, large-scale optimization, deep learning, etc.  He is a senior member of IEEE.
	\end{IEEEbiography}
	\begin{IEEEbiography}[{\includegraphics[width=1in,height=1.25in,clip,keepaspectratio]{./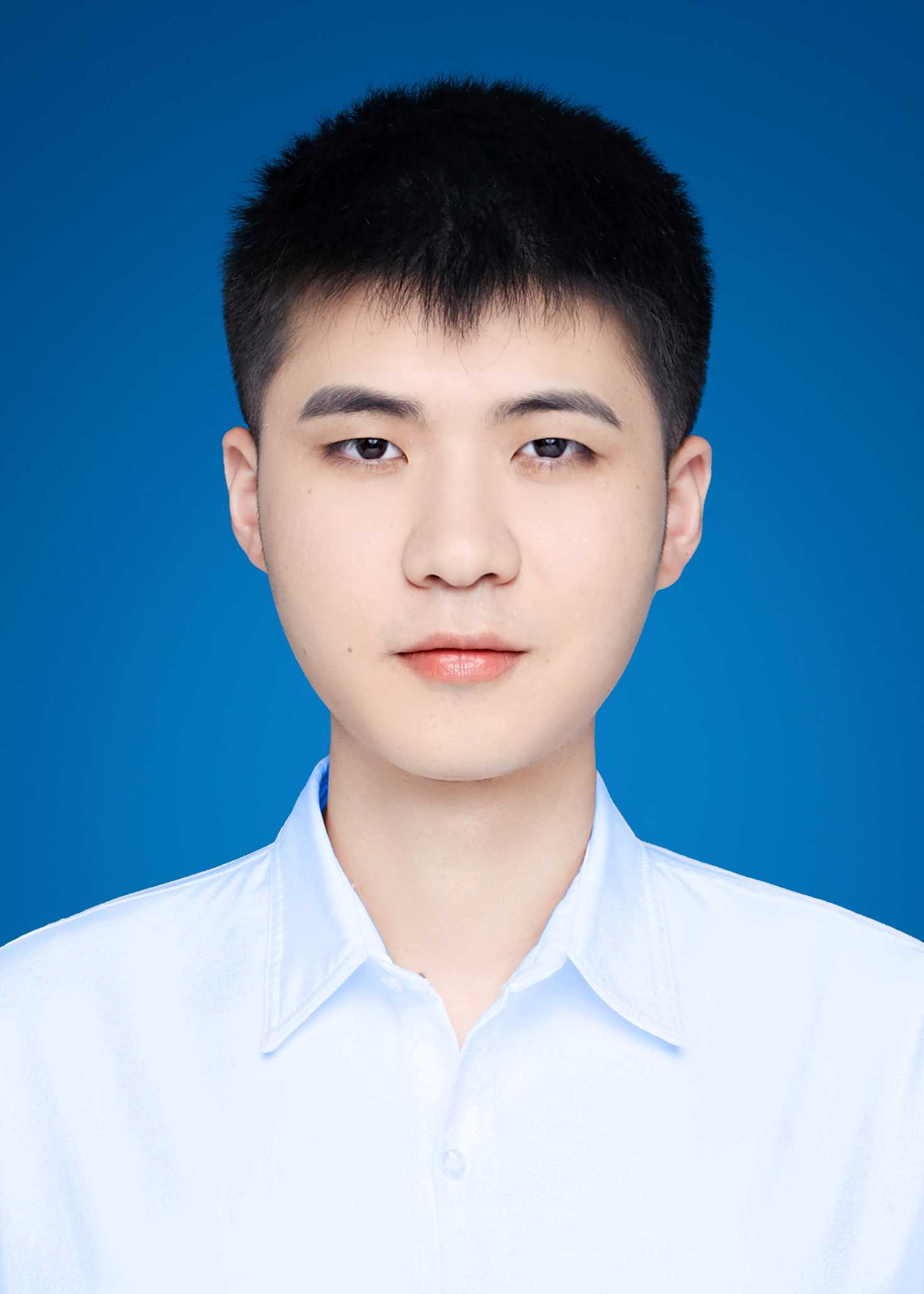}}]{Zhihai Wang} 
		received the B.Sc. degree in Electrical and Electronic Engineering from Huazhong University of Science and Technology, Wuhan, China, in 2020. He is currently a Ph.D. candidate in the School of Data Science at University of Science and Technology of China, Hefei, China. His research interests include reinforcement learning and learning to optimize.
	\end{IEEEbiography}
	\begin{IEEEbiography}[{\includegraphics[width=1in,height=1.25in,clip,keepaspectratio]{./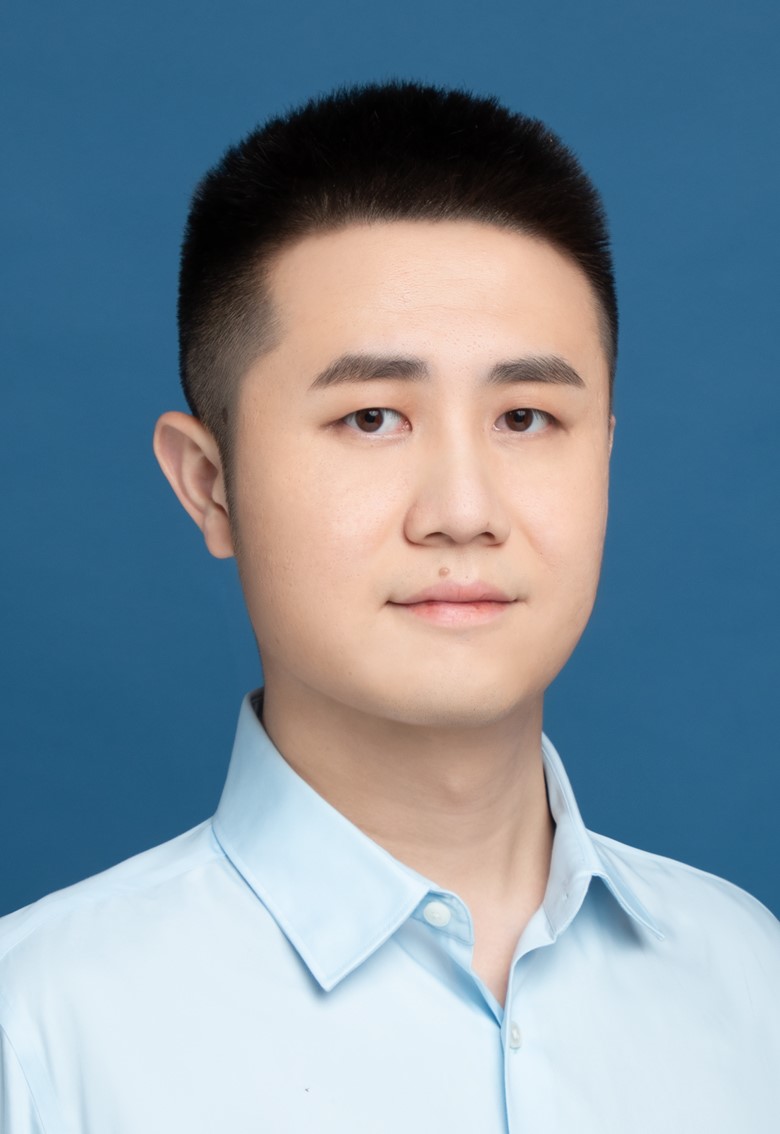}}]{Xijun Li} 
		is a senior researcher of HUAWEI Noah’s Ark Lab. Before that, he has received M.Sc. degree from Shanghai Jiao Tong University, P.R. China, in 2018. He is working towards his Ph.D. degree in the University of Science and Technology of China (HUAWEI-USTC Joint Ph.D. Program) under the supervision of Prof. Jie Wang. He has published 10+ papers on top peer-reviewed conferences and journals (such as ICLR, KDD, ICDE, SIGMOD, DAC, CIKM, ICDCS, TCYB, etc.) and applied/published 12 patents (first author in 6 patents) with Noah's Ark Lab. And he has won the championship of student learderboard in Dual Track of NeurIPS’21 ML4CO Competition. His recent research interests focus on Mathematical Programming Solver, Learning to Optimization (L2O) and Machine Learning for Computer System (ML4CS).
	\end{IEEEbiography}
	\begin{IEEEbiography}[{\includegraphics[width=1in,height=1.25in,clip,keepaspectratio]{./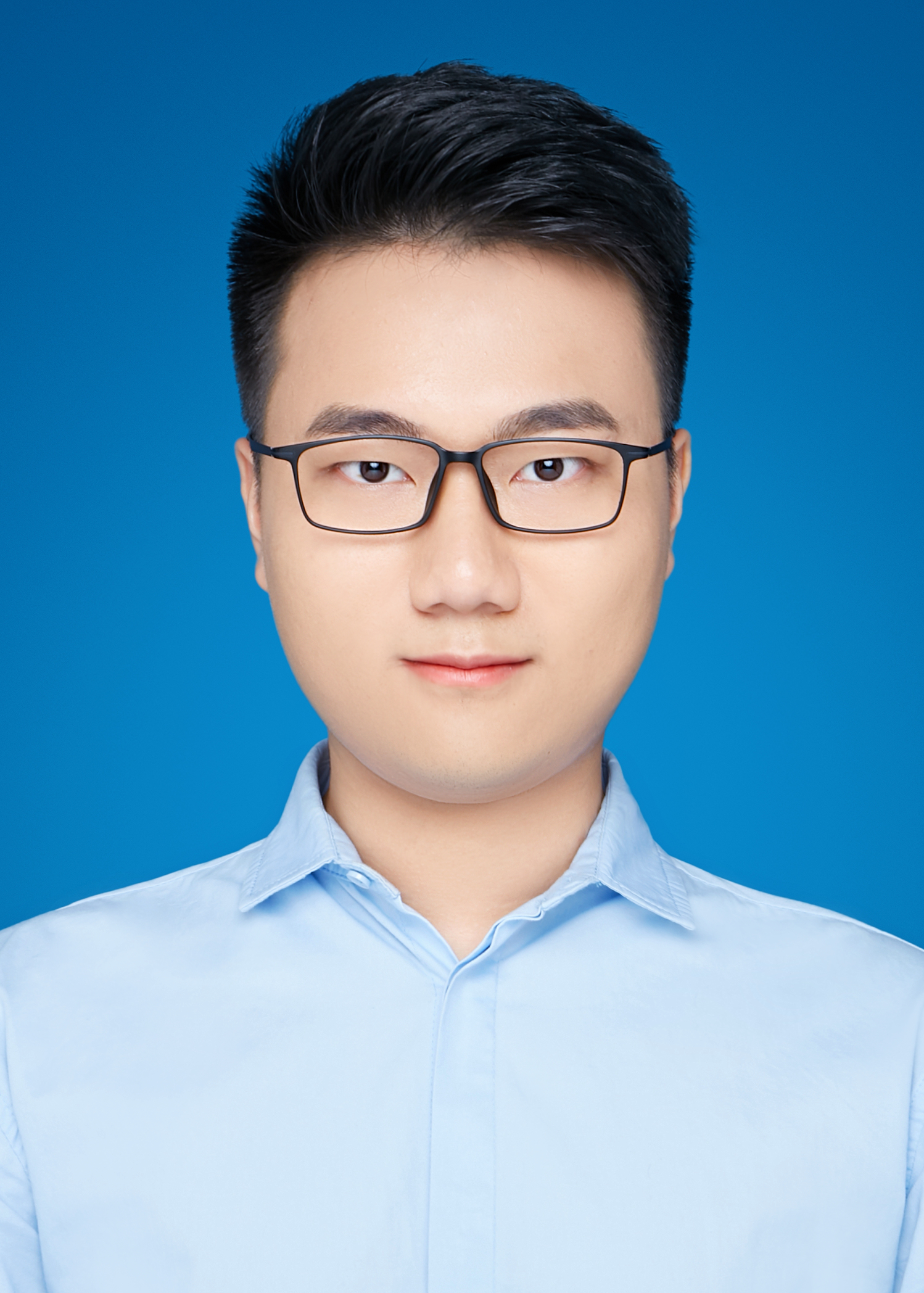}}]{Yufei Kuang} 
		received the B.Sc. degree in Statistics from Nanjing University, Nanjing, China, in 2020. He is currently a Ph.D. candidate in Department of Electronic Engineering and Information Science at University of Science and Technology of China, Hefei, China. His research interests include reinforcement learning and learning to optimize.
	\end{IEEEbiography}
	\begin{IEEEbiography}[{\includegraphics[width=1in,height=1.25in,clip,keepaspectratio]{./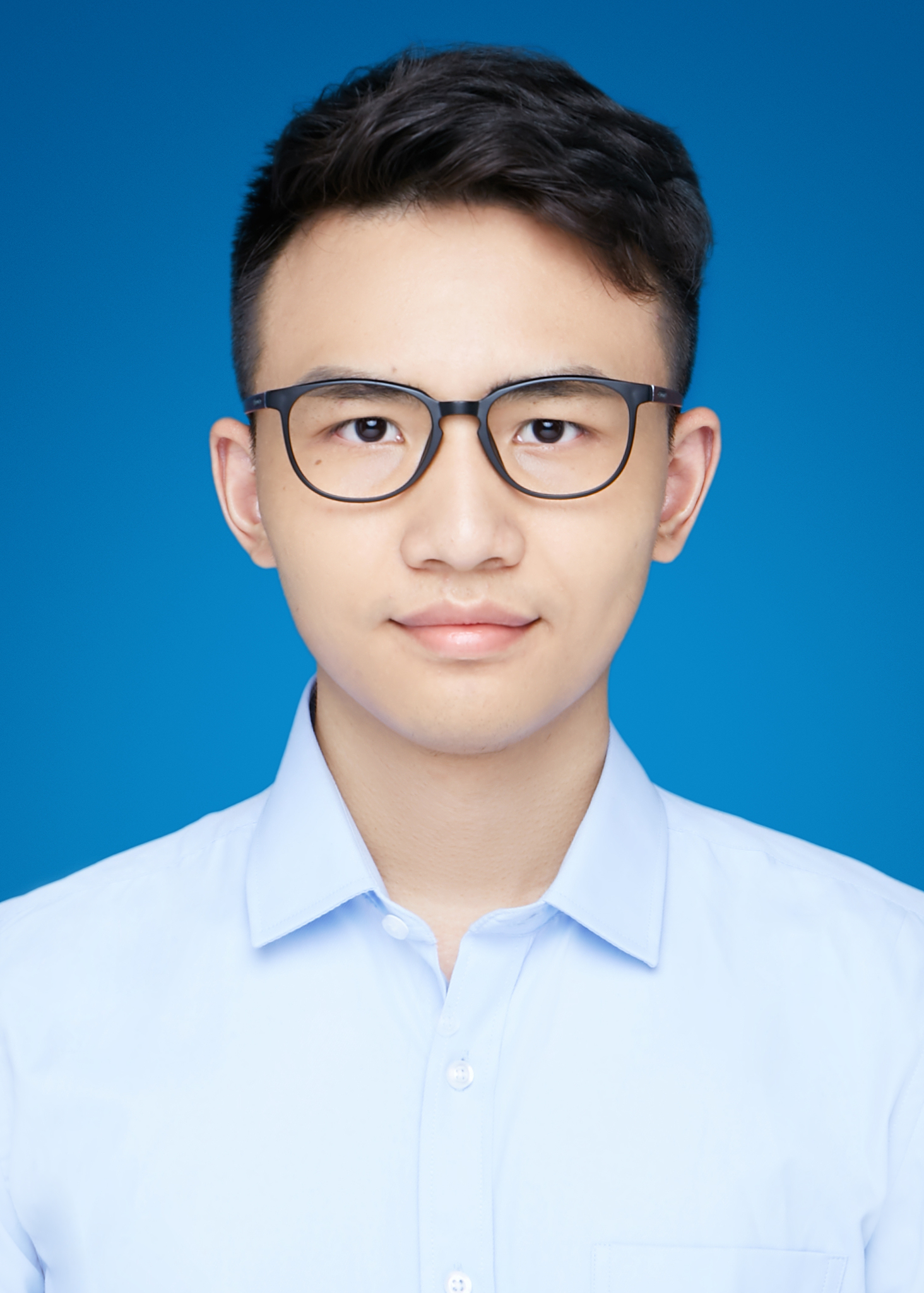}}]{Zhihao Shi}
		received the B.Sc. degree in Department of Electronic Engineering and Information Science from University of Science and Technology of China, Hefei, China, in 2020. a Ph.D. candidate in the Department of Electronic Engineering and Information Science at University of Science and Technology of China, Hefei, China. His research interests include graph representation learning and natural language processing.
	\end{IEEEbiography}
	\begin{IEEEbiography}[{\includegraphics[width=1in,height=1.25in,clip,keepaspectratio]{./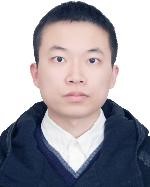}}]{Fangzhou Zhu} 
		is currently a principal engineer in Noah's Ark Lab, Huawei. He received the M.S degree from Soochow University. His current research interests include AI aided linear/mixed integer programming solver and its applications in supply chain management, especially for large scale industrial production planning problem. He is the key member of several supply chain optimization projects in Huawei.
	\end{IEEEbiography}
	\begin{IEEEbiography}[{\includegraphics[width=1in,height=1.25in,clip,keepaspectratio]{./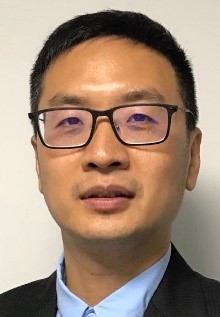}}]{Mingxuan Yuan} 
		is currently a principal researcher of Huawei Noah’s Ark Lab, Hong Kong. His research interests include data-driven optimization algorithms, data-driven SAT/MIP solving algorithms and data-driven EDA algorithm.
	\end{IEEEbiography}
	\begin{IEEEbiography}[{\includegraphics[width=1in,height=1.25in,clip,keepaspectratio]{./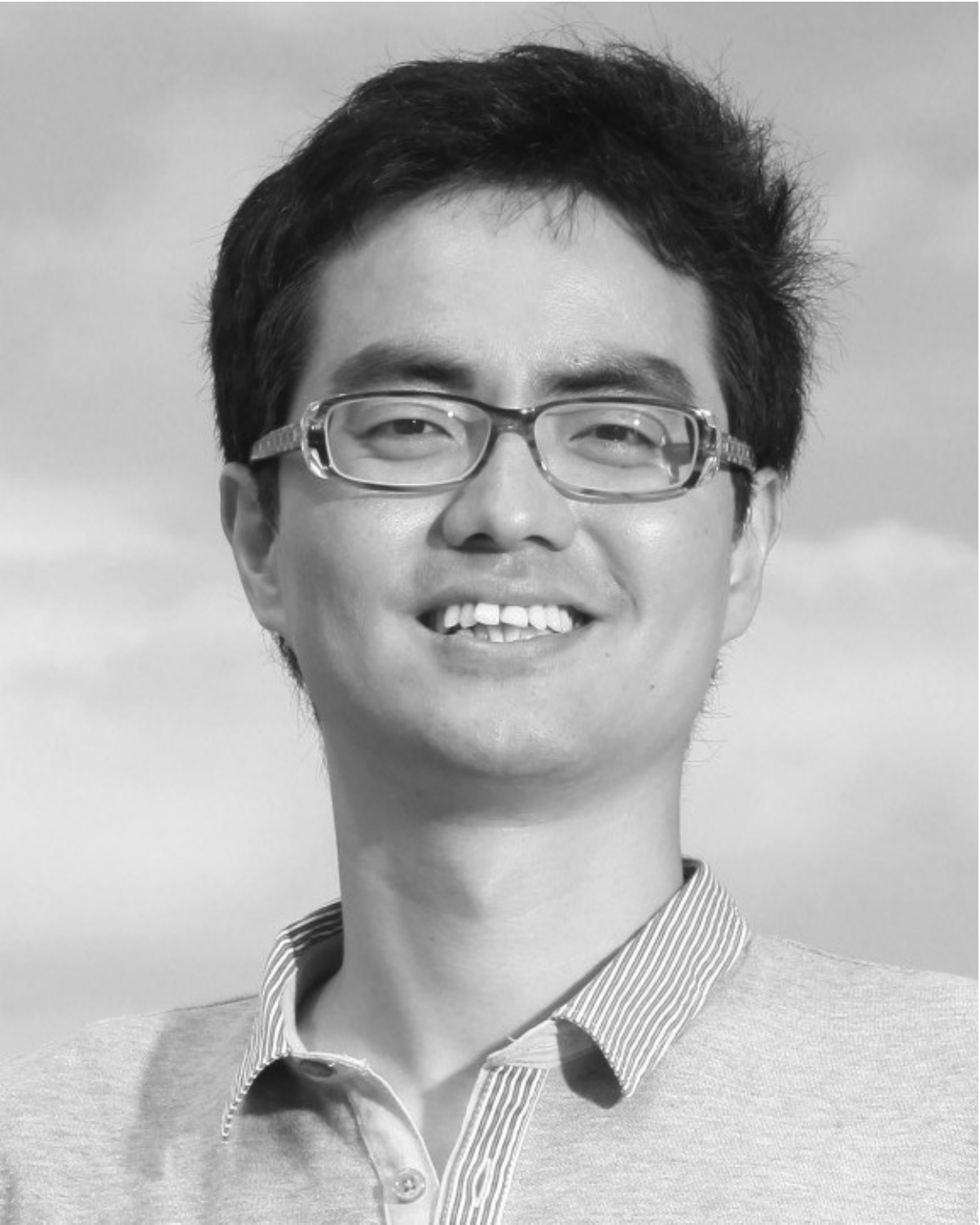}}]{Jia Zeng} 
		received the B.Eng. degree from the Wuhan University of Technology, Wuhan, China, in 2002,
		and the Ph.D. degree from the City University of Hong Kong, Hong Kong, in 2007.
		He is currently the AI chief scientist for enterprise intelligence (e.g., supply chain management) at Huawei Noah's Ark Lab, Hong Kong.
		He is also the director of the AI Strategic Planning and Cooperation Department.
		His research interests are foundation models, general problem solvers and embodied AI.
		He is a member of the CCF, the ACM and a senior member of the IEEE.
	\end{IEEEbiography}
	\begin{IEEEbiography}[{\includegraphics[width=1in,height=1.25in,clip,keepaspectratio]{./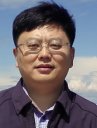}}]{Yongdong Zhang} 
		received the Ph.D. degree in electronic engineering from Tianjin University, Tianjin, China, in 2002. He is currently a Professor at the University of Science and Technology of China. He has authored more than 100 refereed journal articles and conference papers. His current research interests include multimedia content analysis and understanding, multimedia content security, video encoding, and streaming media technology. He serves as an Editorial Board Member of Multimedia Systems journal and Neurocomputing. He was the recipient of the Best Paper Award in PCM 2013, ICIMCS 2013, and ICME 2010, and the Best Paper Candidate in ICME 2011. He is a Senior Member of IEEE.
	\end{IEEEbiography}
	\begin{IEEEbiography}[{\includegraphics[width=1in,height=1.25in,clip,keepaspectratio]{./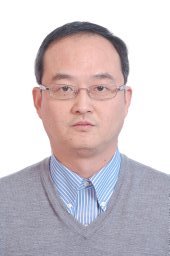}}]{Feng Wu} 
		received the B.S. degree in electrical engineering from Xidian University in 1992, and the M.S. and Ph.D. degrees in computer science from the Harbin Institute of Technology in 1996 and 1999, respectively. He is currently a Professor with the University of Science and Technology of China, where he is also the Dean of the School of Information Science and Technology. Before that, he was a Principal Researcher and the Research Manager with Microsoft Research Asia. His research interests include image and video compression, media communication, and media analysis and synthesis. He has authored or coauthored over 200 high quality articles (including several dozens of IEEE Transaction papers) and top conference papers on MOBICOM, SIGIR, CVPR, and ACM MM. He has 77 granted U.S. patents. His 15 techniques have been adopted into international video coding standards. As a coauthor, he received the Best Paper Award at 2009 IEEE Transactions on Circuits and Systems for Video Technology, PCM 2008, and SPIE VCIP 2007. He also received the Best Associate Editor Award from IEEE Circuits and Systems Society in 2012. He also serves as the TPC Chair for MMSP 2011, VCIP 2010, and PCM 2009, and the Special Sessions Chair for ICME 2010 and ISCAS 2013. He serves as an Associate Editor for IEEE Transactions on Circuits and Systems for Video Technology, IEEE Transactions ON Multimedia, and several other international journals.
	\end{IEEEbiography}


	
	
	
	

	\clearpage
	\newpage
	
	\appendices

	\section{Proof}\label{appendix_proof}
	\subsection{Proof of Proposition \ref{proof_hpg}}\label{appendix_proof_hpg_one_round}
	\begin{proof}
		Please refer to Appendix A in our previous version in \url{https://arxiv.org/abs/2302.00244}.
	\end{proof}

	\subsection{Proof of Proposition \ref{proof_hpg_multi_rounds}}\label{appendix_proof_hpg_multi_rounds}
	\begin{proof}
		The optimization objective takes the form of
		\begin{align*}
			J(\theta) = \mathbb{E}_{s_0,k_0,a_{0}^k\dots}[\sum_{t=0}^{\infty} \gamma^t r(s_t,k_t,a_{t}^k)],
		\end{align*}
		where $s_0\sim \mu$, $(k_t,a_t^k)\sim \pi_{\theta}(\cdot|s_t)$, $s_{t+1}=f(s_t,a_t)$, $\pi_{\theta}(k_t,a_t^k|s_t) = \pi^h_{\theta_1}(k_t|s_t) \cdot \pi^l_{\theta_2}(a_t^k|s_t,k_t)$, and $\gamma\in (0,1]$ is the discount factor. Then we derive a hierarchical policy gradient under the multiple rounds setting as follows. 
		
		We first rewrite the objective as follows
		\begin{align*}
			J([\theta_1,\theta_2]) & = \mathbb{E}_{s_0\sim \mu} [V^{\pi_{\theta}}(s_0)],
		\end{align*}
		where $V^{\pi_{\theta}}(s_0)$ denotes the state-value function. 
		
		We then compute the policy gradient for $\theta_1$:
		\begin{align*}
			& \nabla_{\theta_1} J([\theta_1,\theta_2])  \\
			& = \mathbb{E}_{s_0\sim\mu}[\nabla_{\theta_1} \int_{(k_0,a_0^{k_0})} \pi_{\theta}(k_0,a_0^{k_0}|s_0) Q^{\pi_{\theta}}(s_0,k_0,a_0^{k_0})d(k_0,a_0^{k_0})] \\
			& = \mathbb{E}_{s_0\sim\mu}[\int_{(k_0,a_0^{k_0})}\nabla_{\theta_1}  \pi_{\theta}(k_0,a_0^{k_0}|s_0) Q^{\pi_{\theta}}(s_0,k_0,a_0^{k_0})d(k_0,a_0^{k_0})] \\
			& + \mathbb{E}_{s_0\sim\mu}[\int_{(k_0,a_0^{k_0})} \pi_{\theta}(k_0,a_0^{k_0}|s_0) \nabla_{\theta_1}Q^{\pi_{\theta}}(s_0,k_0,a_0^{k_0})d(k_0,a_0^{k_0}) ] \\
			& = \mathbb{E}_{s_0\sim\mu, (k_0,a_0^{k_0})\sim\pi_{\theta}}[\nabla_{\theta_1} \log(\pi_{\theta_1}^h(k_0|s_0))Q^{\pi_{\theta}}(s_0,k_0,a_0^{k_0})] \\
			& + \gamma \mathbb{E}_{s_0\sim\mu, (k_0,a_0^{k_0})\sim\pi_{\theta},s_1= f}[\nabla_{\theta_1} V^{\pi_{\theta}}(s_1)] \\
			& = \mathbb{E}_{s_0\sim\mu, (k_0,a_0^{k_0})\sim\pi_{\theta}}[\nabla_{\theta_1} \log(\pi_{\theta_1}^h(k_0|s_0))Q^{\pi_{\theta}}(s_0,k_0,a_0^{k_0}) \\
			& + \gamma \mathbb{E}_{s_1=f, (k_1,a_1^{k_1})\sim\pi_{\theta}}[
			\nabla_{\theta_1} \log(\pi_{\theta_1}^h(k_1|s_1))Q^{\pi_{\theta}}(s_1,k_1,a_1^{k_1})
			]] \\
			& + \gamma^2 \mathbb{E}_{s_0\sim\mu, (k_0,a_0^{k_0})\sim\pi_{\theta},s_1=f,(k_1,a_1^{k_1})\sim\pi_{\theta},s_2= f}[\nabla_{\theta_1} V^{\pi_{\theta}}(s_2)] \\
			& = \cdots\cdots \\
			& = \mathbb{E}_{s\sim\rho^{\pi_{\theta}},(k,a^k)\sim \pi_{\theta}}[Q^{\pi_{\theta}}(s,k,a^k)  \nabla_{\theta_1} \log(\pi_{\theta_1}^h(k|s))],
		\end{align*}
		where $\rho^{\pi_{\theta}}(s) = \sum_{t=0}^{\infty} \gamma^t P(s_t = s|\pi_{\theta})$ denotes the expected state visitation frequencies under the policy $\pi_{\theta}$. 
		
		Similarly, we compute the policy gradient for $\theta_2$:
		\begin{align*}
			& \nabla_{\theta_2} J([\theta_1,\theta_2])  \\
			& = \mathbb{E}_{s_0\sim\mu}[\nabla_{\theta_2} \int_{(k_0,a_0^{k_0})} \pi_{\theta}(k_0,a_0^{k_0}|s_0) Q^{\pi_{\theta}}(s_0,k_0,a_0^{k_0})d(k_0,a_0^{k_0})] \\
			& = \mathbb{E}_{s_0\sim\mu}[\int_{(k_0,a_0^{k_0})}\nabla_{\theta_2}  \pi_{\theta}(k_0,a_0^{k_0}|s_0) Q^{\pi_{\theta}}(s_0,k_0,a_0^{k_0})d(k_0,a_0^{k_0})] \\
			& + \mathbb{E}_{s_0\sim\mu}[\int_{(k_0,a_0^{k_0})} \pi_{\theta}(k_0,a_0^{k_0}|s_0) \nabla_{\theta_2}Q^{\pi_{\theta}}(s_0,k_0,a_0^{k_0})d(k_0,a_0^{k_0}) ] \\
			& = \mathbb{E}_{s_0\sim\mu, (k_0,a_0^{k_0})\sim\pi_{\theta}}[\nabla_{\theta_2} \log(\pi_{\theta_2}^l(a_0^{k_0}|s_0,k_0))Q^{\pi_{\theta}}(s_0,k_0,a_0^{k_0})] \\
			& + \gamma \mathbb{E}_{s_0\sim\mu, (k_0,a_0^{k_0})\sim\pi_{\theta},s_1= f}[\nabla_{\theta_2} V^{\pi_{\theta}}(s_1)] \\
			& = \cdots\cdots \\
			& = \mathbb{E}_{s\sim\rho^{\pi_{\theta}},(k,a^k)\sim \pi_{\theta}}[Q^{\pi_{\theta}}(s,k,a^k)  \nabla_{\theta_2} \log(\pi_{\theta_2}^l(a^k|s,k))],
		\end{align*}
		where $\rho^{\pi_{\theta}}(s) = \sum_{t=0}^{\infty} \gamma^t P(s_t = s|\pi_{\theta})$ denotes the expected state visitation frequencies under the policy $\pi_{\theta}$. 
		
		We complete the proof.

	\end{proof}
	
	\subsection{Proof of Proposition \ref{pro:cuts}}\label{proof:cuts}
	In this part, we provide detailed proof of Proposition \ref{pro:cuts}. 
	
	\begin{proof}
		Let $\mathbf{y} \in \mathbb{Z} \times (\left[ 0,d \right] \cap \mathbb{Z})^n$ satisfying $\mathbf{y} \leq_{L} \mathbf{z}$. The proposition is obvious if $\mathbf{y} = \mathbf{z}$ or for the index value $i = 0 $ (i.e., $y_0 \leq \lfloor z_0 \rfloor$).
		
		So we assume now $\mathbf{y} \neq \mathbf{z}$ and $k \geq 0$ with $k = \text{argmin}_{j} \{ y_j < z_j: j=0,\dots,n \}$. Notice that the definition of $k$ and the assumption $\mathbf{y}\leq_L \mathbf{z}$ imply $y_q = z_q$ for all $q\in \{0,\dots, k-1\}$. Thus the inequality (\ref{eq:cuts}) is trivially satisfied for all $i\in \{0,\dots, k\}$. 
		
		Now we consider the left hand-side of the inequality (\ref{eq:cuts}) for the case $i>k$. For the vector $\mathbf{y}$, we have 
		\begin{align*}
			& y_i + \sum_{j=0}^{i-1} a_{i-j} (y_j-\lceil z_j \rceil) \\
			= & y_i + \sum_{j=k}^{i-1} a_{i-j} (y_j-\lceil z_j \rceil) \\
			\leq & y_i - a_{i-k} + \sum_{j=k+1}^{i-1} a_{i-j} (y_j - \lceil z_j \rceil) \,\, (\text{as}\,\, y_k-\lceil z_k \rceil \leq -1) \\
			\leq & y_i - a_{i-k} + d \sum_{j=k+1}^{i-1} a_{i-j} \,\,(\text{as}\,\,  y_j - \lceil z_j \rceil \leq d) \\
			= & y_i - a_{i-k} + d \sum_{j=1}^{i-k-1} a_{j} \\
			= & y_i - d \,\, (\text{as} \,\, a_{i-k} = d(1+\sum_{j=1}^{i-k-1}a_j)) \\ 
			\leq & 0 \leq \lfloor z_i \rfloor.
		\end{align*}
		Now we complete the proof.
	\end{proof}

	\subsection{Proof of Theorem \ref{theorem:convergence}}\label{proof:cuts_convergence}
	In this part, we provide detailed proof of Theorem \ref{theorem:convergence}, which follows \cite{neto2012simple}. To prove Theorem \ref{theorem:convergence}, we first introduce the following Lemmas. 
	
	\begin{definition}
		Given a vector $\mathbf{y} = (y_0,\dots,y_n) \in \mathbb{R} \times \left[ 0,d \right]^n$, $d \in \mathbb{Z}_{+}$, we define $\alpha(\mathbf{y})$ by the integral vector $\alpha(\mathbf{y})_i=\lfloor y_i \rfloor$ if $i \leq k$ and $\alpha(\mathbf{y})_i = d$ otherwise, for all $i\in \{0,\dots,n\}$, with $k=\text{argmin}_j\{ y_j \not\in \mathbb{Z}: j=0,\dots,n \}$. 
	\end{definition}
	
	\begin{lemma}\label{lemma:boundness}
		(Boundedness) Let $\mathbf{z}^{\star} = (z_0,z_1,\dots,z_n)$ denote an optimal solution of the LP relaxation of the lexicographical optimization problem (\ref{eq:obj_lex}) (i.e., the problem (\ref{eq:obj_lex_lp})), and $\mathbf{x}^{\star} = (x_0,x_1,\dots,x_n)$ denote an optimal integer solution of the lexicographical optimization problem (\ref{eq:obj_lex}). Then the following inequalities hold 
		\begin{align*}
			(d\sum_{i=1}^{n} \min (0,c_i),0,\dots,0) \leq_{L} \mathbf{x}^{\star} \leq_{L} \alpha(\mathbf{z}^{\star})
		\end{align*}
	\end{lemma}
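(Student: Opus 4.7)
The lemma asserts two lexicographic inequalities, and the plan is to prove them independently since they rely on different ingredients: the lower bound is a direct feasibility/box-constraint computation, while the upper bound uses the LP relaxation's lex-optimality combined with the integrality of $\mathbf{x}^{\star}$.

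For the lower bound $(d\sum_{i=1}^{n}\min(0,c_i),0,\ldots,0) \leq_{L} \mathbf{x}^{\star}$, I would start by bounding the first coordinate. For any feasible $\mathbf{x} \in \mathbb{P}\cap \mathbb{Z}^n \subseteq [0,d]^n$, each term satisfies $c_i x_i \geq d\min(0,c_i)$ (trivially when $c_i \geq 0$, and using $x_i \leq d$ when $c_i < 0$). Summing gives $x_0^{\star} = \mathbf{c}^{\top}\mathbf{x}^{\text{opt}} \geq d\sum_{i=1}^{n}\min(0,c_i)$. Because every remaining coordinate of $\mathbf{x}^{\star}$ is also nonnegative (it lies in $[0,d]$), the lexicographic comparison with the vector $(d\sum_i \min(0,c_i),0,\ldots,0)$ resolves at the first coordinate where a strict inequality appears, or else holds with equality throughout.

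For the upper bound $\mathbf{x}^{\star} \leq_{L} \alpha(\mathbf{z}^{\star})$, I would first invoke that $\mathbf{x}^{\star} \in \mathbb{S} \subseteq \mathbb{S}'$ together with the lex-maximality of $\mathbf{z}^{\star}$ over $\mathbb{S}'$ to get $\mathbf{x}^{\star} \leq_{L} \mathbf{z}^{\star}$. If $\mathbf{z}^{\star}$ is already integral, then $\alpha(\mathbf{z}^{\star})=\mathbf{z}^{\star}$ and we are done. Otherwise let $k = \mathrm{argmin}_j\{z_j^{\star}\notin \mathbb{Z}\}$, so $z_i^{\star}\in\mathbb{Z}$ for $i<k$ and $z_k^{\star}\notin\mathbb{Z}$. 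Since $\mathbf{x}^{\star}$ is integral but $z_k^{\star}$ is not, the first index at which $\mathbf{x}^{\star}$ and $\mathbf{z}^{\star}$ differ cannot exceed $k$ (a later index would force $x_k^{\star}=z_k^{\star}$, contradicting integrality of $x_k^{\star}$), and it cannot be less than $k$ either (earlier coordinates of $\mathbf{z}^{\star}$ are integer, and any strict inequality there would already give $\mathbf{x}^{\star} <_L \alpha(\mathbf{z}^{\star})$ via $\alpha(\mathbf{z}^{\star})_i = z_i^{\star}$ for $i<k$). At index $k$, $x_k^{\star} < z_k^{\star}$ and $x_k^{\star} \in \mathbb{Z}$ imply $x_k^{\star} \leq \lfloor z_k^{\star}\rfloor = \alpha(\mathbf{z}^{\star})_k$, and for $i>k$ we use $\alpha(\mathbf{z}^{\star})_i=d \geq x_i^{\star}$ since $x_i^{\star}\in[0,d]$. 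These three regimes together yield $\mathbf{x}^{\star} \leq_{L} \alpha(\mathbf{z}^{\star})$.

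I expect the only mildly delicate step to be the case split at index $k$, particularly the boundary cases $k=0$ (where $z_0^{\star}$ is not confined to $[0,d]$, so the argument uses only integrality/rounding rather than the box bound) and the degenerate case $\mathbf{x}^{\star}=\mathbf{z}^{\star}$. Both collapse immediately once written out, so no further machinery beyond the definitions of $\leq_L$ and $\alpha$, the box constraint, and standard LP-versus-IP optimality is required.
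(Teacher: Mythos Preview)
Your proposal is correct and follows the same approach as the paper: the lower bound comes from the box constraints $x_i\in[0,d]$ applied term-by-term to $\mathbf{c}^\top\mathbf{x}$, and the upper bound comes from $\mathbf{x}^\star\leq_L\mathbf{z}^\star$ (LP relaxation) together with integrality of $\mathbf{x}^\star$ to pass to $\alpha(\mathbf{z}^\star)$. The paper's own proof is a two-line sketch that omits the index-$k$ case analysis you write out; your version simply makes those details explicit, and the minor phrasing ``it cannot be less than $k$ either'' should be read as ``if it is less than $k$ we are already done,'' which your parenthetical makes clear.
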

	\begin{proof}
		If $\mathbf{z}^{\star}$ is integral, the case is trivial. Thus, we focus on the case when $\mathbf{z}^{\star}$ is not integral. The left inequality stems from the bounds on the variables $x_1, \dots, x_n$. The right inequality is due to the fact $\mathbf{z}^{\star}$ solves a relaxation of the problem solved by $\mathbf{x}^{\star}$, thus implying $\mathbf{x}^{\star}\leq_{L}\mathbf{z}^{\star}$, and since $\mathbf{x}^{\star}$ is integral we have $\mathbf{x}^{\star} \leq_{L} \alpha(\mathbf{z}^{\star})$. We complete the proof.
	\end{proof}
	
	\begin{lemma}\label{lemma:monotonically_decrease}
		(Monotonically Decreasing) Let $\mathbf{z}_t^{\star}$ denote an optimal solution of the linear programming relaxation problem that is solved at iteration $t$ in our focused cutting plane algorithm. Then, if $\mathbf{z}_t^{\star}$ and $\mathbf{z}_{t+1}^{\star}$ are not integral vectors, the following inequality holds: $\alpha(\mathbf{z}_{t+1}^{\star}) <_{L} \alpha(\mathbf{z}_t^{\star})$
	\end{lemma}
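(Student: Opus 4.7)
The plan is to leverage Assumption~\ref{assum:hem} to pin down the cut added at iteration $t$ and then track how $\alpha$ changes. Setting $k_t = \operatorname{argmin}_j\{z_{t,j}^{\star} \notin \mathbb{Z}\}$, HEM/HEM++ chooses inequality~(\ref{eq:cuts}) with $i = k_t$; because $z_{t,j}^{\star} \in \mathbb{Z}$ for $j<k_t$, this cut simplifies to
\begin{align*}
x_{k_t} + \sum_{j=0}^{k_t-1} a_{k_t - j}\bigl(x_j - z_{t,j}^{\star}\bigr) \;\le\; \lfloor z_{t,k_t}^{\star} \rfloor .
\end{align*}
Substituting $\mathbf{z}_t^{\star}$ gives $z_{t,k_t}^{\star} \le \lfloor z_{t,k_t}^{\star} \rfloor$, which fails strictly since $z_{t,k_t}^{\star}\notin\mathbb{Z}$. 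The iteration-$(t+1)$ feasible region is thus a strict subset of the iteration-$t$ one, so lex-max optimality of $\mathbf{z}_t^{\star}$ together with feasibility of $\mathbf{z}_{t+1}^{\star}$ there forces $\mathbf{z}_{t+1}^{\star} <_{L} \mathbf{z}_t^{\star}$.

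Let $p$ be the first index where $z_{t+1,p}^{\star} < z_{t,p}^{\star}$. A short check shows $p \le k_t$: otherwise the cut applied at $\mathbf{z}_{t+1}^{\star}$ collapses to $z_{t,k_t}^{\star} \le \lfloor z_{t,k_t}^{\star} \rfloor$, contradicting non-integrality. I would then split into cases. If $p < k_t$, all entries $z_{t,q}^{\star}$ for $q \le p$ are integer, hence $\alpha$ agrees for $q<p$ and at index $p$ one has $\alpha(\mathbf{z}_{t+1}^{\star})_p = \lfloor z_{t+1,p}^{\star}\rfloor \le z_{t+1,p}^{\star} < z_{t,p}^{\star} = \alpha(\mathbf{z}_t^{\star})_p$, with the rightmost strict inequality between integers because $z_{t,p}^{\star}$ is integer. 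If $p = k_t$, substituting the cut with the first $k_t$ coordinates agreeing gives $z_{t+1,k_t}^{\star} \le \lfloor z_{t,k_t}^{\star} \rfloor$, so $\alpha(\mathbf{z}_{t+1}^{\star})_{k_t} \le \alpha(\mathbf{z}_t^{\star})_{k_t}$; if $z_{t+1,k_t}^{\star}\notin\mathbb{Z}$ this is already strict.

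The main obstacle is the residual sub-case $p = k_t$ with $z_{t+1,k_t}^{\star} = \lfloor z_{t,k_t}^{\star} \rfloor \in \mathbb{Z}$, where $\alpha$ ties at index $k_t$ and the decrease must surface later. Here the plan is to walk forward to the first non-integer index $k_{t+1} > k_t$ of $\mathbf{z}_{t+1}^{\star}$. By definition of $\alpha$, $\alpha(\mathbf{z}_t^{\star})_i = d$ for every $i > k_t$, while $\alpha(\mathbf{z}_{t+1}^{\star})_i$ is either an integer component ($\le d$) for $k_t < i < k_{t+1}$ or equals $\lfloor z_{t+1,k_{t+1}}^{\star}\rfloor \le d - 1 < d$ at $i = k_{t+1}$, since $\mathbb{P}\subseteq [0,d]^n$ with $d$ integer and $z_{t+1,k_{t+1}}^{\star}$ non-integer. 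Hence a strict decrease of $\alpha$ occurs no later than index $k_{t+1}$, closing the lexicographic comparison. The trickiest step is this careful piecewise accounting of $\alpha$ to ensure a strict drop surfaces even when the components of $\mathbf{z}_{t+1}^{\star}$ happen to match $\mathbf{z}_t^{\star}$ through index $k_t$.
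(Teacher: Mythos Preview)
Your argument is correct, but it takes a more laborious, case-by-case route than the paper's proof. You first establish only $\mathbf{z}_{t+1}^{\star} <_{L} \mathbf{z}_t^{\star}$ and then chase down where $\alpha$ drops through a three-way case split on $p$ versus $k_t$, with the residual sub-case requiring a further walk out to $k_{t+1}$. The paper instead proves the stronger intermediate fact $\mathbf{z}_{t+1}^{\star} \leq_{L} \alpha(\mathbf{z}_t^{\star})$ directly, via a single set-inclusion argument: any point in the iteration-$t$ feasible region that is lex-bounded by $\mathbf{z}_t^{\star}$ and satisfies the added cut~$(\ref{eq:cuts})_{k_t}$ must already be lex-bounded by $\alpha(\mathbf{z}_t^{\star})$, since violating the latter forces $y_{k_t} > \lfloor z_{t,k_t}^{\star}\rfloor$ and hence violation of the cut. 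Non-integrality of $\mathbf{z}_{t+1}^{\star}$ then upgrades $\leq_L$ to $<_L$, and transitivity with $\alpha(\mathbf{z}_{t+1}^{\star}) <_{L} \mathbf{z}_{t+1}^{\star}$ finishes in one stroke. The payoff of the paper's route is that it avoids all of your coordinate-level bookkeeping and the delicate residual sub-case entirely; the payoff of yours is that it makes explicit at which index the lexicographic drop of $\alpha$ actually occurs.
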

	\begin{proof}
		Since the vectors $\mathbf{z}_t^{\star}$ and $\mathbf{z}_{t+1}^{\star}$ are not integral we have $\alpha(\mathbf{z}_t^{\star}) <_{L} \mathbf{z}_t^{\star}$ and $\alpha(\mathbf{z}_{t+1}^{\star}) <_{L} \mathbf{z}_{t+1}^{\star}$. 
		
		Next we aim to prove $\mathbf{z}_{t+1}^{\star} <_{L} \alpha(\mathbf{z}_t^{\star})$. To prove this, we first prove that the following inclusion holds
		\begin{align*}
			\mathbb{A} = \{\mathbf{x} \in \mathbb{Q}: \mathbf{x} \,\,\text{satisfies}\,\, (\ref{eq:cuts})_k\} \subseteq \mathbb{B} = \{ \mathbf{x} \in \mathbb{Q}: \mathbf{x} \leq_L \alpha(\mathbf{z}_t^{\star}) \},
		\end{align*}
		where $\mathbb{Q} \subseteq \mathbb{R}\times \left[ 0,d \right]^n$, $\mathbb{Q} \subseteq \{ \mathbf{x} \in \mathbb{R}^{n+1}: \mathbf{x} \leq_L \mathbf{z}_t^{\star} \}$, and $k = \text{argmin}_{i}\{z_i \not\in \mathbb{Z}: i=0,\dots,n \} $ ($\mathbf{z}_t^{\star} = (z_0,\dots,z_n)$). 
		
		To prove that, we show that for any vector $\mathbf{y} \in  \mathbb{Q} \setminus  \mathbb{B}$, then $\mathbf{y} \not\in \mathbb{A}$. Note that $\mathbf{y} \in \mathbb{Q}$ but $\alpha(\mathbf{z}_t^{\star}) <_{L} \mathbf{y}$.
		As $\mathbb{Q} \subseteq \{ \mathbf{x} \in \mathbb{R}^{n+1}: \mathbf{x} \leq_L \mathbf{z}_t^{\star} \} $ and $\mathbf{y} \in \mathbb{Q}$, 
		we have $y_i=z_i$ for all $i\in \{ 0,1,\dots,k-1 \}$. So the inequality $\alpha(\mathbf{z}_t^{\star}) <_L \mathbf{y}$ implies $y_k > \lfloor z_k \rfloor$. Then $\mathbf{y}$ violates the inequality (\ref{eq:cuts})$_k$ as $y_i=z_i$ for all $i\in \{0,\dots,k-1\}$. That is, $\mathbf{y} \not\in \mathbb{A}$. Thus, we prove $\mathbb{A} \subseteq \mathbb{B}$.
		
		Now we are ready to prove $\mathbf{z}_{t+1}^{\star} <_{L} \alpha(\mathbf{z}_t^{\star})$. As $\mathbf{z}_{t+1}^{\star}$ satisfies the inequality (\ref{eq:cuts})$_{k}$, then $\mathbf{z}_{t+1}^{\star} \leq_{L} \alpha(\mathbf{z}_t^{\star})$. Due to the fact that $\mathbf{z}_{t+1}^{\star}$ is not integral, then $\mathbf{z}_{t+1}^{\star} <_L \alpha(\mathbf{z}_t^{\star})$. We complete the proof.
	\end{proof}
	
	Now we are ready to prove Theorem \ref{theorem:convergence}. We provide detailed proof as follows.
	\begin{proof}
		Let $\mathbf{x}^{\star}$ denote an optimal integer solution of the lexmax problem (\ref{eq:obj_lex}). From Lemma \ref{lemma:boundness}, $\mathbf{x}^{\star}$ is bounded with $(d\sum_{i=1}^{n} \min (0,c_i),0,\dots,0) \leq_{L} \mathbf{x}^{\star} \leq_{L} \alpha(\mathbf{z}_t^{\star})$, where $\mathbf{z}_t^{\star}$ denotes an optimal solution of the LP relaxation problem solved at iteration $t$. Since the number of integer vectors that are lexicographically between $(d\sum_{i=1}^{n} \min (0,c_i),0,\dots,0)$ and $\alpha(\mathbf{z}_t^{\star})$ is finite, and from Lemma \ref{lemma:monotonically_decrease} we have $\alpha(\mathbf{z}_{t+1}^{\star}) <_{L} \alpha(\mathbf{z}_{t}^{\star})$ whenever $\mathbf{z}_{t+1}^{\star}$ is not integral. Suppose $k = \text{argmin}_{i}\{ \alpha(\mathbf{z}_{t+1}^{\star})_i < \alpha(\mathbf{z}_{t}^{\star})_i: i=0,\dots,n\}$, then $\alpha(\mathbf{z}_{t}^{\star})_k - \alpha(\mathbf{z}_{t+1}^{\star})_k \geq 1$.
		Thus, we get that an integer optimal solution is found in a finite number of iterations. From Lemma \ref{lemma:boundness}, the maximum number of iterations is $\mathcal{O}(d^{n+1}C)$ with $C = 1+\sum_{i=1}^{n}| c_i |$. Now we complete the proof.
	\end{proof}

	\section{Details of the Datasets Used in this Paper}\label{appendix_dataset}
	\subsection{Datasets used in Section \ref{sec:order} in the main text}\label{datasets_motivating}
	Please refer to Appendix D in our previous version in \url{https://arxiv.org/abs/2302.00244}.
	
	\subsection{Huawei's large-scale real-world production planning and order matching problems}\label{appendix_datasets_sc}
	
	The production planning problem aims to find the optimal production planning for thousands of factories according to the daily order demand. The constraints include the production capacity for each production line in each factory, transportation limit, the order rate, etc. The optimization objective is to minimize the production cost and lead time simultaneously. We split the dataset into training and test sets with $80\%$ and $20\%$ of the instances. The average size of the production planning problems is approximately equal to $3500\times 5000=1.75\times10^8$, which are large-scale real-world problems.
	
	\subsection{The datasets used in the main evaluation}\label{appendix_main_datasets}
	
	Please refer to Appendix D in our previous version in \url{https://arxiv.org/abs/2302.00244}.
	
	\subsubsection{Detailed description of the aforementioned datasets}
	In this part, we provide detailed description of the aforementioned datasets. 
	Note that all datasets we use except MIPLIB 2017 are application-specific, i.e., they contain instances from only a single application. We summarize the statistical description of the used datasets in this paper in Table \ref{datasets}. Let $n,m$ denote the average number of variables and constraints in the MILPs. Let $m\times n$ denote the size of the MILPs. We emphasize that the largest size of our used datasets is up to two orders of magnitude larger than that used in previous learning-based cut selection methods \cite{tang_icml20, l2c_lookahead}, which demonstrates the superiority of our proposed HEM. Moreover, we test the inference time of our proposed HEM given the average number of candidate cuts. The results in Table \ref{datasets} show that the computational overhead of the HEM is very low.

	\section{More Results}\label{appendix_results}

	\subsection{More results of comparative evaluation of HEM}\label{appendix_results_more_metrics}
	
	Please refer to Appendix G in our previous version in \url{https://arxiv.org/abs/2302.00244}.
	
	\subsection{More results of comparative evaluation of HEM++}\label{appendix_results_hem++}
	\subsubsection{Evaluation under the multiple rounds setting}
	We compare HEM++ with the baselines under the multiple rounds setting on MIPLIB mixed neos as shown in Table \ref{appendix:more_results_hem++_multi_rounds}. The results show that HEM++ significantly outperforms the baselines, improving the efficiency by up to 29.47\%. This demonstrates that HEM++ is widely applicable to the multiple rounds setting.  
	
	\subsubsection{Evaluation under the one round setting}
	We compare HEM++ with HEM under the one round setting on Maximum Independent Set, MIK, and CORLAT. The results in Table \ref{appendix:more_results_hem++_one_round} show that HEM++ outperforms HEM on MIK (3.1\% improvement in terms of the PD integral), and performs on par with HEM on Maximum Independent Set and CORLAT. This demonstrates the superiority of HEM++ over HEM. 
	
	\begin{table}[t]
		\caption{Evaluation of HEM++ under the multiple rounds setting on MIPLIB mixed neos.}
		\label{appendix:more_results_hem++_multi_rounds}
		\centering
		\resizebox{0.48\textwidth}{!}{
			\begin{tabular}{@{}cccc@{}}
				\toprule
				\toprule
				& \multicolumn{3}{c}{MIPLIB mixed neos} \\ \midrule
				\multirow{2}{*}{Method} & \multirow{2}{*}{Time (s)$\downarrow$} & \multirow{2}{*}{PD integral$\downarrow$} & \multirow{2}{*}{\begin{tabular}[c]{@{}c@{}}Improvement $\uparrow$\\      (PD integral, \%)\end{tabular}} \\
				&  &  &  \\ \midrule
				NoCuts & \textbf{253.65 (80.30)} & 14652.29 (12523.37) & NA \\
				Default & 258.31 (75.24) & 12222.21 (12544.25) & 16.58 \\
				Random & 257.11 (74.61) & 12606.61 (11884.03) & 13.96 \\
				NV & 254.63 (78.67) & 13287.41 (11721.17) & 9.32 \\
				Eff & 263.78 (64.82) & 11345.05 (12506.45) & 22.57 \\ \midrule
				SBP & 258.90 (72.79) & 11983.70 (12724.59) & 18.21 \\
				HEM++ (Ours) & 255.77 (77.37) & \textbf{10334.00 (12537.6)} & \textbf{29.47} \\ \bottomrule
			\end{tabular}
		}
	\end{table}
	
	\begin{table*}[t]
		\caption{Evaluation of HEM++ under the one round setting.}
		\label{appendix:more_results_hem++_one_round}
		\centering
		\resizebox{0.96\textwidth}{!}{
			\begin{tabular}{@{}cccccccccc@{}}
				\toprule
				\toprule
				& \multicolumn{3}{c}{Maximum Independent Set} & \multicolumn{3}{c}{MIK} & \multicolumn{3}{c}{CORLAT} \\ \midrule
				\multirow{2}{*}{Method} & \multirow{2}{*}{Time (s) $\downarrow$} & \multirow{2}{*}{\begin{tabular}[c]{@{}c@{}}Improvement $\uparrow$\\      (\%, Time)\end{tabular}} & \multirow{2}{*}{PD integral$\downarrow$} & \multirow{2}{*}{Time (s)$\downarrow$} & \multirow{2}{*}{PD integral$\downarrow$} & \multirow{2}{*}{\begin{tabular}[c]{@{}c@{}}Improvement $\uparrow$\\      (\%, PD integral)\end{tabular}} & \multirow{2}{*}{Time (s)$\downarrow$} & \multirow{2}{*}{PD integral$\downarrow$} & \multirow{2}{*}{\begin{tabular}[c]{@{}c@{}}Improvement $\uparrow$\\      (\%, PD integral)\end{tabular}} \\
				&  &  &  &  &  &  &  &  &  \\ \cmidrule(r){1-4} \cmidrule(lr){5-7} \cmidrule(l){8-10}
				NoCuts & 8.78 (6.66) & NA & 71.32 (51.74) & 300.01 (0) & 2355.87 (996.08) & NA & 103.31 (128.14) & 2818.41 (5908.31) & NA \\
				Default & 3.89 (5.04) & 55.69 & 29.44 (35.28) & 179.63 (122.36) & 844.41 (924.30) & 64.16 & 75.21 (120.30) & 2412.09 (5892.88) & 14.40 \\ \midrule
				HEM (Ours) & \textbf{1.76 (3.69)} & \textbf{80.00} & \textbf{16.01 (26.21)} & \textbf{176.12 (125.18)} & 785.05 (790.39) & 66.68 & \textbf{58.31 (110.51)} & \textbf{1079.99 (2653.14)} & \textbf{61.68} \\
				HEM++ (Ours) & 1.90 (4.22) & 78.36 & 17.45 (30.83) & 204.90 (111.52) & \textbf{712.00 (777.09)} & \textbf{69.78} & 67.59 (119.51) & 1091.53 (2214.94) & 61.27 \\ \bottomrule
			\end{tabular}
		}
	\end{table*}

	\section{Algorithm Implementation and Experimental Settings}\label{appendix_implementation}
	
	\subsection{Implementation details and hyperparameters }\label{appendix_imple_hyper}
	
	Please refer to Appendix F in our previous version in \url{https://arxiv.org/abs/2302.00244}.
	
	\subsection{Implementation details of HEM++}
	\subsubsection{Details of the self-attention mechanism}\label{appendix_mha}
	Suppose there are $N$ candidate cuts, we denote each cut feature vector by $\textbf{c}_i$, $ i=1,2,3,\dots,N$. We compute the query $\textbf{q}_i\in \mathbb{R}^{d_k}$, key $\textbf{k}_i\in \mathbb{R}^{d_k}$, and value $\textbf{v}_i\in \mathbb{R}^{d_v}$ for each cut by projecting the cut feature: 
	\begin{align*}
		\textbf{q}_i = \textbf{W}^Q\textbf{c}_i, \,\, \textbf{k}_i = \textbf{W}^K \textbf{c}_i, \,\,
		\textbf{v}_i = \textbf{W}^V \textbf{c}_i,
	\end{align*}
	where $d_k$, $d_v$, and $d_c$ denote dimensions, $\textbf{W}^Q \in \mathbb{R}^{d_k\times d_c} $, $\textbf{W}^K \in \mathbb{R}^{d_k\times d_c} $, $\textbf{W}^V \in \mathbb{R}^{d_v\times d_c} $ are learnable parameters. Given the queries, keys, and values, we compute the embeddings by Scaled Dot-Product Attention \cite{transformer}. Specifically, for the query $\textbf{q}_i$ of cut $i$ and the key $\textbf{k}_i$ of cut $j$, we compute the attention weight by
	\begin{align*}
		u_{ij} =    \frac{\textbf{q}_i^{\top}\textbf{k}_j}{\sqrt{d_k}}.   
	\end{align*}
	We denote $\textbf{u}_i = \left[u_{i1},\dots u_{iN}\right]$. Finally, we compute the attention by
	\begin{align*}
		\textbf{c}_i^{\prime} = \sum_{j} \text{Softmax}(\textbf{u}_i)_j \textbf{v}_j,
	\end{align*}
	where $\text{Softmax}(\textbf{u}_i)_j$ denotes the $j$-th element of the softmax vector. 
	
	\subsubsection{Details of the hierarchical proximal policy optimization method}\label{appendix_hppo}
	\noindent\textbf{Estimation of the advantage function} We learn a state-value function $V_{\alpha}(s)$ as a baseline for reducing the variance of the advantage-function estimator. Following \cite{ppo}, we estimate the advantage-function by
	\begin{align*}
		\hat{A}(s,a) = G(s,a) - V_{\alpha}(s),
	\end{align*}
	where $G(s,a)$ denotes the monte-carlo return \cite{rl_sutton}. 
	
	\noindent\textbf{Hyperparameters} We apply the same optimizer and learning rates as that of HEM. For each epoch, we collect 32 samples, and we set the total epochs as 100. For each epoch, HEM performs one gradient update, while HEM++ performs ten gradient updates. Therefore, HEM++ is more sample-efficient than HEM. We set $\epsilon$ as 0.2. 
	
	\subsection{More details of HEM}
	
	Please refer to Appendix F in our previous version in \url{https://arxiv.org/abs/2302.00244}.
	
	\subsection{Implementation details of the baselines}
	\subsubsection{The one round setting}\label{appendix_imple_baselines_one_round}
	
	Please refer to Appendix F in our previous version in \url{https://arxiv.org/abs/2302.00244}.
	
	\subsubsection{The multiple rounds setting}\label{appendix_imple_baselines_multi_rounds}
	For the human-designed baselines, they select a fixed ratio of cuts with high scores at each round until the cut separation procedure terminates. For SBP,  it learns a scoring function to select a fixed ratio of cuts with high scores at each round until the cut separation procedure terminates, and achieves a return feedback for training the scoring function via evolutionary strategies.

	\section{Additional Results}
	
	\subsection{More results of ablation study}\label{appendix_results_ablation_study}
	
	In this section, we provide more results of ablation studies in the main text. 
	
	\subsubsection{In-depth analysis of HEM-ratio and SBP}\label{appendix_analysis_hem_ratio}
	We provide possible reasons for HEM-ratio performing poorer than SBP on several challenging MILP problem benchmarks. 
	Fundamentally, HEM-ratio formulates the cut selection task as a sequence modeling problem, which has two main advantages over SBP. That is, the sequence model can not only capture the underlying order information, but also capture the interaction among cuts. However, training a sequence model is more difficult than training a scoring function, as the sequence model aims to learn a much more complex task. Specifically, the scoring function aims to learn to score each cut, while the sequence model aims to model the joint probability of the selected cuts. The latter is a more challenging learning task. Moreover, we follow the reinforcement learning paradigm instead of supervised learning to train the model, making the training process more unstable. Therefore, the sequence model may suffer from inefficient exploration and be trapped to a local optimum. As a result, HEM-ratio can perform poorer than SBP, especially on challenging MILP problem benchmarks. 
	
	\subsubsection{Contribution of each component}
	Please refer to Appendix G in our previous version in \url{https://arxiv.org/abs/2302.00244}.
	
	\subsubsection{The importance of tackling \textbf{P1-P3} in cut selection}
	Please refer to Appendix G in our previous version in \url{https://arxiv.org/abs/2302.00244}.
	
	\subsection{More visualization results}\label{appendix_results_visualization}
	Please refer to Appendix G in our previous version in \url{https://arxiv.org/abs/2302.00244}.
	
	\subsection{Analysis on structure of selected cuts  }\label{appendix_results_visualization_structure}
	We provide the results in Table \ref{tab:hem_knapsack_cut_structure}.   
	Moreover, we analyze categories of the selected cuts by HEM++ on problems with knapsack constraints as well. The results in Table \ref{tab:hem_knapsack_cut_structure} show that policies learned by HEM++ select 90.95\%  and 99.6\% cover inequalities (cover cuts) on Multiple Knapsack and MIK, respectively. Thus, the results suggest that HEM++ can also well capture the underlying structure of specific structured problems, such as knapsack problems.

	\begin{table*}[t]
		\centering
		\caption{HEM and HEM++ select more than 90\% cover cuts, which is known as an important category of cuts for solving problems with knapsack constraints \cite{lift_cover, flow_cover}.}
		\label{tab:hem_knapsack_cut_structure}
		\resizebox{0.98\textwidth}{!}{
			\begin{tabular}{@{}cccccc@{}}
				\toprule
				\toprule
				Method & Knapsack problems & Percent of selected cover cuts   (\%) & Method & Knapsack problems & Percent of selected cover cuts   (\%) \\ \midrule
				HEM & Multiple Knapsack & 92.03 & HEM++ & Multiple Knapsack & 90.95 \\
				& MIK & 99.5 &  & MIK & 99.6 \\ \bottomrule
			\end{tabular}
		}
	\end{table*}

	\subsection{Evaluation with a time limit of three hours}\label{appendix_results_long_test_time}
	Please refer to Appendix G in our previous version in \url{https://arxiv.org/abs/2302.00244}.
	
	\subsection{Training curves}
	Please refer to Appendix G in our previous version in \url{https://arxiv.org/abs/2302.00244}.
	
	\subsection{Generalize to non-root nodes}
	In this part, we evaluate whether HEM can well generalize to non-root nodes. The results in Tables \ref{evaluation_one_round_non_root} and \ref{evaluation_multiple_rounds_non_root} demonstrate that our learned models outperform the baselines for all nodes (both root and non-root nodes) under the one round setting and multiple rounds setting. Specifically, under the one round setting with non-root cuts, our model improves the Time and Primal-dual gap integral by up to 91.29\% and 29.61\%, respectively. Under the multiple rounds setting with non-root cuts, our model improves the Time and Primal-dual gap integral by up to 97.72\% and 33.02\%, respectively.

	In a branch-and-bound search tree, modern solvers solve a linear programming (LP) relaxation of a Mixed-Integer Linear Program (MILP) at each node. Compared to adding cuts at root nodes, \textbf{adding cuts at both root and non-root nodes would further strengthen the LP realaxations at non-root nodes}, which thus may significantly improve the efficiency of solving MILPs. However, adding cuts at non-root nodes could lead to large models, which can increase the computational burden and present numerical instabilities \cite{implementing_cutting,l2c_lookahead}. Therefore, whether to add cuts at non-root nodes relies on heuristics as well in real-world applications. 
	
	\begin{table*}[t]
		\caption{The average performance of HEM and the baselines for non-root nodes under the one round setting. We obtain the results by deploying the models---which are learned at root nodes---to all nodes (root and non-root nodes). The best performance is marked
			in bold.}
		\label{evaluation_one_round_non_root}
		\centering
		\resizebox{0.96\textwidth}{!}{
			\begin{tabular}{@{}ccccccc@{}}
				\toprule
				\toprule
				One round with non-root nodes & \multicolumn{3}{c}{Easy: Maximum Independent Set} & \multicolumn{3}{c}{Hard: MIPLIB mixed supportcase} \\ \midrule
				\multirow{2}{*}{Method} & \multirow{2}{*}{Time (s)} & \multirow{2}{*}{\begin{tabular}[c]{@{}c@{}}Improvement \\      (Time, \%)\end{tabular}} & \multirow{2}{*}{PD integral} & \multirow{2}{*}{Time (s)} & \multirow{2}{*}{PD integral} & \multirow{2}{*}{\begin{tabular}[c]{@{}c@{}}Improvement \\      (PD integral, \%)\end{tabular}} \\
				&  &  &  &  &  &  \\ \cmidrule(r){1-4} \cmidrule(l){5-7} 
				NoCuts & 8.78 & NA & 71.32 & 170.00 & 9927.96 & NA \\
				Default & 1.47 & 83.25 & 13.95 & 128.89 & 9406.43 & 5.25 \\
				Random & 2.07 & 76.47 & 20.78 & 124.65 & 9116.01 & 8.18 \\
				NV & 3.21 & 63.42 & 29.81 & 134.58 & 8034.05 & 19.08 \\
				Eff & 2.08 & 76.27 & 21.06 & \textbf{124.20} & 9035.90 & 8.99 \\ \midrule
				HEM (Ours) & \textbf{0.76} & \textbf{91.29} & \textbf{9.75} & 135.80 & \textbf{6987.87} & \textbf{29.61} \\ \bottomrule
			\end{tabular}
		}
	\end{table*}
	
	\begin{table*}[t]
		\caption{The average performance of HEM++ and the baselines for non-root nodes under the multiple rounds setting. We obtain the results by deploying the models---which are learned at root nodes---to all nodes (root and non-root nodes).}
		\label{evaluation_multiple_rounds_non_root}
		\centering
		\resizebox{0.96\textwidth}{!}{
			\begin{tabular}{@{}ccccccc@{}}
				\toprule
				\toprule
				Multiple rounds with non-root nodes & \multicolumn{3}{c}{Easy: Maximum Independent Set} & \multicolumn{3}{c}{Hard: MIPLIB mixed supportcase} \\ \midrule
				\multirow{2}{*}{Method} & \multirow{2}{*}{Time (s)} & \multirow{2}{*}{\begin{tabular}[c]{@{}c@{}}Improvement \\      (Time, \%)\end{tabular}} & \multirow{2}{*}{PD integral} & \multirow{2}{*}{Time (s)} & \multirow{2}{*}{PD integral} & \multirow{2}{*}{\begin{tabular}[c]{@{}c@{}}Improvement \\      (PD integral, \%)\end{tabular}} \\
				&  &  &  &  &  &  \\ \cmidrule(r){1-4} \cmidrule(l){5-7} 
				NoCuts & 8.78 & NA & 71.32 & 170.00 & 9927.96 & NA \\
				Default & 0.28 & 96.78 & 6.41 & 123.13 & 8139.53 & 18.01 \\
				Random & 1.04 & 88.11 & 11.00 & 125.03 & 9416.23 & 5.15 \\
				NV & 1.14 & 87.05 & 12.05 & 107.78 & 7551.19 & 23.94 \\
				Eff & 0.27 & 96.97 & 6.39 & \textbf{113.25} & 7246.07 & 27.01 \\ \midrule
				HEM++ (Ours) & \textbf{0.20} & \textbf{97.72} & \textbf{5.86} & 118.12 & \textbf{6649.86} & \textbf{33.02} \\ \bottomrule
			\end{tabular}
		}
	\end{table*}
	
	\begin{table}[t]
		\caption{Comparison between Default+ and Default on CORLAT.}
		\label{appendix_default+}
		\centering
		\resizebox{0.48\textwidth}{!}{
			\begin{tabular}{@{}cccc@{}}
				\toprule
				\toprule
				& \multicolumn{3}{c}{CORLAT} \\ \midrule
				Method & Time (s) $\downarrow$ & PD integral $\downarrow$& Improvement $\uparrow$(\%, PD integral) \\ \midrule
				NoCuts & 103.31 (128.14) & 2818.41 (5908.31) & NA \\
				Default & 75.21 (120.30) & 2412.09 (5892.88) & 14.42 \\
				Default+ & \textbf{73.67 (119.14)} & \textbf{2315.50 (5765.29)} & \textbf{17.84} \\ \bottomrule
			\end{tabular}
		}
	\end{table}
	
	\subsection{Generalize to other solvers}
	
	Our proposed methodology can well generalize to other solvers as shown in Table \ref{generalization_other_solvers}. The results demonstrate that HEM significantly outperforms the default cut selection method in the CBC solver \cite{cbc} in terms of the primal-dual gap (up to 18.67\% improvement). 
	
	We do not use commercial solvers, such as Gurobi \cite{gurobi_solver} and \cite{cplex}, as the backend solver, since they do not provide interfaces for users to customize cut selection methods. 
	
	As the CBC cannot generate any cut on the dataset Maximum Independent Set, we conduct the experiments on the dataset Load balancing. 
	
	We use the primal-dual gap metric rather than the primal-dual gap integral due to the reasons as follows. (1) The primal-dual gap is a well-recognized metric for evaluating the solvers as well. (2) Unlike the SCIP, the CBC does not provide interfaces for users to acquire the primal-dual gap integral. Due to limited time, we do not implement the interface. 
	
	\begin{table*}[t]
		\caption{The performance of HEM and the default strategy used in the CBC solver \cite{cbc}.}
		\label{generalization_other_solvers}
		\centering
		\resizebox{0.96\textwidth}{!}{
			\begin{tabular}{ccccc}
				\toprule
				\toprule
				& \multicolumn{2}{c}{MIPLIB mixed supportcase} & \multicolumn{2}{c}{Load balancing} \\
				Method & Primal dual gap (PD gap) & Improvement (\%,PD gap) & Primal dual gap (PD gap) & Improvement (\%,PD gap) \\ \cmidrule(r){1-3} \cmidrule(l){4-5}
				CBC Default & 227.93 & NA & 0.98 & NA \\
				HEM (ours) & \textbf{185.38} & \textbf{18.67} & \textbf{0.91} & \textbf{7.14} \\ \bottomrule
			\end{tabular}
		}
	\end{table*}
	
	\subsection{A detailed computational analysis of our model and the baselines' model}
	
	We provide a detailed computational analysis of our proposed model and the baselines' model in Table \ref{computational_analysis}. We summarize the conclusions in the following. (1) The training time of HEM and SBP is comparable, as most of their training time is spent on interacting with solvers to collect training samples. (2) The model parameters of HEM (212749) and SBP with LSTM (172289) are comparable. (3) The inference time of HEM is longer than that of SBP and SBP with LSTM. Nevertheless, the inference time of HEM (0.34s on average) is very low compared to the solving time (162s on average), especially on hard datasets. (4) The training of HEM is stable (please see Figure \ref{fig:training_curves}).  
	
	\subsection{Measuring the primal and dual integrals}
	
	We have conducted experiments to measure the Primal Integral (PI) and Dual Integral (DI) as shown in Table \ref{primalintegral_dualintegral}. The results show that the performance improvement of HEM is from both the primal and dual sides.
	
	Specifically, we use the optimal objective values as the reference values to measure the PI/DI. However, it is time-consuming to obtain optimal solutions for all instances. We conduct the experiments on three easy datasets due to limited time. Interestingly, the results demonstrate that proper cut selection policies can improve both the PI and DI. Moreover, the results show that HEM achieves more improvement from the primal side than the dual side on Set Cover and Maximum Independent Set, while HEM achieves more improvement from the dual side on Multiple Knapsack.
	
	\begin{table*}[t]
		\caption{A detailed computational analysis of our model and the baselines' model.}
		\label{computational_analysis}
		\centering
		\resizebox{0.96\textwidth}{!}{
			\begin{tabular}{@{}cccccccc@{}}
				\toprule
				\toprule
				& \multicolumn{7}{c}{Maximum Independent Set} \\ \midrule
				& \multicolumn{2}{c}{Model characteristics} & \multicolumn{2}{c}{Training} &  & \multicolumn{2}{c}{Testing} \\ 
				Model & Model paramters & GPU Memory (MB) & Training time (h) & Training samples & Avg Cuts & Inference time (s) & Performance/Time (s) \\ \midrule
				SBP & 18433 & 2.07 & 3.03 & \multirow{3}{*}{3200} & \multirow{3}{*}{57} & 0.0003 & 2.43 \\
				SBP+LSTM encoder & 172289 & 2.66 & 2.83 &  &  & 0.031 & 2.87 \\
				HEM (Ours) & 212749 & 2.81 & 2.54 &  &  & 0.11 & 1.76 \\ \bottomrule 
			\end{tabular}
		}
		\newline
		\resizebox{0.96\textwidth}{!}{
			\begin{tabular}{@{}cccccccc@{}}
				\toprule
				\multicolumn{1}{l}{} & \multicolumn{7}{c}{MIPLIB mixed supportcase} \\ \midrule
				& \multicolumn{2}{c}{Model characteristics} & \multicolumn{2}{c}{Training} &  & \multicolumn{2}{c}{Testing} \\
				Model & Model paramters & GPU Memory & Training time (h) & Training samples & Avg Cuts & Inference time & Performance/PD integral \\ \midrule
				SBP & 18433 & 2.07 & 13.81 & \multirow{3}{*}{3200} & \multirow{3}{*}{173} & 0.0004 & 7408.65 \\
				SBP+LSTM encoder & 172289 & 2.66 & 13.24 &  &  & 0.033 & 7726.54 \\
				HEM (Ours) & 212749 & 2.81 & 13.89 &  &  & 0.34 & 6874.8 \\ \bottomrule 
			\end{tabular}
		}
	\end{table*}
	
	\begin{table*}[t]
		\caption{Evaluate the performance of HEM and Default in terms of the primal and dual integrals.}
		\label{primalintegral_dualintegral}
		\centering
		\resizebox{0.96\textwidth}{!}{
			\begin{tabular}{@{}ccccc@{}}
				\toprule
				\toprule
				& \multicolumn{4}{c}{Set Cover} \\ \midrule
				Method & PrimalIntegral (PI) & Improvement (\%, PI) & DualIntegral (DI) & Improvement (\%, DI) \\ \midrule
				NoCuts & 52.34 & NA & 59.85 & NA \\
				Default & 45.02 & 13.99 & 49.95 & 16.54 \\
				HEM (ours) & \textbf{28.84} & \textbf{44.90} & \textbf{35.95} & \textbf{39.93} \\ \bottomrule
			\end{tabular}
		}
		\resizebox{0.96\textwidth}{!}{
			\begin{tabular}{@{}ccccc@{}}
				\toprule
				& \multicolumn{4}{c}{Maximum Independent Set} \\ \midrule
				Method & PrimalIntegral (PI) & Improvement (\%, PI) & DualIntegral (DI) & Improvement (\%, DI) \\ \midrule
				NoCuts & 66.83 & NA & 16.24 & NA \\
				Default & 32.19 & 51.83 & 11.92 & 26.60 \\
				HEM (ours) & \textbf{18.33} & \textbf{72.57} & \textbf{7.97} & \textbf{50.92} \\ \bottomrule
			\end{tabular}
		}
		\resizebox{0.96\textwidth}{!}{
			\begin{tabular}{@{}ccccc@{}}
				\toprule
				& \multicolumn{4}{c}{Multiple Knapsack} \\ \midrule
				Method & PrimalIntegral (PI) & Improvement (\%, PI) & DualIntegral (DI) & Improvement (\%, DI) \\ \midrule
				NoCuts & 39.39 & NA & 41.62 & NA \\
				Default & 25.40 & 35.52 & 25.70 & 38.25 \\
				HEM (ours) & \textbf{19.24} & \textbf{51.16} & \textbf{18.90} & \textbf{54.59} \\ \bottomrule
			\end{tabular}
		}
	\end{table*}
	
	\subsection{More results of evaluation of Default+}\label{appendix_order_rules_results}
	We provide the statistical results on three hard datasets in Table \ref{hem_separator_category_percent}. 
	Furthermore, we compare Defalut+ with Default on CORLAT as shown in Table \ref{appendix_default+}. The results show that Default+ outperforms Default (3.42\% improvement in terms of the PD integral). 
	
	\subsection{Industrial applicability of HEM}
	
	To thoroughly demonstrate that our proposed HEM is applicable to very large-scale datasets, we conduct the following experiments. Note that we use the SCIP \cite{scip8}, the state-of-the-art open-source solver, as our backend solver. \textbf{First}, we evaluate HEM on the entire MIPLIB dataset to demonstrate its applicability to very large-scale datasets. \textbf{Second}, we have deployed our HEM to a commercial solver, and further demonstrate its industrial applicability on security-constrained unit commitment (SCUC) problems from the electrical grid. \textbf{Third}, we analyze the model inference efficiency of HEM on large-scale datasets compared with simple scoring models to demonstrate that HEM is efficient on large-scale datasets as well.
	
	\textbf{Applicability of HEM to the Entire MIPLIB Dataset} Given the entire MIPLIB dataset, we first follow Table \ref{tab:Criteria} to filter instances on which cut selection policies do not have a significant impact on the solving efficiency, which follows \cite{adaptive_cut_selection}. As a result, we evaluate HEM on a large MIPLIB dataset with 645 instances in total. 
	
	\textbf{First}, for validation and model selection, we randomly choose sixteen instances from the entire MIPLIB dataset as a validation set. The results in Table \ref{hem_miplib_whole_testing} show that HEM outperforms the baselines in terms of the average solving time and primal-dual integral on the validation set, demonstrating the applicability of HEM to large-scale datasets. \textbf{Second}, we select the best model on the validation set, and compare the model with the Default cut selection strategy on all instances from the entire MIPLIB dataset. The results in Table \ref{hem_miplib_whole_all} (Left) show that HEM outperforms Default in terms of the average solving time and performs on par with Default in terms of the average primal-dual integral, further demonstrating the applicability of HEM to very large-scale datasets. \textbf{Finally}, we also report detailed performance of HEM and Default on each instance from the entire MIPLIB dataset in Tables \ref{hem_miplib_whole_all_each_instance1}-\ref{hem_miplib_whole_all_each_instance11}. The results show that HEM struggles to consistently outperform Default on all instances from MIPLIB, \textbf{as these instances are rather heterogeneous, posing significant challenges for machine learning methods \cite{adaptive_cut_selection}}. Nevertheless, the results show that HEM can significantly outperform Default on many very large-scale instances, such as the instances neos-780889 (size $\approx1.35\times 10^{10}$, 23.71\% improvement, see Table \ref{hem_miplib_whole_all_each_instance11}) and map14860-20 (size $\approx 5.41\times 10^{10}$, 18.25\% improvement, see Table \ref{hem_miplib_whole_all_each_instance9}). The results further demonstrate the applicability of HEM to very large-scale datasets. 
	
	\textbf{Industrial Applicability to a Commercial Solver} In addition to the open source solver SCIP \cite{scip8}, we have deployed our HEM into a commercial solver (i.e., OptVerse \cite{optverse} from Huawei) to further demonstrate the industrial applicability of HEM. (We do not use alternative commercial solvers, such as Gurobi \cite{gurobi} and Cplex \cite{cplex}, as they do not provide interfaces for users to customize cut selection methods based on their generated cuts. In contrast, the developers of OptVerse provide the interface for us to customize cut selection methods based on the generated cuts by OptVerse, which applies to the cut selection setting we focus on when using SCIP.)
	To demonstrate the strong ability of HEM to enhance commercial solvers, we compare HEM-OptVerse with Default-OptVerse on real SCUC problems from the electrical grid as shown in Table \ref{hem_miplib_whole_all} (Right). The results show that HEM-OptVerse significantly outperforms Default-OptVerse, improving the solving time by 8.99\%. 
	
	\textbf{Efficiency Analysis of HEM} We provided a detailed computational analysis of HEM compared with simple scoring models in Appendix F.10. As shown in Table \ref{computational_analysis}, although the inference time of HEM (0.34 seconds on average) considerably exceeds that of scoring models (0.0004 seconds on average), the inference time for HEM remains rather low compared to the total solving time (162 seconds on average). Especially on large-scale datasets, the total solving time may significantly increase, whereas the model inference time of HEM exhibits only marginal changes. The major reason is that the model inference time of HEM is contingent on the number of input candidate cuts, while the number of candidate cuts remains consistent across varying dataset scales, typically ranging from 20 to 5,000. Therefore, the HEM model consistently maintains its inference efficiency when handling very large-scale datasets.

	\subsection{More Motivating Results}
	We present more motivating results on the order of selected cuts in Table \ref{hem_order}. The results show that adding the same selected cuts in different order significantly impacts the solver performance \textit{by up to orders of magnitude}. 
	
	\begin{table}[t]
		\caption{More motivating results on the order of selected cuts. The results show that adding the
			same selected cuts in different order significantly impacts the solver
			performance by up to orders of magnitude.}
		\label{hem_order}
		\centering
		\resizebox{0.48\textwidth}{!}{
			\begin{tabular}{@{}cccc@{}}
				\toprule
				\toprule
				& Nodes &  &  \\ \midrule
				Instance name & Max & Min & Ratio \\ \midrule
				lectsched-5-obj & 34993.00 & 2566.00 & 13.60 \\
				berlin & 15534.00 & 1.00 & 15500.00 \\ \bottomrule
			\end{tabular}
		}
	\end{table}
	
	\begin{table*}[t]
		\caption{The cutting plane categories of selected cuts by our learned policies.}
		\label{hem_separator_category_percent}
		\centering
		\resizebox{0.96\textwidth}{!}{
			\begin{tabular}{@{}ccccccc@{}}
				\toprule
				\toprule
				& \multicolumn{2}{c}{Anonymous ($n=37881,\,\,m=49603$)} & \multicolumn{2}{c}{Production planning ($n=3582.25,\,\,m=5040.42$)} & \multicolumn{2}{c}{Order matching ($n=67839.68,\,\,m=31364.84$)} \\ \midrule
				Ranking & Category & Percent (\%) & Category & Percent (\%) & Category & Percent (\%) \\ \cmidrule(r){1-3} \cmidrule(lr){4-5} \cmidrule(lr){6-7}
				The first & Implbd & 50 & Implbd & 60 & Intobj & 90 \\
				The second & Knapsack Cover & 45 & Gomory & 50 & Implbd & 30 \\
				The third & Gomory & 35 & Clique & 50 & Knapsack cover & 50 \\ \bottomrule
			\end{tabular}
		}
	\end{table*}
	
	\begin{table*}[t]
		\caption{Evaluation of HEM on the validation set of the entire MIPLIB dataset.}
		\label{hem_miplib_whole_testing}
		\centering
		\resizebox{0.9\textwidth}{!}{
			\begin{tabular}{@{}ccccc@{}}
				\toprule
				\toprule
				& \multicolumn{3}{c}{MIPLIB Entire Validation Set} &  \\ \midrule
				\multirow{2}{*}{Method} & \multirow{2}{*}{Time (s)} & \multirow{2}{*}{\begin{tabular}[c]{@{}c@{}}Improvement \\      (\%, Time)\end{tabular}} & \multirow{2}{*}{PD integral} & \multirow{2}{*}{\begin{tabular}[c]{@{}c@{}}Improvement \\      (\%, PD integral)\end{tabular}} \\
				&  &  &  &  \\ \midrule
				NoCuts & 265.31 (86.24) & NA & 11566.19 (12644.29) & NA \\
				Default & 253.79 (97.92) & 4.34 & 10809.30 (12217.92) & 6.54 \\
				Random & 263.64 (85.60) & 0.63 & 11174.70 (12462) & 3.38 \\
				NV & 263.10 (89.31) & 0.83 & 11260.19 (12482.14) & 2.65 \\
				Eff & 263.03 (88.06) & 0.86 & 11143.30 (12479.65) & 3.66 \\ \midrule
				HEM (Ours) & \textbf{249.16 (105.36)} & \textbf{6.09} & \textbf{10172.41 (12251.67)} & \textbf{12.05} \\ \bottomrule
			\end{tabular}
		}
	\end{table*}
	
	\begin{table*}[t]
		\caption{(\textbf{Left}) Evaluation of HEM on all instances from the entire MIPLIB dataset. (\textbf{Right}) Evaluation of HEM in the OptVerse commercial solver on the SCUC problems.}
		\label{hem_miplib_whole_all}
		\centering
		\resizebox{0.98\textwidth}{!}{

		}
	\end{table*}

\end{document}